\newtheorem{remark}{Remark}
\newtheorem{theorem}{Theorem}[section]
\newtheorem{assume}[theorem]{\TE{A}}
\newtheorem{lemma}[theorem]{\TE{Lemma}}
\newtheorem{corollary}[theorem]{\TE{Corollary}}
\algnewcommand{\LineComment}[1]{\State \(\triangleright\) #1}
\newcommand*{\colorboxed}{}
\def\colorboxed#1#{%
  \colorboxedAux{#1}%
}
\newcommand*{\colorboxedAux}[3]{%
  \begingroup
    \colorlet{cb@saved}{.}%
    \color#1{#2}%
    \boxed{%
      \color{cb@saved}%
      #3%
    }%
  \endgroup
}
\def\Eqref Eq:#1:{\eqref{eq:#1}}
\newcommand{\E}[1]{\mathbf{#1}}
\newcommand{\TE}[1]{\textbf{#1}}
\newcommand{\FPPN}[2]{\nabla_{#2}{#1}}
\newcommand{\FPPSN}[2]{\nabla_{#2}^*{#1}}
\newcommand{\FPPTN}[2]{\nabla_{#2}^2{#1}}
\newcommand{\FPPTTTN}[2]{\nabla_{#2}^3{#1}}
\newcommand{\FPPTTT}[2]{\frac{\partial^3{#1}}{\partial{#2}^3}}
\newcommand{\THREE}[3]{\left(\setlength{\arraycolsep}{1pt}\begin{array}{ccc}{#1}, & {#2}, & {#3}\end{array}\right)}
\newcommand{\argmin}[1]{\underset{#1}{\E{argmin}}\;}
\newcommand{\argminP}[1]{\E{argmin}\;}
\newcommand{\argmaxP}[1]{\E{argmax}\;}
\newcommand{\ST}{\E{s.t.}\;}
\newcommand{\BECAUSE}[1]{\;\text{(#1)}}
\newif\ifarxiv
\newcommand\fs@ruled@notop{\def\@fs@cfont{\bfseries}\let\@fs@capt\floatc@ruled
  \def\@fs@pre{}%
  \def\@fs@post{\kern2pt\hrule\relax}%
  \def\@fs@mid{\kern2pt\hrule\kern2pt}%
  \let\@fs@iftopcapt\iftrue}
\renewcommand\fst@algorithm{\fs@ruled@notop}
\title{\large\bf Implicit Integration for Articulated Bodies with Contact\\
via the Nonconvex Maximal Dissipation Principle \vspace{-15px}}
\author{Zherong Pan and Kris Hauser$^\dagger$  \\
\vspace{-200px}
\thanks{$^\dagger$ Zherong Pan and Kris Hauser are with the Department of Computer Science, University of Illinois at Urbana-Champaign. {\tt\small \{zherong,kkhauser\}@illinois.edu}}}
\begin{document}
\maketitle
\thispagestyle{empty}
\pagestyle{empty}

\newcommand{\KRIS}[1]{\textcolor{red}{#1}}
\begin{abstract}
We present non-convex maximal dissipation principle (NMDP), a time integration scheme for articulated bodies with simultaneous contacts. Our scheme resolves contact forces via the maximal dissipation principle (MDP). Prior MDP solvers compute contact forces via convex programming by assuming linearized dynamics integrated using the forward multistep scheme. Instead, we consider the coupled system of nonlinear Newton-Euler dynamics and MDP, which is time-integrated using the backward integration scheme. We show that the coupled system of equations can be solved efficiently using the projected gradient method with guaranteed convergence. We evaluate our method by predicting several locomotion trajectories for a quadruped robot. The results show that our NMDP scheme has several desirable properties including: (1) generalization to novel contact models; (2) superior stability under large timestep sizes; (3) consistent trajectory generation under varying timestep sizes.
\end{abstract}
\section{\label{sec:intro}Introduction}
Articulated body simulation is an indispensable component of robot motion planning, optimal control, and reinforcement learning (RL). Their governing dynamic equations, i.e. the recursive Newton-Euler's equation \cite{featherstone2014rigid}, and discretization schemes have been studied for decades. However, efficient and accurate contact handling is still a challenging problem studied extensively by recent works \cite{preclik2018maximum,8794337}. To predict robot motions under simultaneous Coulomb frictional contacts, the two most widely-used formulations are the linear-complementary problem (LCP) \cite{stewart2000rigid} and the maximal dissipation principle (MDP) \cite{drumwright2010modeling}. From a computational perspective, LCP incurs an NP-hard problem while MDP identifies contact forces with the solution of a cheap-to-compute convex program. As reported by \cite{erez2015simulation}, MDP-based contact handler achieves the best stability and computational efficiency. Moreover, MDP can encode novel contact models as arbitrary convex wrench spaces, which enables learning contact models from data~\cite{Zhou2018,8794337}. 

The stability region of MDP is shown to be up to $\sim$10\,ms according to \cite{tassa2012synthesis}. Beyond the stability region, the predicted trajectory can either blow-up or drift significantly from the ground truth. Such small stability region not only increases computational cost but also induces problems of vanishing or exploding gradients \cite{liu2020differential}. In contact-implicit trajectory optimization \cite{mordatch2012discovery,8968194}, for example, the problem sizes grow linearly with the number of timesteps and the cost of a Newton-type method grows superlinearly as a result.
\begin{figure}[ht]
\centering
\includegraphics[width=.45\textwidth]{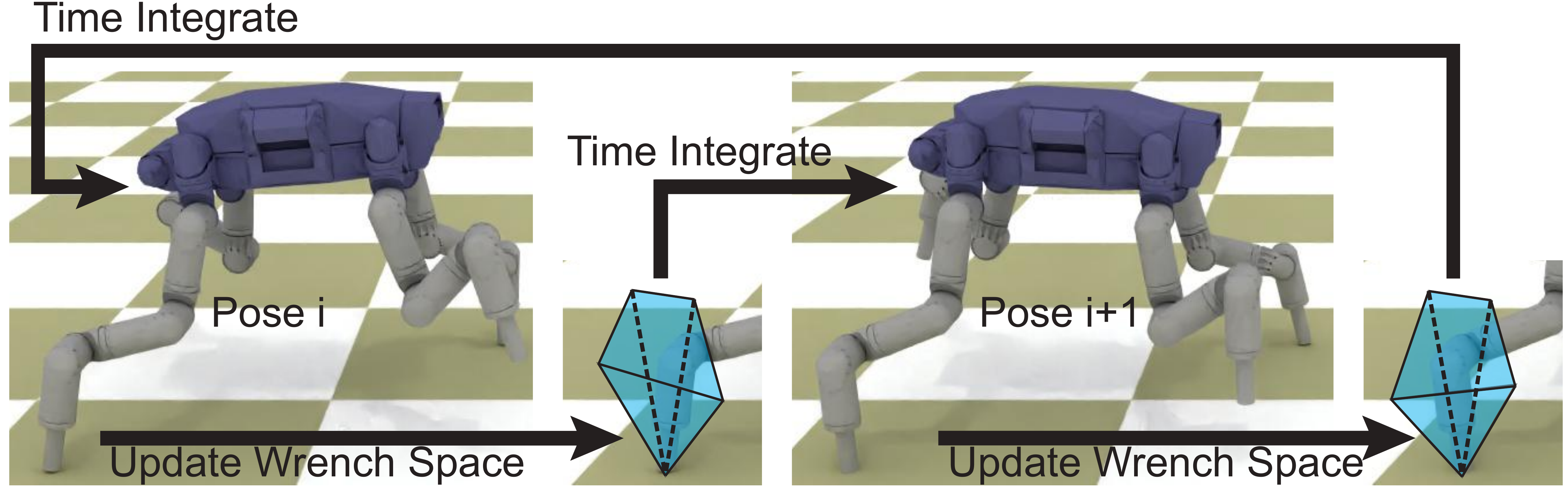}
\caption{\label{fig:illusb} \small{We consider the three components (robot pose $\theta$ update, convex wrench space $v_x(\theta)$ update, and contact force $w$ update) as a coupled system of nonlinear equations, which is solved using a novel projected gradient method with guaranteed convergence.}}
\vspace{-10px}
\end{figure}

We present  a non-convex MDP (NMDP) integrator that: (1) is stable under large timestep sizes; (2) generates consistent contact forces under the MDP formulation; and (3) generalizes to position-dependent contact models. Prior MDP solvers rely on linearized dynamic systems, so that the kinetic energy becomes a quadratic function of the contact forces which can be solved as a convex QP. However, the truncation error of linearization can grow arbitrarily with larger timestep sizes. Our NMDP solver eliminates the truncation error by formulating the {\em nonlinear} recursive Newton-Euler's equation and the wrench space as a function of the robot pose as a coupled system of nonlinear equations time-integrated using the backward-Euler scheme (\prettyref{fig:illusb}). The method can inherently account for novel contact models by using the convex shapes as feasible constraints in MDP, with nonlinear dependence on robot pose. To solve this coupled system we propose using the projected gradient method (PGM). We prove that PGM converges under sufficiently small timesteps and show that it empirically converges under large timesteps. An adaptive inner time-integration scheme guarantees that NMDP solves any (primary) timestep size in finite time.

We evaluate our method by predicting walking and jumping trajectories for the JPL Robosimian and Spider quadruped robot. The results show that NMDP has superior stability under larger timesteps as compared with conventional MDP solvers. In addition, the predicted walking speed and jumping height are more consistent under various timesteps.
\section{\label{sec:Related}Related Work}
We review related work on articulated body dynamics, contact handling, and generalized contact models.

\TE{Articulated Body Dynamics:} Three classes of time discretization schemes have been independently developed for articulated bodies' equation-of-motion. First, variational integrators (VI) \cite{lee2020linear,marsden2001discrete} discretize the Lagrangian function and then derive the discrete Euler-Lagrange equation. VI preserves momentum and energy symmetry under large timestep sizes. Second, linear multi-step integrators \cite{butcher2008numerical} discretize the equivalent Newton-Euler's equation in the configuration space. These integrators are very efficient to evaluate using the Articulated Body Algorithm (ABA) \cite{featherstone2014rigid}. Third, high-order collocated, position-based integrators use an equivalent form of the Newton-Euler's equation known as position-based dynamics (PBD) \cite{egt.20171034,10.1007/978-3-030-44051-0_39}, where the main difference is that the discretization is performed in the Euclidean space. Position-based integrators are stable under large timestep sizes but they do not preserve symmetry.

\TE{Contact Handling:} Sequential contact models \cite{guendelman2003nonconvex,mirtich1996impulse} have a significantly limited stability region due to the stiffness of contact forces. Models allowing simultaneous collisions and contacts have larger stability region, especially using implicit time stepper \cite{stewart2000rigid,1249734}. LCP \cite{stewart2000rigid,anitescu2006optimization} and MDP \cite{drumwright2010modeling,kaufman2008staggered} are the most popular implicit formulations for simultaneous frictional contacts. Solving complementary conditions due to LCP is NP-hard and can sometimes be infeasible \cite{painleve2012lois}. The MDP relaxes the complementary constraints by allowing any contact forces in the frictional cone to be feasible. However, the stability region of time-stepper is still limited by the linearization of dynamic systems using either LCP or MDP. In \cite{1249734,10.1007/978-3-030-44051-0_39}, dynamics with frictionless contacts are reformulated as an optimization and linearization can be avoided, but these results cannot be extended to frictional cases. In \cite{8593817}, the frictional force is modified and then reformulated as an optimization, but the modified variant cannot handle static-sliding frictional mode switches. In \cite{10.1145/3272127.3275011}, a modified BDF2 scheme is proposed to achieve second-order accuracy in time-integration under frictional contacts, but linearization is still needed. Unlike these methods, we analyze the feasibility of contact handling without linearization for both normal and frictional forces.

\TE{Generalized Contact Models:} Although the Coulomb frictional model is sufficient for most scenarios involving only rigid objects, other contact models are needed for several reasons. To model the unknown continuous force distribution between a planar object and a flat ground, a general convex wrench space is learned from real-world data in \cite{Zhou2018}. In other works \cite{8794337,maladen2011biophysically,6943251,hu2018moving}, articulated robots are walking on or swimming in deformable environments with granular or compliant materials. Hu \cite{hu2018moving} simulated both the granular material and the robot using fine-grained finite element method, which is more than $1000\times$ slower than a standalone articulated body simulation. In \cite{maladen2011biophysically,6943251}, the Coulomb frictional model is replaced with analytic and empirical force models. Although these models are cheap to compute, they cannot capture the static-sliding frictional mode switches. Zhu \cite{8794337} used a similar approach as \cite{Zhou2018} and learned a robot-pose-dependent contact wrench space. Static-sliding frictional mode switches can thereby be modeled using MDP solver with the learned wrench space as constraints. By extending MDP, our NMDP solver can handle any generalized force models \cite{Zhou2018,8794337} in the form of robot-pose-dependent contact wrench spaces.
\section{\label{sec:problem}Articulated Body Dynamics}
In this section, we briefly review two prior formulations of articulated body dynamics and their corresponding discretization schemes: the recursive Newton-Euler's equation and position-based dynamics. Both schemes can be extended to derive NMDP solvers.

\subsection{Recursive Newton-Euler's Equation}
The continuous Newton-Euler's equation under generalized coordinates takes the following form:
\begin{align}
\label{eq:NE}
0=H(\theta)\ddot{\theta}+C(\theta,\dot\theta)-\sum_{x\in\mathcal{C}}\FPPN{X}{\theta}(x,\theta)^Tf_x-\tau.
\end{align}
Here $H$ is the generalized mass matrix, $\theta$ is the robot's configuration vector, $C(\theta,\dot\theta)$ is the Coriolis and Centrifugal force, $X(x,\theta)$ is the forward kinematic function bringing a point $x$ from the robot's local coordinates to the global coordinates, $\mathcal{C}$ is a set of points in contact with the environment, $f_x$ is the external force on $x$ in world coordinates, and finally $\tau$ is the joint torque. 
\begin{remark}
We assume contacts are realized by external forces $f_x\in\mathbbm{R}^3$. More general contact models such as \cite{8794337} require external wrenches $f_x\in\mathbbm{R}^6$. In this case, we can replace $\FPPN{X}{\theta}(x,\theta)$ with the Jacobian matrix in $\mathbbm{R}^{6\times|\theta|}$ and all the following analysis applies.
\end{remark}

To discretize a dynamic system, the linear multistep method uses finite difference approximations for all variables. We illustrate this method with first-order finite difference schemes and higher-order schemes can be applied following a similar reasoning. We introduce two variables $\theta_{-}$ and $\theta_{--}$. We assume that $\theta_{-}$ is the robot configuration at current time instance, $\theta_{--}$ is the robot configuration $\Delta t$ seconds before ($\Delta t$ is the timestep size), and $\theta$ is the to-be-predicted robot configuration after $\alpha\Delta t$ seconds. Here $\alpha\in(0,1]$ is an additional parameter for timestep size control and we use subscripts to denote functions that are dependent on $\alpha$. Since NMDP solver requires sufficiently small timestep sizes to converge, we use $\alpha$ to ensure this condition holds. Under these definitions, we can approximate:
\begin{align}
\label{eq:FD}
\dot{\theta}\triangleq\frac{\theta-\theta_{-}}{\alpha\Delta t}\quad
\dot{\theta}_{-}\triangleq\frac{\theta_{-}-\theta_{--}}{\Delta t}\quad
\ddot{\theta}\triangleq\frac{\dot{\theta}-\dot{\theta}_{-}}{\Delta t}.
\end{align}
Plugging these approximations into the Newton-Euler's equation, the forward-Euler integrator takes the following form:
\begin{align}
\label{eq:LNE}
0=H(\theta_{-})\ddot{\theta}-C(\theta_{-},\dot{\theta}_{-})-\sum_{x\in\mathcal{C}}\FPPN{X}{\theta}(x,\theta_{-})^Tf_x-\tau,
\end{align}
which is a linearized dynamic system in $\theta,f_x,\tau$. Instead, the backward-Euler integrator evaluates $H,C$ at time-level $\alpha\Delta t$ instead of current time instance, resulting in a nonlinear system of \eqref{eq:NE} and \eqref{eq:FD}. This system is not guaranteed to have a solution, unless a small enough timestep size is used.

\subsection{Position-Based Dynamics}
PBD reformulates the governing equation-of-motion as:
\begin{align}
\label{eq:PBD}
0=\FPPN{E_\alpha}{\theta}(\theta,f_x),
\end{align}
where we define:
\begin{small}
\begin{equation*}
\begin{aligned}
&E_\alpha(\theta,f)\triangleq I_\alpha(\theta)-\sum_{x\in\mathcal{C}}X(x,\theta)^Tf_x-\theta^T\tau   \\
&I_\alpha(\theta)\triangleq\int_{x\in\mathcal{R}}
\frac{\rho\|X(x,\theta)-(1+\alpha) X(x,\theta_{-})+\alpha X(x,\theta_{--})\|^2}
{2\alpha\Delta t^2}dx,
\end{aligned}
\end{equation*}
\end{small}%
and the integral in $I_\alpha$ is taken over the entire robot $\mathcal{R}$. If we assume that $\theta$ is a continuous trajectory $\theta(t)$ and $\theta=\theta(t+\alpha\Delta t),\theta_{-}=\theta(t),\theta_{--}=\theta(t-\Delta t)$, it has been shown in \cite{10.1007/978-3-030-44051-0_39} that \eqref{eq:PBD} will converge to \eqref{eq:NE} as $\Delta t\to0$. This integral can be evaluated analytically in a similar way as deriving the generalized mass matrix. By comparison with \eqref{eq:NE}+\eqref{eq:FD}, \eqref{eq:PBD} is always solvable under arbitrarily large timestep sizes because it is integrable. In other words, solving for $\theta$ is equivalent to the following optimization:
\begin{align*}
\argmin{\theta}E_\alpha(\theta,f_x).
\end{align*}
Note that we assume $f_x$ is a constant in our derivation for the integrability of $E_\alpha$ (i.e. PBD dynamics can be written as $0=\FPPN{E_\alpha}{\theta}(\theta,f_x)$ for some $E_\alpha$). More generally, PBD can still take an integrable form when the external forces are conservative. In scenarios with dissipative force models such as Coulomb frictional forces, both integrability and PBD's feasibility guarantee are lost, just like Newton-Euler's equation. In this work, we propose an algorithm that solves the system of nonlinearity equations with dissipative force models with guaranteed solvability.
\section{Nonconvex MDP}
Our main idea is to combine backward time-integration and frictional contact force computation. In an MDP solver, the force at each contact point $f_x$ belongs to a convex feasible space. We assume that the feasible space is a polytope with a set of vertices denoted as $v_x^j$ with $j=1,\cdots,V_x$. Here $V_x$ is the number of vertices used to model the polytope at contact point $x$. We assume that all the vertices $v_x^j$ are assembled into a matrix $v_x=\THREE{v_x^1}{\cdots}{v_x^{V_x}}$ so feasible $f_x$ is:
\begin{align}
\label{eq:FORCE}
f_x\in\left\{v_xw_x\middle|w_x\succeq0,\E{1}^Tw_x\leq1\right\},
\end{align}
where $w_x$ is the weights of convex combination and $\E{1}$ is an all-one vector. 
\begin{remark}
The assumption of feasible contact force being a polytope is essential for our convergence proof. Under this assumption, we will extensively use the property that $f_x$ has a bilinear form of $v_xw_x$, where $w_x$ is bounded and $v_x$ is sufficiently smooth.
\end{remark}

\subsection{NMDP Formulation}
When modeling an inelastic rigid contact, $v_x$ is set to the vertices of the linearized frictional cone if $X(x,\theta)$ is in contact or penetrating the environment, and $v_x$ is set to zero otherwise. However, the switch between the in-contact and off-contact state is non-differentiable which is undesirable in applications such as differential dynamic programming \cite{tassa2012synthesis} and trajectory optimization \cite{mordatch2012discovery}. Therefore, we assume that $v_x$ is a robot-pose-dependent, differentiable function $v_x(\theta)$. This formulation is compatible with the recently proposed learning-based granular wrench space model \cite{8794337} and can potentially generalize to other contact models. To determine the weights $w_x$, MDP solves an optimization that minimizes the kinetic energy at time instance $\alpha\Delta t$. Conventional MDP solver uses the linearized dynamic system \prettyref{eq:LNE} and discretizes $v_x$ at $\theta_{-}$, resulting in a QP problem. Instead, our NMDP scheme uses the backward-Euler integrator \prettyref{eq:NE} and discretizes $v_x$ at $\theta$. As a result, we need to solve the following nonlinear constrained optimization:
\begin{equation}
\begin{aligned}
\label{eq:NMDP_NE}
\argmin{\theta,w}&K(\theta)\quad\ST0=G_\alpha(\theta,w)  \\
&K(\theta)\triangleq\frac{1}{2}\dot{\theta}^TH(\theta)\dot{\theta}   \\
&G_\alpha(\theta,w)\triangleq H(\theta)\ddot{\theta}+C(\theta,\dot\theta)-  \\
&\sum_{x\in\mathcal{C}}\FPPN{X}{\theta}(x,\theta)^Tv_x(\theta)w_x-\tau,
\end{aligned}
\end{equation}
where we assume the use of recursive Newton-Euler's equation and $w$ is a concatenation of all $w_x$. In the rest of the paper, we propose two algorithms to solve \eqref{eq:NMDP_NE} and analyze their convergence. 

\subsection{NMDP Solver}
Since \eqref{eq:NMDP_NE} is a general nonlinear constrained optimization, it can be solved using general-purpose optimizers such as the interior point method \cite{mehrotra1992implementation}. However, these methods are not guaranteed to converge to a first-order stationary point due to infeasibility. Instead, we consider two variants of the projected gradient method (PGM), which we prove to converge to a first order stationary point. PGM starts from a feasible initial guess and updates a search direction of $w$ by solving:
\begin{align}
\label{eq:QP}
&\argmin{\Delta\theta,\Delta w}K(\theta+\Delta\theta)\quad
\ST G_\alpha(\theta+\Delta\theta,w+\Delta w)=0 \\
&K(\theta+\Delta\theta)\triangleq K(\theta)+
\FPPN{K}{\theta}^T\Delta\theta+
\frac{1}{2}\Delta\theta\FPPTN{K}{\theta}\Delta\theta\nonumber  \\
&G_\alpha(\theta+\Delta\theta,w+\Delta w)\triangleq
\FPPN{G_\alpha}{\theta}\Delta\theta+
\FPPN{G}{w}\Delta w.\nonumber
\end{align}
If $\FPPN{G_\alpha}{\theta}$ is non-singular, then \eqref{eq:QP} is equivalent to the following QP:
\begin{align}
\label{eq:NMDPL}
\argmin{\Delta w}&
-\FPPN{K}{\theta}^T
\FPPN{G_\alpha}{\theta}^{-1}\FPPN{G}{w}\Delta w+\|w\|^2/\gamma   \\
&\frac{1}{2}\Delta w^T\FPPN{G}{w}^T\FPPN{G_\alpha}{\theta}^{-T}
\FPPTN{K}{\theta}\FPPN{G_\alpha}{\theta}^{-1}\FPPN{G}{w}\Delta w\nonumber   \\
\ST& (w+\Delta w)\succeq0,\mathbbm{1}^T(w+\Delta w)\leq\E{1},\nonumber
\end{align}
where we use $\gamma$ to facilitate line search. The matrix $\mathbbm{1}$ is a concatenation of constraints that $w_x$ sums to less than one on each contact point $x$. After solving for a new $w\gets w+\Delta w$, we update $\theta$ by projecting it to the $G_\alpha(\theta,w)=0$ manifold using the following recursion:
\begin{align}
\label{eq:PROJ}
\theta\gets\theta-\FPPN{G_\alpha}{\theta}^{-1}G_\alpha(\theta,w).
\end{align}
Note that we have only used the first-order derivatives of $G_\alpha$ in \eqref{eq:NMDPL} so the PGM has linear convergence speed at best. Our second version of PGM differs in that we ignore all gradients of the function $v_x$, i.e. zeroth-order update for $v_x$. This requirement is inspired by the recent work \cite{8794337} where the contact wrench space is learned from real-world data. In this case, computing derivatives of $v_x$ involves costly back-propagation through a learning model, e.g. neural networks, sublevel sets of high-order polynomials \cite{Zhou2018}, or radial basis functions \cite{8794337}. Mathematically, the derivatives of $v_x$ only occurs in $\FPPN{G_\alpha}{\theta}$ and we denote its zeroth-order, inexact variant as:
\begin{align*}
\FPPN{\bar{G}_\alpha}{\theta}(\theta,w)\triangleq \FPPN{G_\alpha}{\theta}(\theta,w)+
\sum_{x\in\mathcal{C}}\FPPN{X}{\theta}(x,\theta)^T\FPPN{v_x}{\theta}(\theta)w_x.
\end{align*}
Using $\FPPN{\bar{G}_\alpha}{\theta}$, we derive the following, inexact counterpart of QP (\prettyref{eq:NMDPL}):
\begin{align}
\label{eq:ZONMDPL}
\argmin{\Delta w}&
-\FPPN{K}{\theta}^T
\FPPN{\bar{G}_\alpha}{\theta}^{-1}\FPPN{G}{w}\Delta w+\|w\|^2/\gamma   \\
&\frac{1}{2}\Delta w^T\FPPN{G}{w}^T\FPPN{\bar{G}_\alpha}{\theta}^{-T}
\FPPTN{K}{\theta}\FPPN{\bar{G}_\alpha}{\theta}^{-1}\FPPN{\bar{G}_\alpha}{w}\Delta w\nonumber   \\
\ST& (w+\Delta w)\succeq0,\mathbbm{1}^T(w+\Delta w)\leq\E{1},\nonumber
\end{align}
and the following, inexact counterpart of manifold projection (\prettyref{eq:PROJ}):
\begin{align}
\label{eq:ZOPROJ}
\theta\gets\theta-\FPPN{\bar{G}_\alpha}{\theta}^{-1}G_\alpha(\theta,w).
\end{align}
The pipeline of both first- and zeroth-order PGM is outlined in \prettyref{alg:PGM}.
\vspace{-5px}
\begin{algorithm}[ht]
\caption{\label{alg:PGM} (First- / Zeroth-) Order PGM($\alpha,\Delta t,\theta_{-},\theta_{--}$)}
\begin{small}
\begin{algorithmic}[1]
\State $w^0\gets0,\theta^0\gets\theta_{-},\gamma^0\gets1,\eta>1$
\While{$\|G_\alpha(\theta^0,w^0)\|\neq0$}
\State Compute \eqref{eq:PROJ} or \eqref{eq:ZOPROJ} ($\theta=\theta^0,w=w^0$)
\EndWhile
\For{$k=1,\cdots$}
\State Solve \eqref{eq:NMDPL} or \eqref{eq:ZONMDPL} ($\theta=\theta^{k-1},w=w^{k-1}$) for $w^k$
\State $\theta^k\gets\theta^{k-1}$
\While{$\|G_\alpha(\theta^k,w^k)\|\neq0$}
\State Compute \eqref{eq:PROJ} or \eqref{eq:ZOPROJ} ($\theta=\theta^k,w=w^k$)
\EndWhile
\If{$K(\theta^k)>K(\theta^{k-1})$}
\State $\gamma\gets\eta\gamma,\theta^k\gets\theta^{k-1},w^k\gets w^{k-1}$
\Else
\State $\gamma\gets\gamma/\eta$
\If{$\|\theta^k-\theta^{k-1}\|_\infty<\epsilon$}
\State Return $\theta^k,w^k$
\EndIf
\EndIf
\EndFor
\end{algorithmic}
\end{small}
\end{algorithm}
\vspace{-10px}
\section{Convergence Analysis}
\begin{figure*}[th]
\centering
\vspace{-10px}
\scalebox{.5}{
\begin{tabular}{cc}
\includegraphics[width=0.99\textwidth]{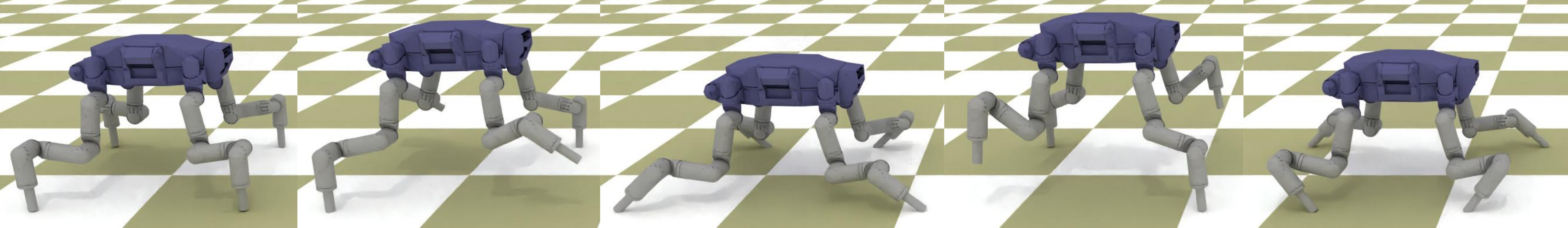}&
\includegraphics[width=0.99\textwidth]{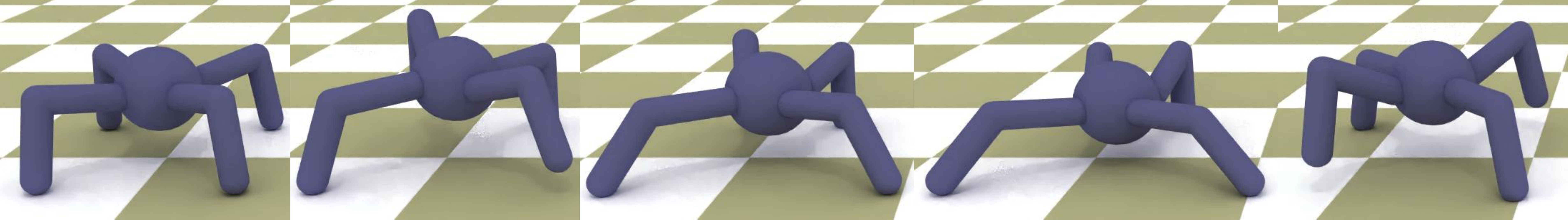}\\
\includegraphics[width=0.99\textwidth]{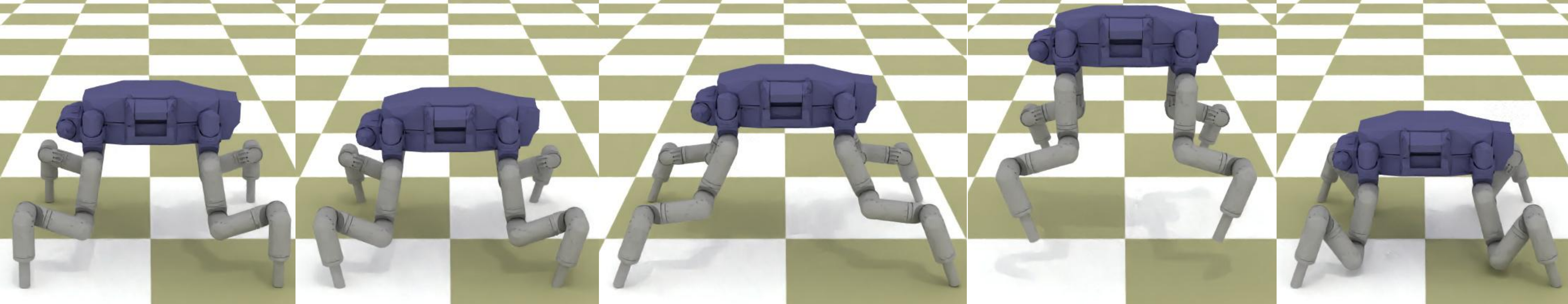}&
\includegraphics[width=0.99\textwidth]{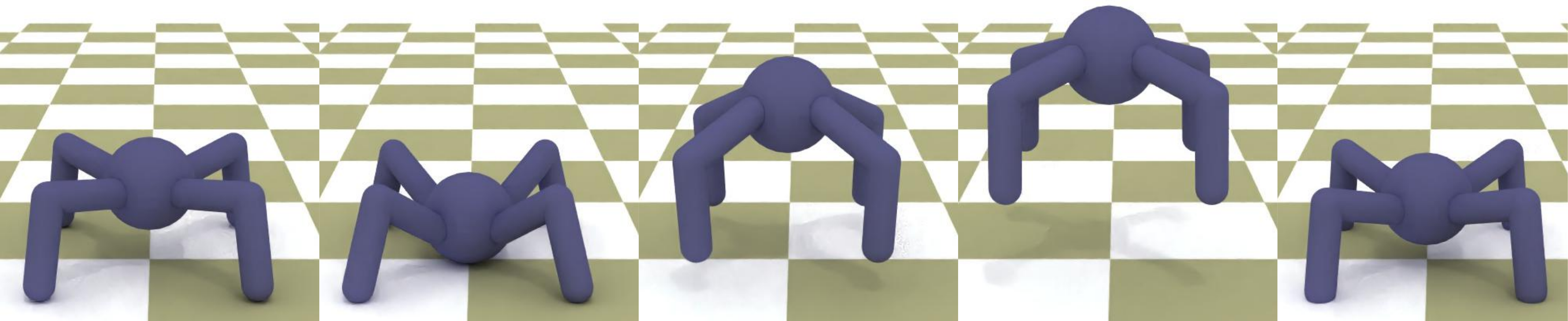}\\
\end{tabular}}
\caption{\label{fig:results} \small{The robots' walking (top) and jumping (bottom) trajectories tracked using the stable PD controller and simulated using our NMDP solver, where we use $\Delta t=0.05$s.}}
\vspace{-15px}
\end{figure*}
We analyze the convergence of \prettyref{alg:PGM} in both first- and zeroth-order cases. PGM cannot proceed if $\FPPN{G_\alpha}{\theta}$ is rank-deficient and does not have an inverse. In addition, the manifold projection substeps in PGM can diverge without using line-search strategies. Finally, the outer-loop of the zeroth-order PGM can fail to converge by using an inexact gradient. We take the following three assumptions to show that first-order PGM and zeroth-order manifold projection are well-defined and convergent:
\begin{assume}
\label{ass:XSmooth}
$X\in\mathcal{C}^\infty$.
\end{assume}
\begin{assume}
\label{ass:SigmaMin}
$\sigma_{min}\left[\int_{x\in\mathcal{R}}\frac{\rho}{\Delta t^2}
\FPPN{X}{\theta}^T\FPPN{X}{\theta}dx\right](\theta_{-})\geq\sigma_X>0$.
\end{assume}
\begin{assume}
\label{ass:VSmooth}
$\FPPTTT{v_x}{\theta}\in\mathcal{C}^0$.
\end{assume}
To show that zeroth-order PGM is also convergent, we need an additional assumption:
\begin{assume}
\label{ass:GwFullrank}
$\FPPN{G}{w}$ has full row rank.
\end{assume}
\begin{remark}
\prettyref{ass:XSmooth} and \prettyref{ass:SigmaMin} can be satisfied by choosing appropriate parameterizations of robot joints. When all the hinge joints are parameterized using Euler angles, then \prettyref{ass:XSmooth} is satsified. \prettyref{ass:SigmaMin} requires that the kinetic energy $K(\theta)$ is strictly convex at $\theta_{-}$. In other words, for any infinitesimal perturbation $\delta\theta$, there must be some points $x\in\mathcal{R}$ undergoing infinitesimal movements $\delta x$ with non-vanishing $\lim_{\delta\theta\to0}\delta x/\delta\theta$. This assumption can only be violated when the robot is suffering from Gimbal lock of Euler angles, which can be easily resolved by moving the singular point away from the current configuration.
\end{remark}
\begin{remark}
\prettyref{ass:VSmooth} indicates that NMDP can only handle smooth contact force models. Stiff and penetration-free contacts between two rigid objects cannot be handled by our method. However, smooth force models are essential for gradient-based motion planning and control \cite{tassa2012synthesis,mordatch2012discovery,10.1145/3309486.3340246}. In addition, stiff contacts can be approximated by smooth contacts. For example, a linearized frictional cone is a polytope with vertices being $v_x^i=n+t^i\mu$, where $n$ is the contact normal and $t^i$ is a direction on the tangential plane. We modify this definition to satisfy \prettyref{ass:VSmooth} by setting $v_x^i=d(X(x,\theta))^3(n+t^i\mu)$, where $d(X(x,\theta))$ is the penetration depth of the contact point $x$. 
This force model can be made arbitrarily stiff by scaling $v_x^i$ with a big constant.
\end{remark}
\begin{remark}
\prettyref{ass:GwFullrank} is a major limitation of the zeroth-order PGM. Note that $v_x=0$ and thus $\FPPN{G}{w}$ has a zero rank when a contact point $x$ does not penetrate the environment. However, once the penetration depth becomes non-zero, then $v_x\neq0$ and $\FPPN{G}{w}$ can have a full row rank. \prettyref{ass:GwFullrank} disallows such jumps in the rank of $\FPPN{G}{w}$. A simple workaround is to slightly modify the contact model by allowing small, non-zero contact forces even when robot is not in contact with the environment. One method to satisfy \prettyref{ass:GwFullrank} is to set $v_x^i=(d(X(x,\theta))^3+\zeta)(n+t^i\mu)$, where $\zeta$ is a small positive constant. 
In our implementation, we ignore \prettyref{ass:GwFullrank} and have never observed divergence behavior due to the violation of this assumption.
\end{remark}
Under the above assumptions, our main results are:
\begin{theorem}[First-Order PGM Convergence]
\label{Thm:PGM}
Assuming \prettyref{ass:XSmooth}, \prettyref{ass:SigmaMin}, \prettyref{ass:VSmooth}, there exists $\alpha_4>0$, such that for all $\alpha\leq\alpha_4$, the first order \prettyref{alg:PGM} will generate a monotonically decreasing sequence of $\{K(\theta^k)\}$ where each $\theta^k$ satisfies $G(\theta^k,w^k)=0$.
\end{theorem}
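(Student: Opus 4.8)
The plan is to read \prettyref{alg:PGM} as a feasible-direction (reduced-SQP / projected-Newton) method for the reduced problem $\fmin{w}\hat{K}(w)$ over the simplex $\{w\succeq0,\mathbbm{1}^Tw\leq\E{1}\}$, where $\hat{K}(w)\triangleq K(\theta(w))$ and $\theta(w)$ is the locally unique root of $G_\alpha(\cdot,w)=0$ near $\theta_{-}$. The three assertions of the theorem — that $\FPPN{G_\alpha}{\theta}$ is invertible so the QP \eqref{eq:NMDPL} and projection \eqref{eq:PROJ} are well-defined, that the inner loop drives each $\theta^k$ exactly onto $G_\alpha(\theta^k,w^k)=0$, and that $\{K(\theta^k)\}$ decreases monotonically — are established in that order, with $\alpha_4$ taken as the minimum of the timestep thresholds produced by each step.

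First I would establish invertibility and uniform bounds by exploiting the stiff scaling of $G_\alpha$ in $\alpha$. Writing $\ddot\theta=(\theta-\theta_{-})/(\alpha\Delta t^2)-\dot\theta_{-}/\Delta t$, the only term of $\FPPN{G_\alpha}{\theta}$ that blows up as $\alpha\to0$ is $H(\theta)/(\alpha\Delta t^2)$, so $\alpha\Delta t^2\FPPN{G_\alpha}{\theta}\to H(\theta)$ uniformly on compacta, while by \prettyref{ass:XSmooth} and \prettyref{ass:VSmooth} all remaining terms (the $C$ term, $\nabla H$ contracted with the bounded $\ddot\theta$, and the contact term, bounded since $w$ lies in the simplex and $v_x$ is smooth) are $O(1)$. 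Because \prettyref{ass:SigmaMin} gives $H(\theta_{-})=\int_{\mathcal{R}}\rho\FPPN{X}{\theta}^T\FPPN{X}{\theta}dx\succeq\sigma_X\Delta t^2 I$, continuity extends positive-definiteness of $H$ to a neighborhood of $\theta_{-}$, and for small $\alpha$ the leading term dominates, giving $\|\FPPN{G_\alpha}{\theta}^{-1}\|=O(\alpha)$ there. A coercivity estimate confines the iterates: since $K(\theta)=\tfrac{1}{2(\alpha\Delta t)^2}(\theta-\theta_{-})^TH(\theta)(\theta-\theta_{-})$ is locally strongly convex, the sublevel set $\{K\leq K(\theta^0)\}$ has radius $O(\alpha\Delta t)$ about $\theta_{-}$, so every iterate stays where these bounds are uniform.

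Next I would prove convergence of the inner projection \eqref{eq:PROJ} by the Newton–Kantorovich theorem. Since $G_\alpha$ is \emph{linear} in $w$, the warm-started residual is exact, $G_\alpha(\theta^{k-1},w^k)=\FPPN{G}{w}(\theta^{k-1})\Delta w$ with $\Delta w=w^k-w^{k-1}$ bounded on the simplex, so the first Newton step has size $O(\alpha)$. With the Lipschitz modulus of $\FPPN{G_\alpha}{\theta}$ equal to $O(1/\alpha)$ (it inherits the $1/(\alpha\Delta t^2)$ factor, \prettyref{ass:VSmooth} and \prettyref{ass:XSmooth} supplying the needed smoothness of the contact part) and $\|\FPPN{G_\alpha}{\theta}^{-1}\|=O(\alpha)$, the Kantorovich quantity is $O(\alpha)\cdot O(1/\alpha)\cdot O(\alpha)=O(\alpha)<1/2$ for small $\alpha$; hence Newton converges quadratically to a point with $G_\alpha(\theta^k,w^k)=0$, which is exactly the feasibility claim. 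Monotone decrease then follows from viewing \eqref{eq:NMDPL} as a trust-region model of $\hat K$: its linear part is the reduced gradient $-\FPPN{K}{\theta}^T\FPPN{G_\alpha}{\theta}^{-1}\FPPN{G}{w}$ obtained by implicit differentiation of $G_\alpha=0$, and its quadratic part is a Gauss–Newton model $\tfrac12\Delta\theta^T\FPPTN{K}{\theta}\Delta\theta$ (with $\Delta\theta=-\FPPN{G_\alpha}{\theta}^{-1}\FPPN{G}{w}\Delta w$) of the reduced Hessian, positive semidefinite by local convexity; so the minimizer $\Delta w$ is a feasible descent direction for $\hat K$ unless $w$ is stationary. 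Because $\theta(w)$ is smooth by the implicit function theorem under Steps~1–2, the line search — which via $\gamma$ shrinks the admissible step $\|\Delta w\|$ on every rejected iteration — produces, after finitely many shrinkages, a step small enough that the predicted decrease dominates the higher-order remainder, yielding $K(\theta^k)\leq K(\theta^{k-1})$; at a stationary point $\Delta w=0$ and the test $\|\theta^k-\theta^{k-1}\|_\infty<\epsilon$ triggers.

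The hard part will be the monotone-decrease step: reconciling the linearized QP direction with the outcome of the \emph{nonlinear} projection and proving the line search cannot cycle. One must show that reducing $\|\Delta w\|$ through $\gamma$ both keeps the projected $\theta^k$ inside the uniform-bound neighborhood and makes the true decrease of $w\mapsto K(\theta(w))$ match the model decrease to leading order, i.e. that a genuine sufficient-decrease (Armijo-type) bound holds along the QP direction. A secondary technical difficulty is keeping every constant ($\|\FPPN{G_\alpha}{\theta}^{-1}\|$, the Jacobian's Lipschitz modulus, and the reduced-Hessian bound) uniform over the iterate neighborhood rather than only at $\theta_{-}$, which is precisely what forces $\alpha_4$ to be the minimum of the thresholds collected from the preceding steps.
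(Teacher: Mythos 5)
Your overall architecture mirrors the paper's: reduce to $\hat K(w)=K(\theta(w))$ over the simplex via the implicit function theorem, prove the inner projection \prettyref{eq:PROJ} converges so each $\theta^k$ lands exactly on $G_\alpha(\theta^k,w^k)=0$, then obtain monotone decrease from the optimality conditions of \prettyref{eq:NMDPL} together with the $\gamma$ line search (this last step is nearly verbatim the paper's closing argument). Your middle step, however, swaps in a genuinely different tool: the paper proves inner convergence through the Lyapunov function $V_\alpha=\|G_\alpha\|^2$ — strong convexity with modulus $O(1/\alpha^2)$, an explicit $3/4$ contraction per Newton step, and a ball-invariance induction across $w$-updates (\prettyref{Thm:DConv}) — whereas you invoke Newton--Kantorovich, with the nice observation that linearity of $G_\alpha$ in $w$ makes the warm-start residual exactly $\FPPN{G}{w}\Delta w$. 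Under the right structural setup that substitution would be legitimate and more economical.

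The genuine gap is in your scaling estimates, precisely where your setup departs from the paper's. You work with the Newton--Euler form and claim that only $H(\theta)/(\alpha\Delta t^2)$ blows up in $\FPPN{G_\alpha}{\theta}$, the Coriolis term being $O(1)$. This ignores the chain rule through $\dot\theta(\theta)=(\theta-\theta_-)/(\alpha\Delta t)$: since $C$ is quadratic in $\dot\theta$, differentiating $C(\theta,\dot\theta(\theta))$ in $\theta$ produces $\frac{1}{\alpha\Delta t}\FPP{C}{\dot\theta}$, which is $O(1/\alpha)$ — the \emph{same} order as the mass term — so in fact $\alpha\Delta t^2\FPPN{G_\alpha}{\theta}\to H+\Delta t\FPP{C}{\dot\theta}$, a nonsymmetric matrix whose invertibility is not implied by \prettyref{ass:SigmaMin} (and $\Delta t$ is fixed and large in this paper, so the perturbation cannot be made small). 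Worse, $\FPPTN{G_\alpha}{\theta}$ contains $\frac{1}{(\alpha\Delta t)^2}\FPPT{C}{\dot\theta}=O(1/\alpha^2)$, so the Jacobian's Lipschitz modulus you feed into Kantorovich is $O(1/\alpha^2)$, not $O(1/\alpha)$; the Kantorovich quantity is then $O(\alpha)\cdot O(1/\alpha^2)\cdot O(\alpha)=O(1)$ and does \emph{not} become favorable as $\alpha\to0$, so the inner-convergence step fails as written. The paper avoids exactly this trap by carrying out the entire analysis on the decomposition $G_\alpha=\frac{1}{\alpha}A(\theta)+B(\theta,w)$ (the position-based structure), where the stiff part $A$ is purely positional, $\FPPN{A}{\theta}(\theta_-)$ is precisely the matrix that \prettyref{ass:SigmaMin} bounds below, and every velocity-like history term and contact term sits in the $\alpha$-independent $B$; under that structure both its Lyapunov route and your Kantorovich route deliver the needed $O(\alpha)$ inverse and $O(1/\alpha)$ Lipschitz constants. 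Your proof can be repaired by restating it for that $A/B$ structure, but as written for Newton--Euler the foundational scaling claim — and everything downstream of it — is incorrect.
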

\begin{theorem}[Zeroth-Order PGM Convergence]
\label{Thm:ZOPGM}
Assuming \prettyref{ass:XSmooth}, \prettyref{ass:SigmaMin}, \prettyref{ass:VSmooth}, \prettyref{ass:GwFullrank}, there exists $\alpha_5>0$, such that for all $\alpha\leq\alpha_5$, the zeroth-order \prettyref{alg:PGM} will generate a monotonically decreasing sequence of $\{K(\theta^k)\}$ where each $\theta^k$ satisfies $G(\theta^k,w^k)=0$.
\end{theorem}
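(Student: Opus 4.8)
The plan is to establish the zeroth-order result by mirroring the proof of \prettyref{Thm:PGM} and controlling the single new ingredient --- the replacement of the exact Jacobian $\FPPN{G_\alpha}{\theta}$ by the inexact $\FPPN{\bar{G}_\alpha}{\theta}$ --- as an $O(\alpha)$ perturbation. First I would record the scaling that drives every estimate. Writing the inertial block through the PBD energy gives $\FPPN{G_\alpha}{\theta}=\frac{1}{\alpha}M(\theta)+O(1)$, where $M(\theta)\triangleq\int_{\mathcal{R}}\frac{\rho}{\Delta t^2}\FPPN{X}{\theta}^T\FPPN{X}{\theta}dx$ is symmetric positive definite near $\theta_{-}$ by \prettyref{ass:SigmaMin} and smooth by \prettyref{ass:XSmooth}; the $O(1)$ remainder is bounded because the residual $X(x,\theta)-(1+\alpha)X(x,\theta_{-})+\alpha X(x,\theta_{--})=O(\alpha)$ along iterates that stay within $O(\alpha)$ of $\theta_{-}$. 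By definition $\FPPN{\bar{G}_\alpha}{\theta}-\FPPN{G_\alpha}{\theta}=\sum_{x}\FPPN{X}{\theta}^T\FPPN{v_x}{\theta}w_x$, which is bounded independently of $\alpha$ using \prettyref{ass:XSmooth}, \prettyref{ass:VSmooth}, and $\|w_x\|\le1$ from the simplex in \prettyref{eq:FORCE}. Hence both Jacobians share the dominant $\frac{1}{\alpha}M$ term, both are nonsingular with inverse of size $O(\alpha)$ for small $\alpha$, and the master estimate $\|\FPPN{\bar{G}_\alpha}{\theta}^{-1}(\FPPN{\bar{G}_\alpha}{\theta}-\FPPN{G_\alpha}{\theta})\|=O(\alpha)$ holds.

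Second, feasibility of each accepted iterate is already in hand: the inner \TE{while} loops run the zeroth-order projection \prettyref{eq:ZOPROJ}, an inexact-Newton iteration for $G_\alpha(\theta,w)=0$ whose iteration matrix is $I-\FPPN{\bar{G}_\alpha}{\theta}^{-1}\FPPN{G_\alpha}{\theta}=\FPPN{\bar{G}_\alpha}{\theta}^{-1}(\FPPN{\bar{G}_\alpha}{\theta}-\FPPN{G_\alpha}{\theta})$, of norm $O(\alpha)<1$ by the master estimate. Its convergence under \prettyref{ass:XSmooth}--\prettyref{ass:VSmooth} is precisely the zeroth-order manifold-projection guarantee already established before \prettyref{Thm:PGM}, so every returned $\theta^k$ satisfies $G_\alpha(\theta^k,w^k)=0$; I take $\alpha_5$ below that threshold.

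Third --- the crux --- I would prove monotone descent of the outer loop. Using \prettyref{ass:GwFullrank} together with nonsingular $\FPPN{G_\alpha}{\theta}$, the constraint manifold is locally parameterized by $w$ as $\theta=\Theta(w)$ with reduced objective $\hat{K}(w)=K(\Theta(w))$, whose exact reduced gradient is $-\FPPN{G}{w}^T\FPPN{G_\alpha}{\theta}^{-T}\FPPN{K}{\theta}$. The zeroth-order QP \prettyref{eq:ZONMDPL} minimizes a convex model built from the inexact reduced gradient $-\FPPN{G}{w}^T\FPPN{\bar{G}_\alpha}{\theta}^{-T}\FPPN{K}{\theta}$ and the corresponding reduced curvature; \prettyref{ass:GwFullrank} keeps this reduced problem non-degenerate, so the reduced curvature is positive definite on the achievable configuration changes and the computed search direction is bounded and uniquely determined. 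The key estimate is that, although $\FPPN{K}{\theta}=O(1/\alpha)$, the gradient error telescopes as $\FPPN{G}{w}^T\FPPN{G_\alpha}{\theta}^{-T}(\FPPN{G_\alpha}{\theta}-\FPPN{\bar{G}_\alpha}{\theta})^T\FPPN{\bar{G}_\alpha}{\theta}^{-T}\FPPN{K}{\theta}$, which carries two factors of $O(\alpha)$ and is therefore $O(\alpha)$ overall, whereas the exact reduced gradient is $O(1)$. Away from stationarity the guaranteed model decrease is not overturned by this $O(\alpha)$ inexactness, so the direction remains a genuine descent direction for $K$ along the true manifold; the line search on $\gamma$, which never accepts an increase and shrinks the trial step after a rejection, then enforces $K(\theta^k)\le K(\theta^{k-1})$, while the near-stationary case triggers the $\|\theta^k-\theta^{k-1}\|_\infty<\epsilon$ return.

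The main obstacle is exactly this descent step: because $\FPPN{K}{\theta}$ blows up like $1/\alpha$, a naive bound on the direction error is $O(1)$ and would not shrink with $\alpha$; the argument must exploit the cancellation in $\FPPN{\bar{G}_\alpha}{\theta}^{-1}(\FPPN{G_\alpha}{\theta}-\FPPN{\bar{G}_\alpha}{\theta})\FPPN{G_\alpha}{\theta}^{-1}$ --- two powers of $O(\alpha)$ --- to defeat the single power of $1/\alpha$, and it must do so uniformly over the bounded feasible set of $w$ so that one threshold $\alpha_5$ serves the entire run. Collecting the thresholds from nonsingularity, from projection convergence, and from this descent estimate, and taking $\alpha_5$ to be their minimum, completes the proof.
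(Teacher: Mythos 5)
Your route is structurally the same as the paper's: feasibility of every accepted iterate comes from the zeroth-order manifold-projection guarantee (\prettyref{Thm:DConv} together with \prettyref{Lem:discreteInvarianceZO}), and the remaining content is that the direction returned by \prettyref{eq:ZONMDPL}, i.e.\ the inexact reduced gradient $\FPPN{\bar{K}}{w}=-\FPPN{G}{w}^T\FPPN{\bar{G}_\alpha}{\theta}^{-T}\FPPN{K}{\theta}$, remains a descent direction for the reduced objective $K(w)$, exploiting that $\Delta G_\alpha^{-1}=\FPPN{G_\alpha}{\theta}^{-1}-\FPPN{\bar{G}_\alpha}{\theta}^{-1}$ carries two powers of $\alpha$ where $\FPPN{G_\alpha}{\theta}^{-1}$ carries only one. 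You identify that cancellation and invoke \prettyref{ass:GwFullrank} in the right place. The gap is in how you deploy it: you compare \emph{absolute} magnitudes (exact reduced gradient ``$O(1)$'', error ``$O(\alpha)$'') and then restrict to iterates ``away from stationarity,'' deferring the rest to the $\|\theta^k-\theta^{k-1}\|_\infty<\epsilon$ return. That does not close the argument. The $O(1)$ on the exact reduced gradient is only an upper bound; at iterates where the reduced gradient is comparable to or smaller than your $O(\alpha)$ error bound --- which necessarily occur as the method approaches a minimizer --- nothing prevents the computed direction from being an ascent direction. The line search then only rejects (setting $\theta^k\gets\theta^{k-1}$), so the sequence stalls rather than decreases, and the termination test never fires because it sits in the acceptance branch and tests step size, not gradient size. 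Since this exceptional region depends on $\alpha$ but is never empty, no single $\alpha_5$ chosen your way serves the entire run. (The absolute scalings are also slippery: within the fixed ball $\mathcal{B}(\theta_{-},r\beta/2)$ one has $\dot\theta=\mathcal{O}(r/\alpha)$, so $\FPPN{K}{\theta}$ is $\mathcal{O}(1/\alpha^2)$ rather than your $\mathcal{O}(1/\alpha)$; only ratios are reliable here.)

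The paper's estimate is relative, not absolute, and that is exactly what closes the hole. Setting $u=\FPPN{G_\alpha}{\theta}^{-T}\FPPN{K}{\theta}$ and using \prettyref{Lem:basicZO},
\begin{small}
\begin{align*}
\FPPN{\bar{K}}{w}^T\FPPN{K}{w}
\geq&\; u^T\FPPN{G}{w}\FPPN{G}{w}^Tu-
\big|\FPPN{K}{\theta}^T\Delta G_\alpha^{-1}\FPPN{G}{w}\FPPN{G}{w}^Tu\big| \\
\geq&\;\sigma_w\Big(\frac{1}{\alpha}M_A^1+M_B^1\Big)^{-2}\|\FPPN{K}{\theta}\|^2-\\
&\;M_wM_{\Delta G}\Big(\frac{\sigma_X}{2\alpha}-\bar{\sigma}_B\Big)^{-1}
\Big(\frac{\sigma_X}{2\alpha}-\sigma_B\Big)^{-2}\|\FPPN{K}{\theta}\|^2 \\
=&\;\left(\mathcal{O}(\alpha^2)-\mathcal{O}(\alpha^3)\right)\|\FPPN{K}{\theta}\|^2,
\end{align*}
\end{small}
where the lower bound on the main term uses $\sigma_{min}(\FPPN{G}{w}\FPPN{G}{w}^T)\geq\sigma_w$ (this is precisely where \prettyref{ass:GwFullrank} enters) together with $\|u\|\geq\|\FPPN{K}{\theta}\|(\frac{1}{\alpha}M_A^1+M_B^1)^{-1}$. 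Both the positive term and the error term are proportional to the \emph{same} factor $\|\FPPN{K}{\theta}\|^2$, so the angle between the exact and inexact reduced gradients is bounded away from $\pi/2$ uniformly in the magnitude of $\FPPN{K}{\theta}$, and a single threshold $\alpha_5\leq\alpha_4$ yields a strict descent direction at every non-stationary iterate, however close to stationarity. Replace your absolute comparison with this two-sided, $\|\FPPN{K}{\theta}\|^2$-proportional bound and the rest of your outline goes through.
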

The proofs of these results are deferred to \prettyref{sec:appendix}. Both results imply that timestep sizes cannot be arbitrarily large, otherwise PGM can fail to converge. In addition, the divergence behavior of PGM can only happen in the manifold projection substep when the norm $\|G_\alpha\|,\|\bar{G}_\alpha\|$ does not decrease after apply \prettyref{eq:PROJ} or \prettyref{eq:ZOPROJ}, which is an indicator of the use of smaller timestep sizes. As a result, we can design a robust articulated body simulator using adaptive timestep control as illustrated in \prettyref{alg:adaptive}. 

\prettyref{alg:adaptive} starts by time-integrating using $\alpha=1$. If PGM diverges, we cut the timestep size by half, i.e. setting $\alpha=0.5$ and recurse. Note that $\text{simulate}(\alpha,\Delta t,\theta_-,\theta_{--})$ implies 1) the last timestep size is $\Delta t$ and 2) the desired next time instance is $\alpha\Delta t$ ahead. If PGM diverges, we slice timestep size by half and call $\text{simulate}(\alpha/2,\Delta t/2,\theta_-,\theta_{--})$ for the first half. However, further subdivision might happen for the first half due to recursion, so we return the last timestep size, say $\Delta t^*$. Next, we time integrate the second half. Given that last timestep size is $\Delta t^*$ and our desired $\alpha^*\Delta t^*=\alpha\Delta t/2$, we must have $\alpha^*=(\alpha\Delta t)/(2\Delta t^*)$.
\vspace{-5px}
\begin{algorithm}[ht]
\caption{\label{alg:adaptive} simulate($\alpha,\Delta t,\theta_{-},\theta_{--}$)}
\begin{small}
\begin{algorithmic}[1]
\State $\theta,w\gets$PGM($\alpha,\Delta t,\theta_{-},\theta_{--}$)
\If{Converged}
\State Return $\alpha\Delta t,\theta,\theta_{-}$
\Else
\State $\Delta t^*,\theta^*,\theta_{-}^*\gets$
simulate($\alpha/2,\Delta t,\theta_{-},\theta_{--}$)
\State $\alpha^*\gets(\alpha\Delta t)/(2\Delta t^*)$
\State $\Delta t^{**},\theta^{**},\theta_{-}^{**}\gets$
simulate($\alpha^*,\Delta t^*,\theta^*,\theta_{-}^*$)
\State Return $\Delta t^{**},\theta^{**},\theta_{-}^{**}$
\EndIf
\end{algorithmic}
\end{small}
\end{algorithm}
\vspace{-10px}

\subsection{NMDP Working with PBD}
Our analysis and formulation assumes the use of Newton-Euler's equation. An equivalent form of NMDP can be formulated for the position-based dynamics via a new definition of $K(\theta)$ and $G_\alpha$ as follows:
\begin{equation}
\begin{aligned}
\label{eq:NMDP_PBD}
\argmin{\theta,w}&K(\theta)\quad\ST0=G_\alpha(\theta,w)    \\
K(\theta)\triangleq&\int_{x\in\mathcal{R}}
\frac{\rho\|X(x,\theta)-X(x,\theta_{-})\|^2}{2\Delta t^2}dx \\
G_\alpha\triangleq&\FPPN{I_\alpha}{\theta}(\theta)-
\sum_{x\in\mathcal{C}}\FPPN{X}{\theta}(x,\theta)^Tv_x(\theta)w_x-\tau,
\end{aligned}
\end{equation}
and all the convergence analysis applies to \prettyref{eq:NMDP_NE} and \prettyref{eq:NMDP_PBD} alike. We refer readers to \ifarxiv \prettyref{sec:appendix} \else \cite{} \fi for all the proof.
\section{\label{sec:results}Evaluations}
We evaluate the performance of NMDP in various scenarios. We implement both versions of NMDP (\prettyref{eq:NMDP_NE} and \prettyref{eq:NMDP_PBD}) using C++ and Eigen \cite{eigenweb}, where the optimizations can be solved using both first- and zeroth-order PGM. All the matrix inversions in manifold projection are solved by a rank-revealing LU factorization. As long as the factorization detects that the matrix is near singular (i.e. \prettyref{ass:SigmaMin} is violated) or the norm $\|G_\alpha\|,\|\bar{G}_\alpha\|$ does not decrease, we restart PGM with smaller timestep sizes. In each outer loop of PGM, a QP is solved and the problem data of these QP are quite similar. We use the parametric QP solver \cite{ferreau2014qpoases} that can make use of these similarities to accelerate computation. Finally, we set $\epsilon=10^{-6},\eta=1.5,\zeta=10^{-3}$.

As illustrated in \prettyref{fig:results}, we conduct experiments on the Robosimian by having different simulators to track a prescribed robot walking or jumping trajectory using the stable PD controller \cite{tan2011stable}. The stable PD controller is consistent with the backward-Euler integrator, which uses $\theta,\dot{\theta}$ instead of $\theta_{-},\dot{\theta}_{-}$ as the target state to be tracked. We compare the performance of the following simulators: 
\begin{itemize}
\item NE-NMDP-PGM/NE-NMDP-ZOPGM: 
\prettyref{eq:NMDP_NE} solved using first-/zeroth-order PGM.
\item PBD-NMDP-PGM/PBD-NMDP-ZOPGM: 
\prettyref{eq:NMDP_PBD} solved using first-/zeroth-order PGM.
\item NE-MDP: linearized Newton-Euler's equation with contact forces solved using MDP.
\end{itemize}

\begin{figure*}[th]
\centering
\newcolumntype{?}{!{\hspace{2px} \vrule width 1pt}}
\setlength{\tabcolsep}{0pt}
\tabulinesep=0pt
\begin{tabu}{ccccc}
\includegraphics[width=0.195\textwidth]{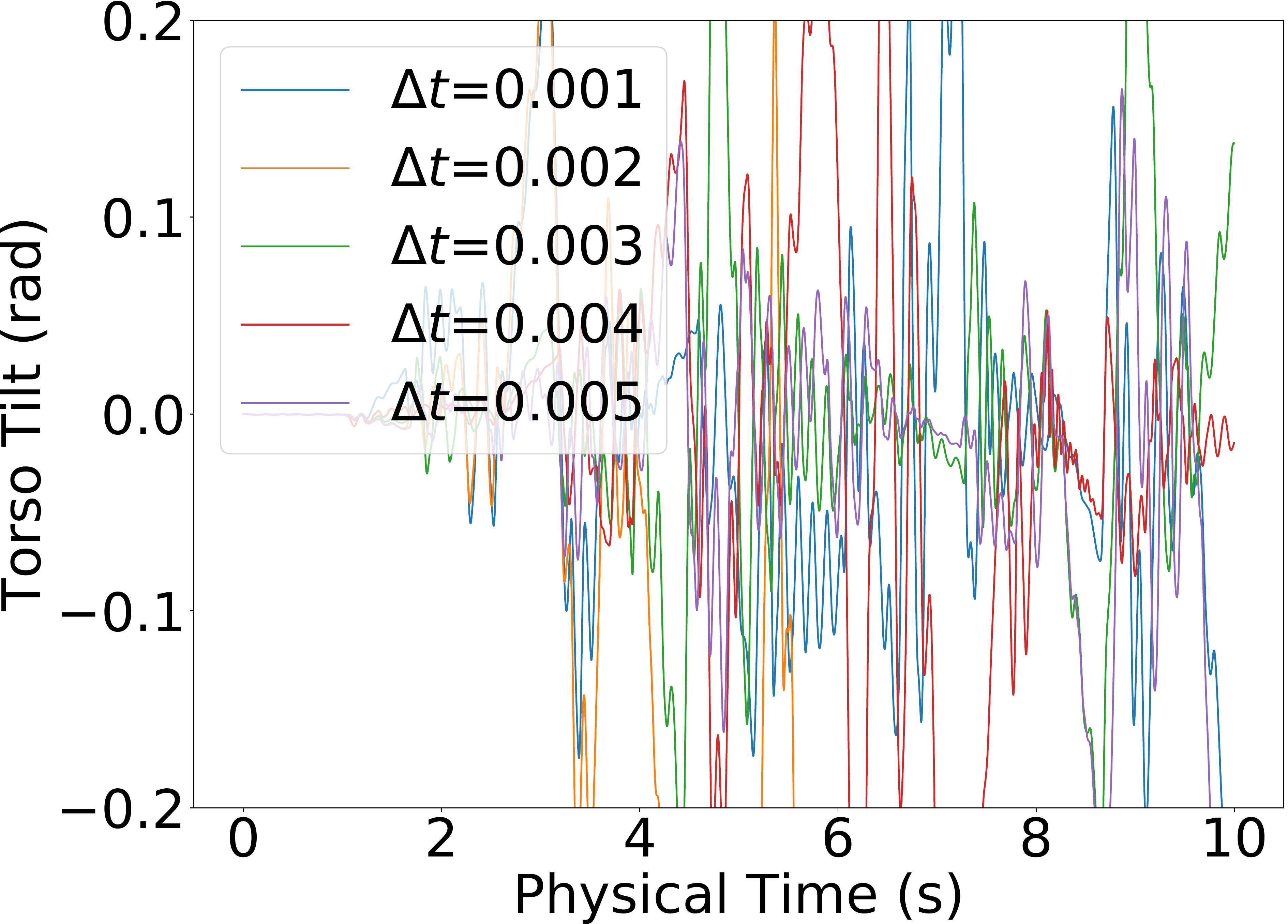}\put(-75,25){(a)}&
\includegraphics[width=0.195\textwidth]{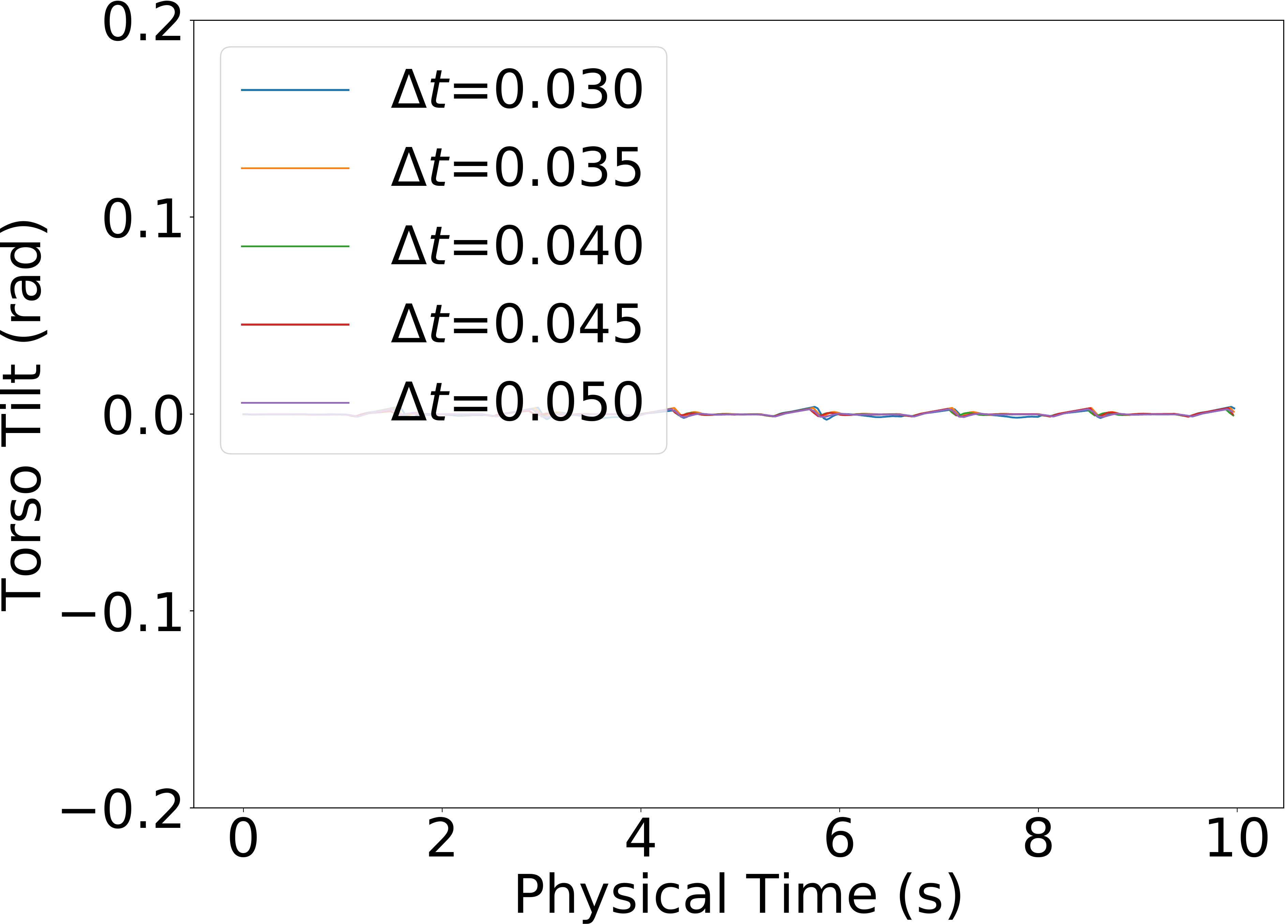}\put(-75,25){(b)}&
\includegraphics[width=0.195\textwidth]{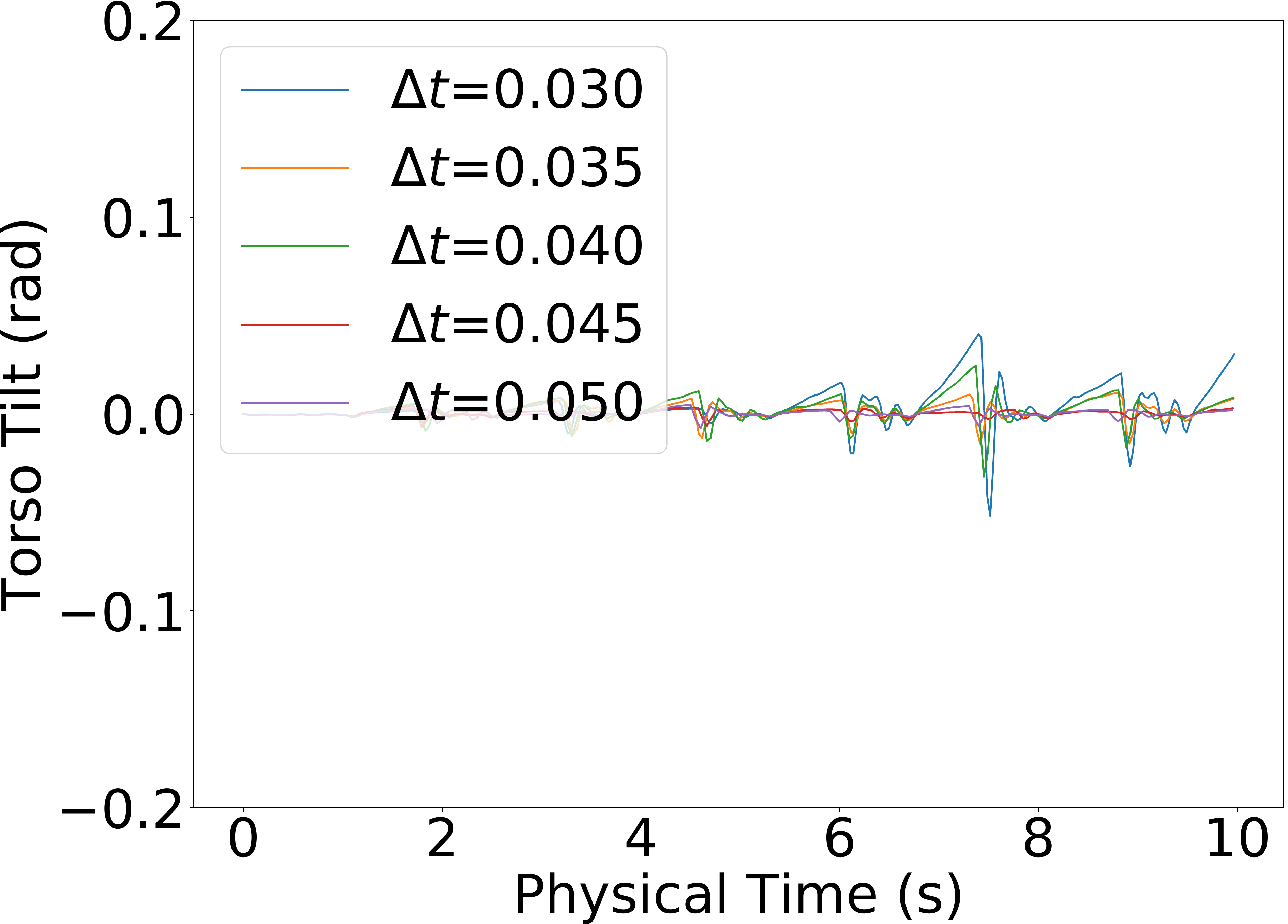}\put(-75,25){(c)}&
\includegraphics[width=0.195\textwidth]{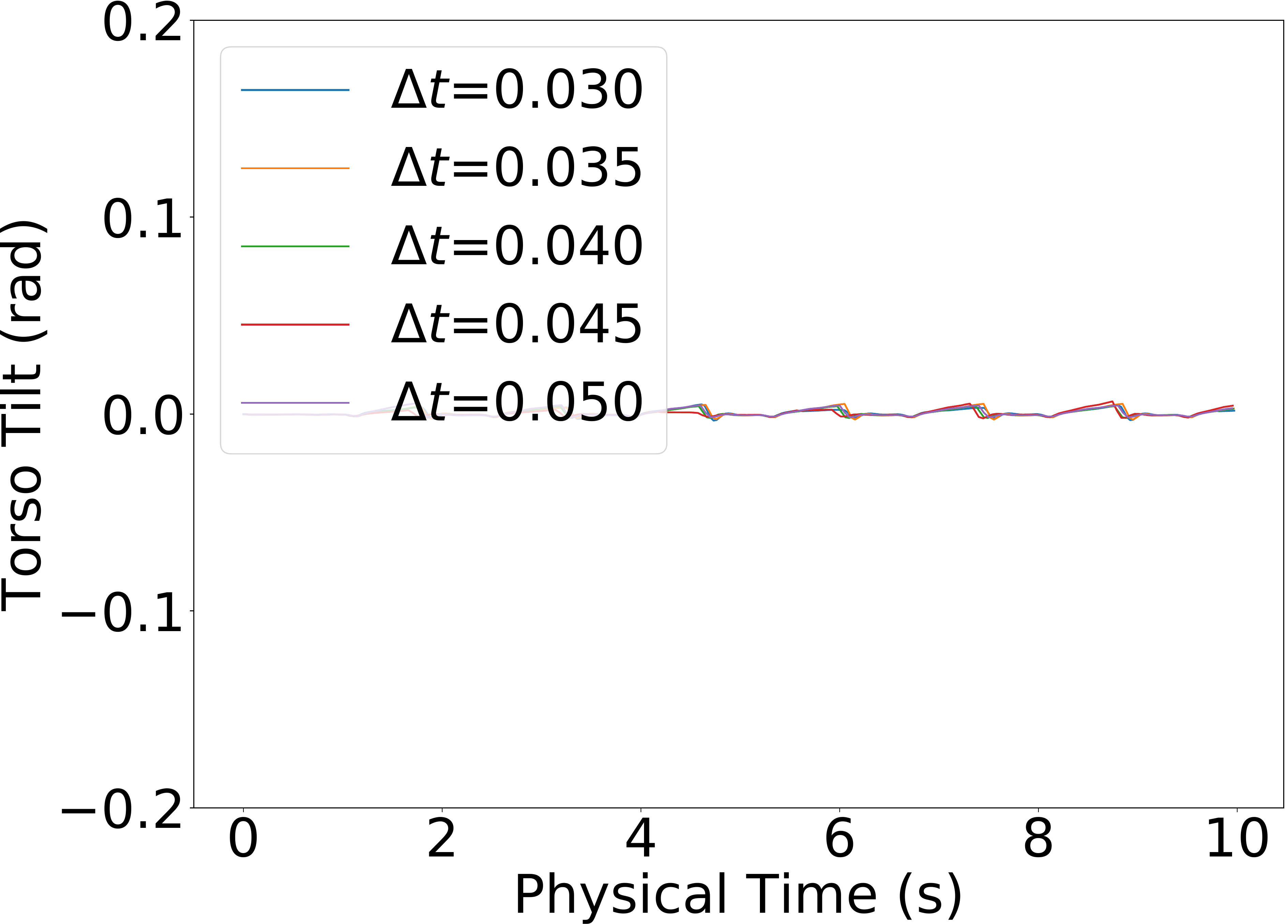}\put(-75,25){(d)}&
\includegraphics[width=0.195\textwidth]{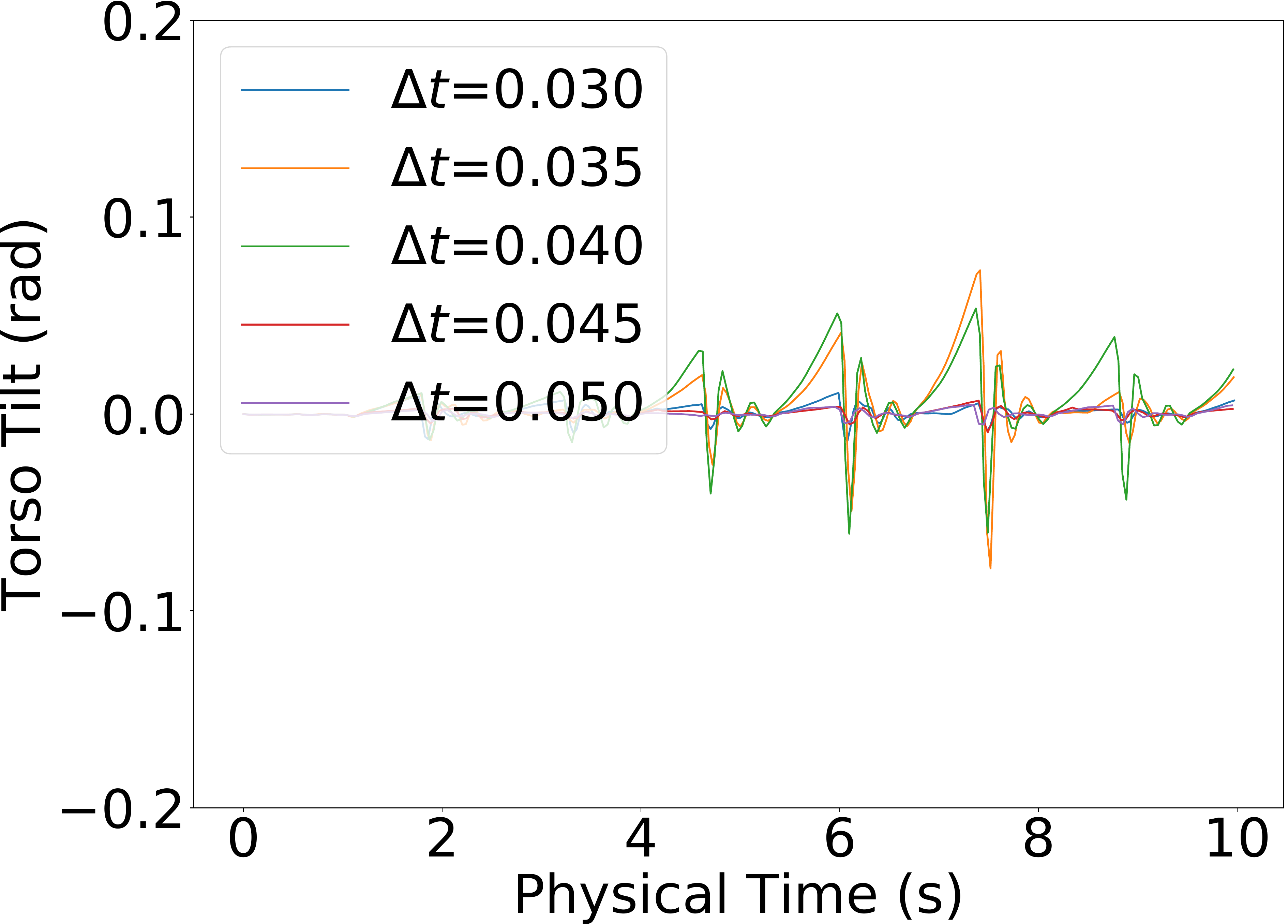}\put(-75,25){(e)}
\end{tabu}
\vspace{-8px}
\caption{\label{fig:jump} \small{The torso tilt of the tracked jumping trajectory. (a): NE-MDP (b): NE-NMDP-PGM (c): NE-NMDP-ZOPGM (d): PBD-NMDP-PGM (e): PBD-NMDP-ZOPGM}}
\vspace{-10px}
\end{figure*}
\begin{figure*}[th]
\centering
\newcolumntype{?}{!{\hspace{2px} \vrule width 1pt}}
\setlength{\tabcolsep}{0pt}
\tabulinesep=0pt
\begin{tabu}{ccccc}
\includegraphics[width=0.195\textwidth]{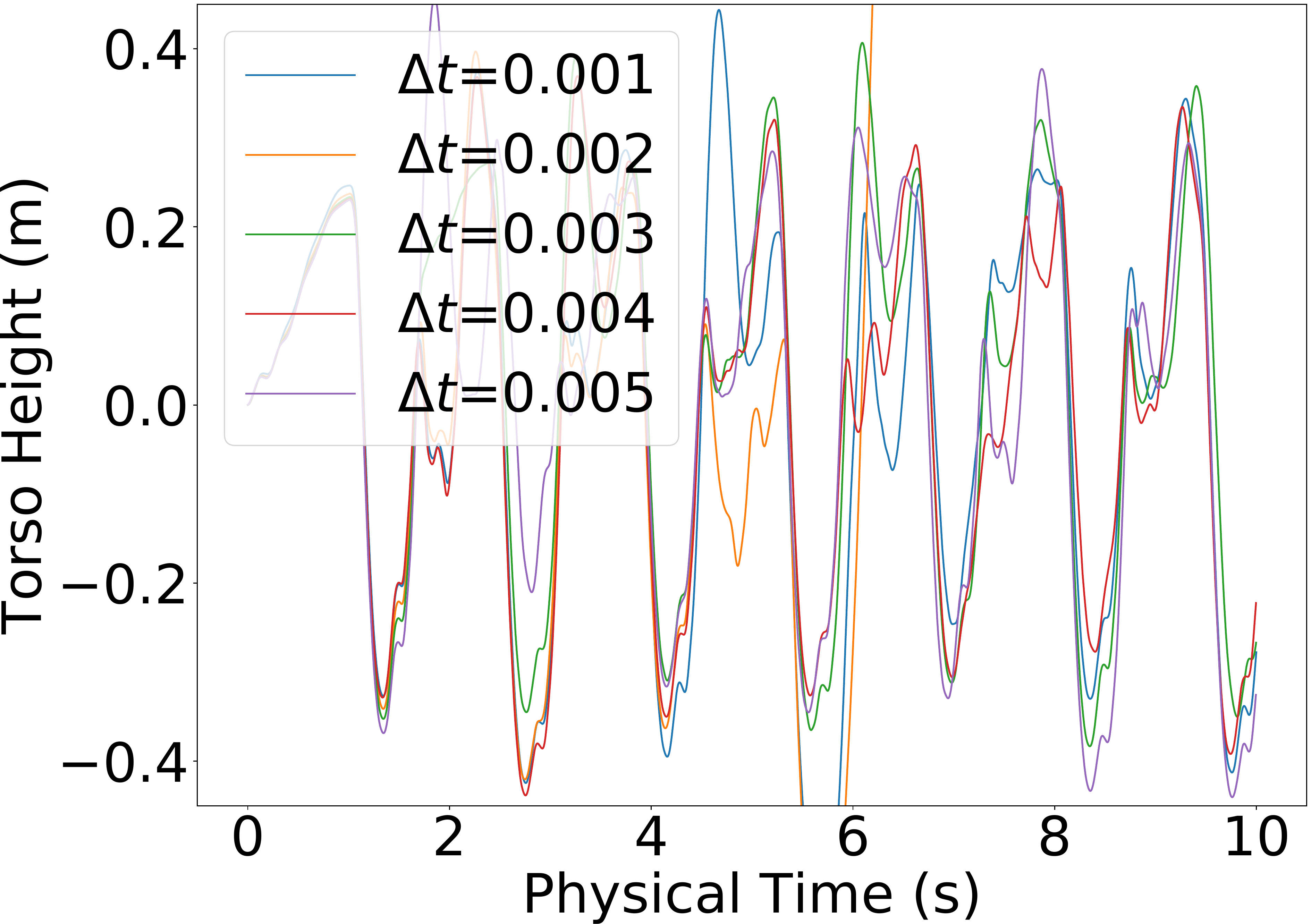}\put(-25,56){(a)}&
\includegraphics[width=0.195\textwidth]{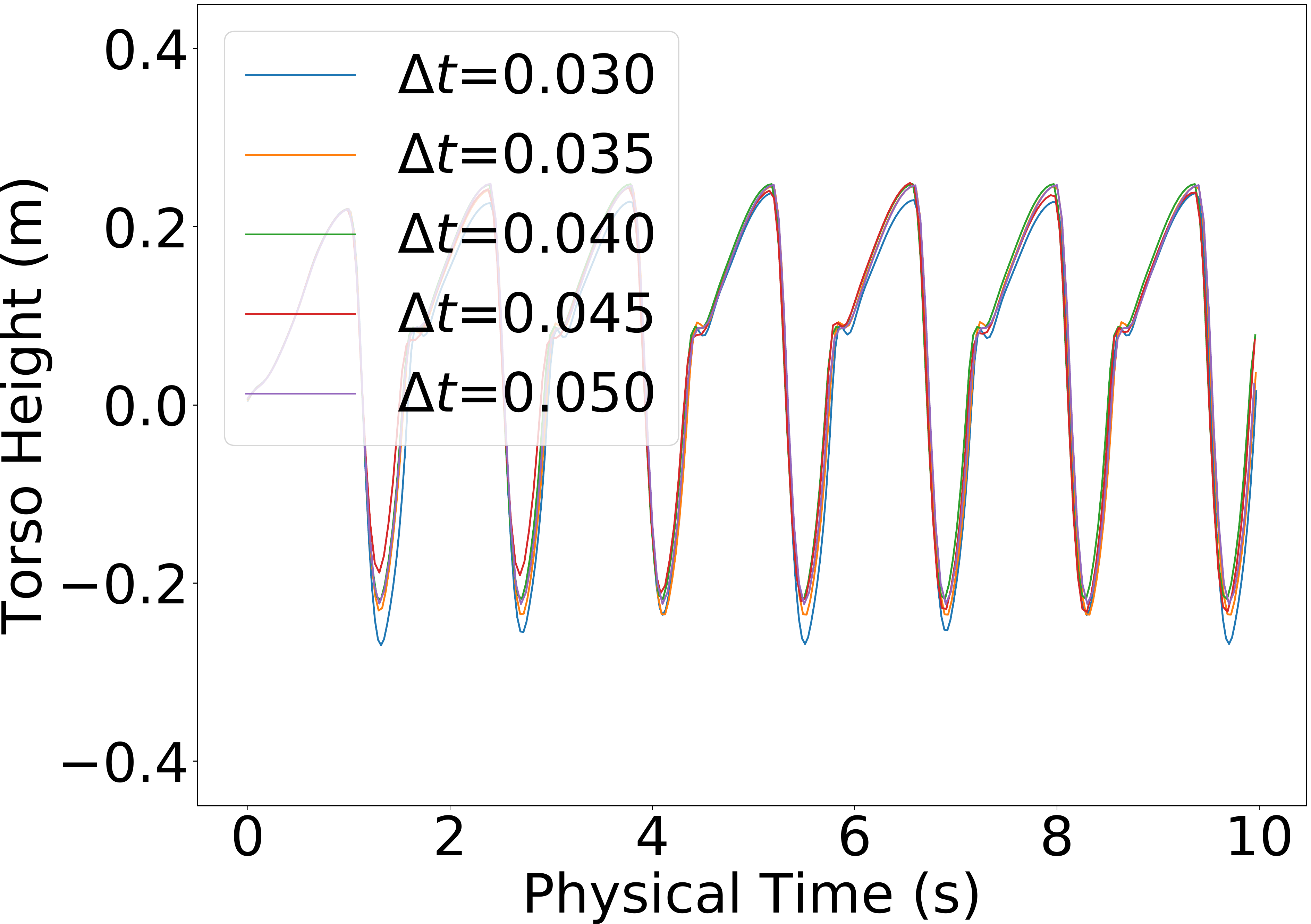}\put(-25,56){(b)}&
\includegraphics[width=0.195\textwidth]{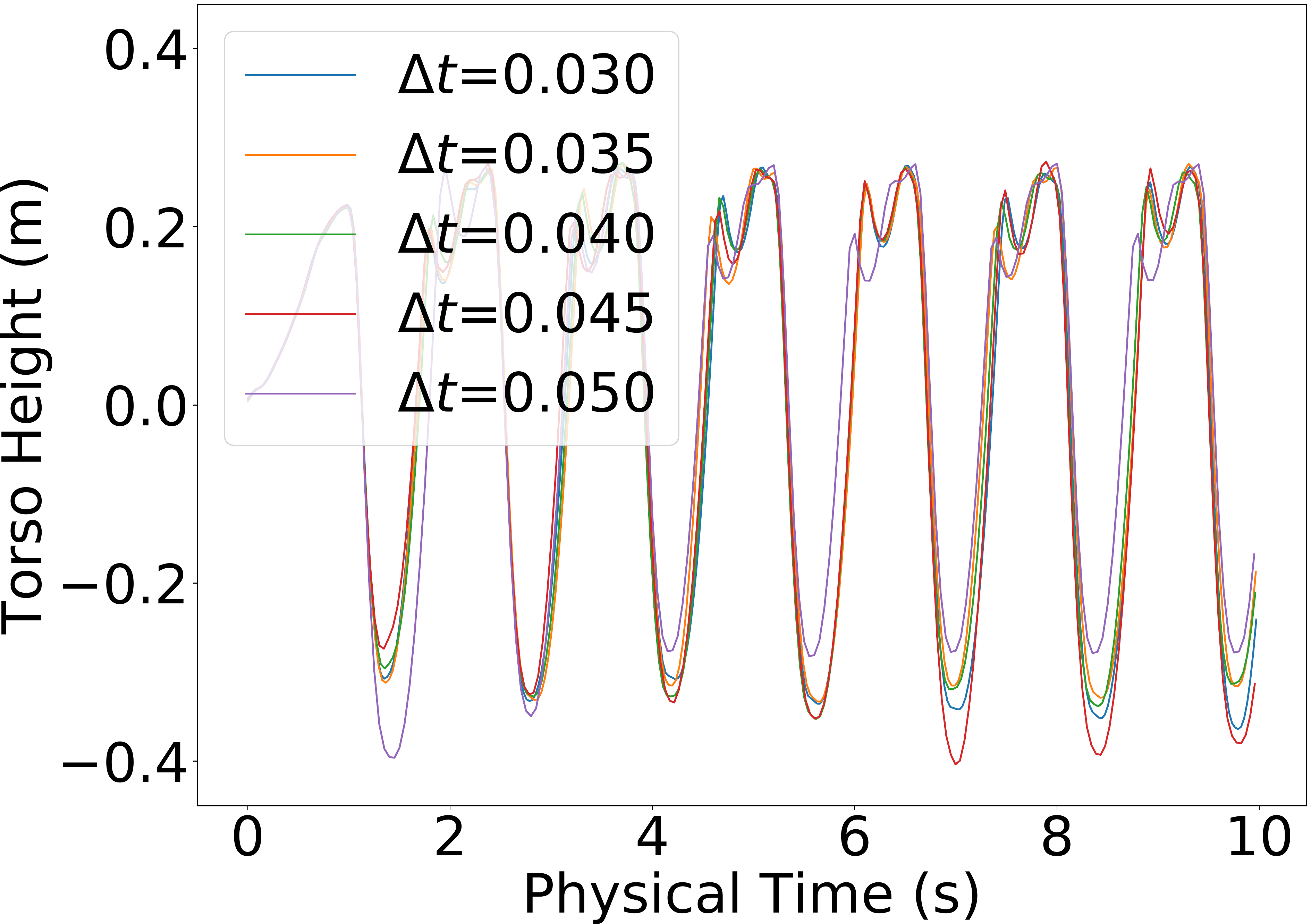}\put(-25,56){(c)}&
\includegraphics[width=0.195\textwidth]{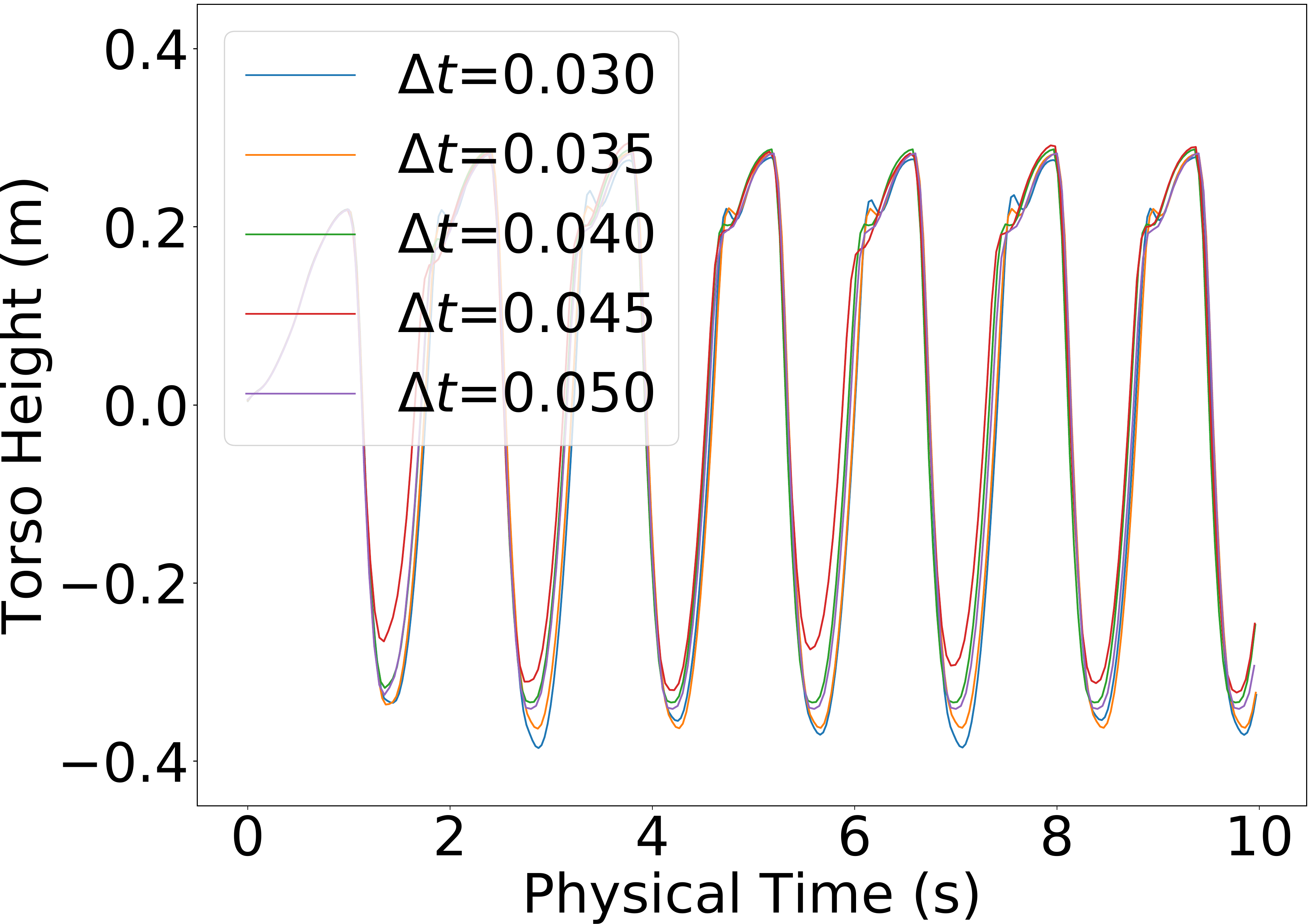}\put(-25,56){(d)}&
\includegraphics[width=0.195\textwidth]{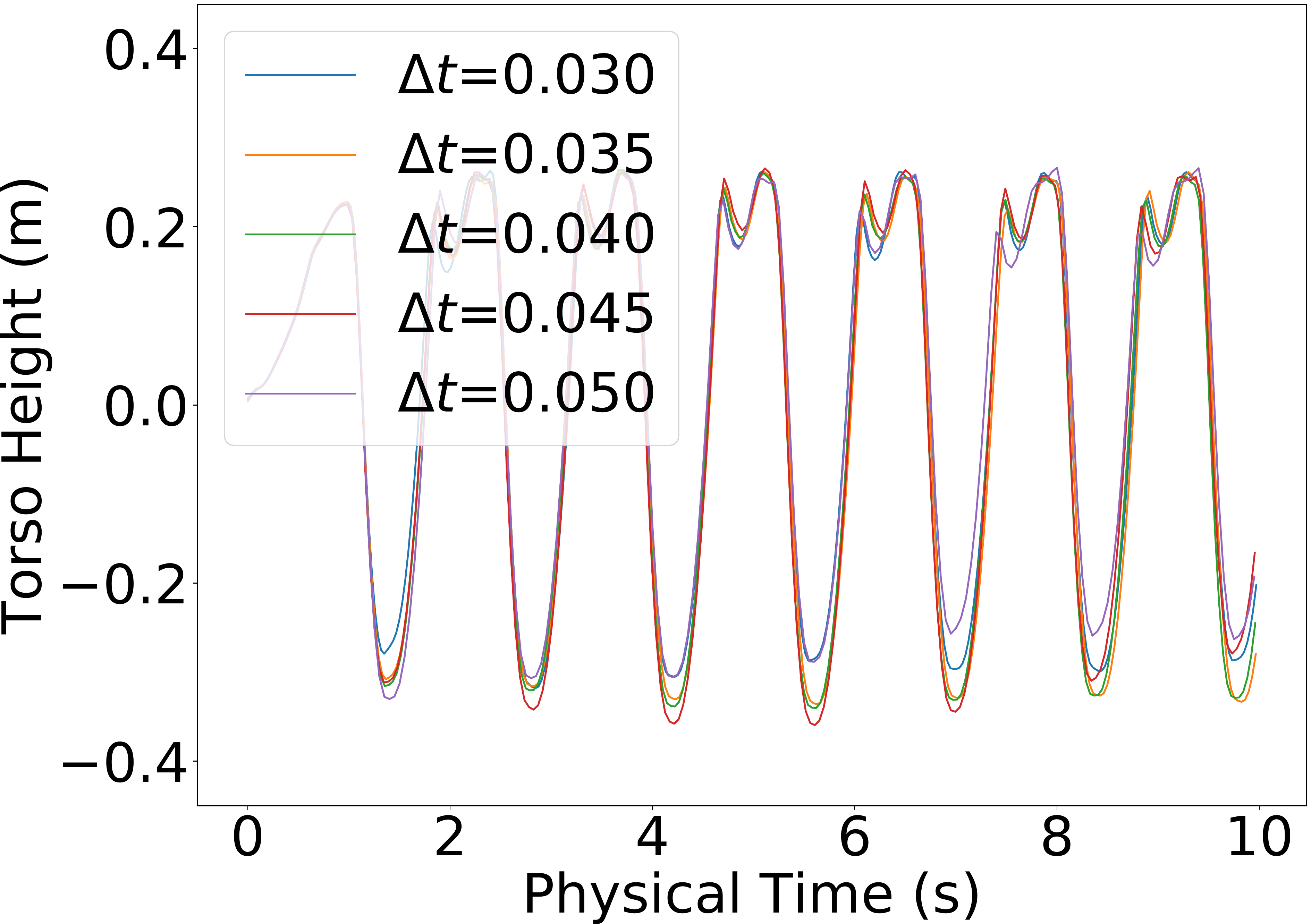}\put(-25,56){(e)}
\end{tabu}
\vspace{-8px}
\caption{\label{fig:height} \small{The torso height of the tracked jumping trajectory. (a): NE-MDP (b): NE-NMDP-PGM (c): NE-NMDP-ZOPGM (d): PBD-NMDP-PGM (e): PBD-NMDP-ZOPGM}}
\vspace{-10px}
\end{figure*}
\begin{figure*}[th]
\centering
\newcolumntype{?}{!{\hspace{2px} \vrule width 1pt}}
\setlength{\tabcolsep}{0pt}
\tabulinesep=0pt
\begin{tabu}{ccccc}
\includegraphics[width=0.195\textwidth]{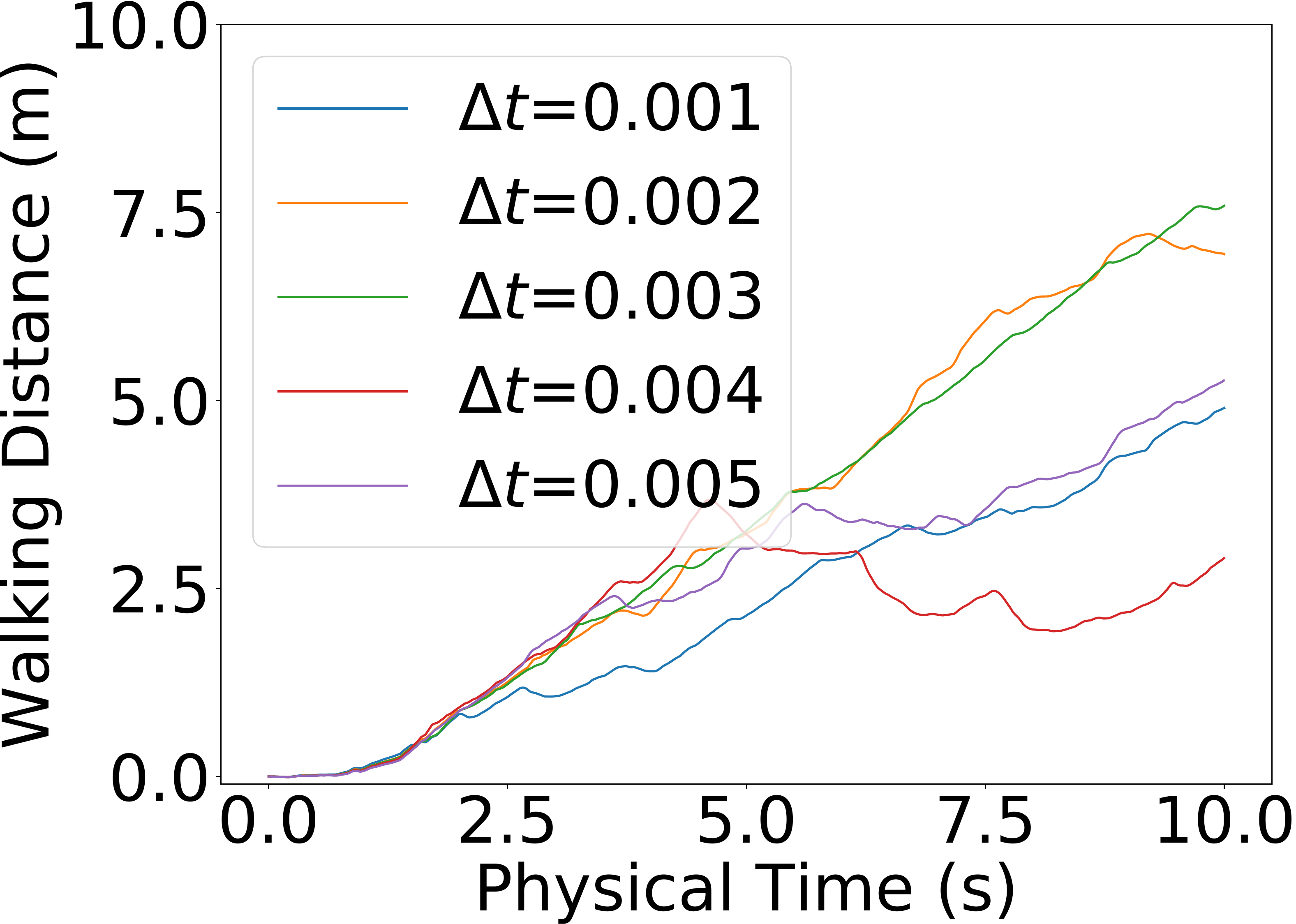}\put(-75,20){(a)}&
\includegraphics[width=0.195\textwidth]{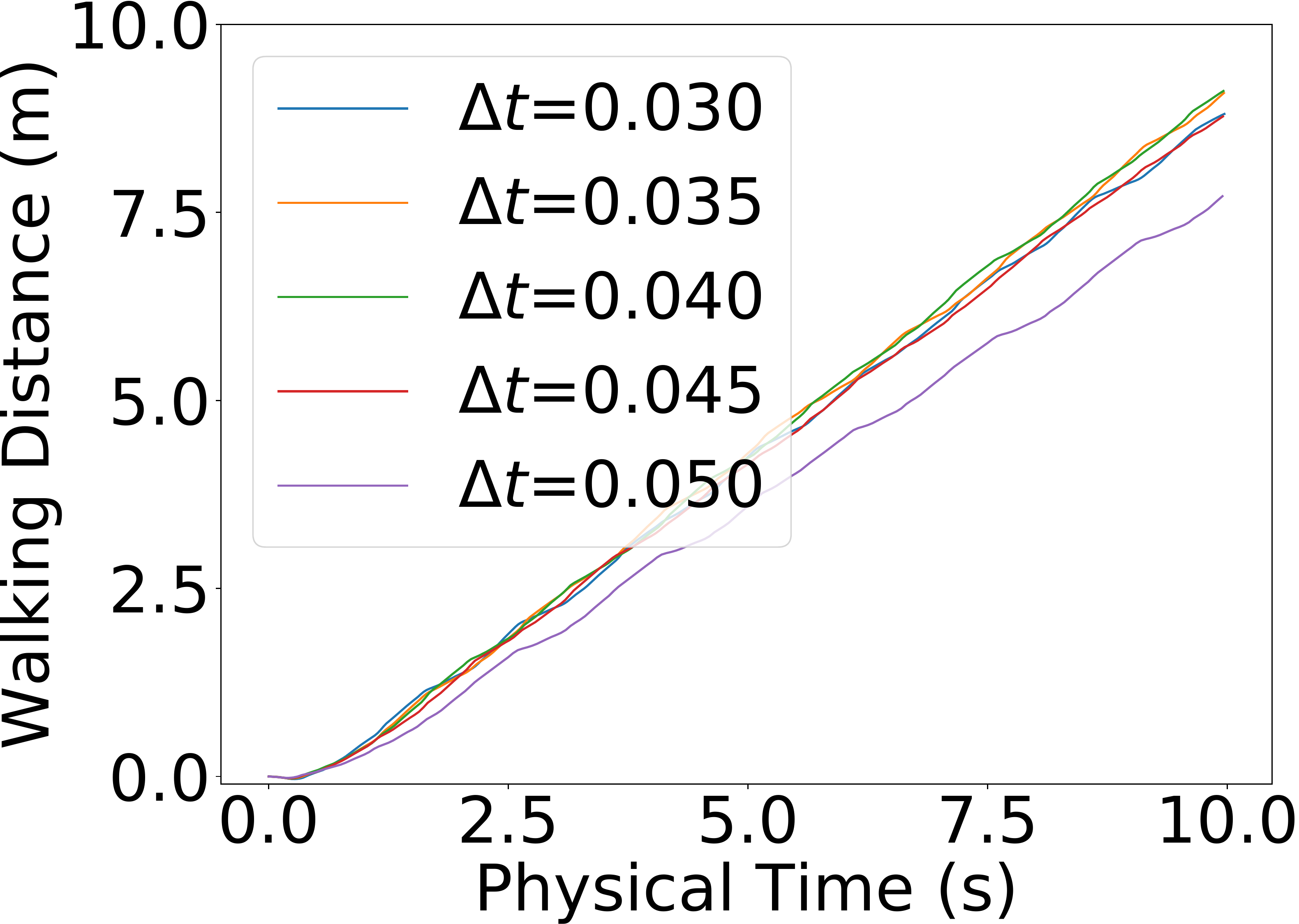}\put(-75,20){(b)}&
\includegraphics[width=0.195\textwidth]{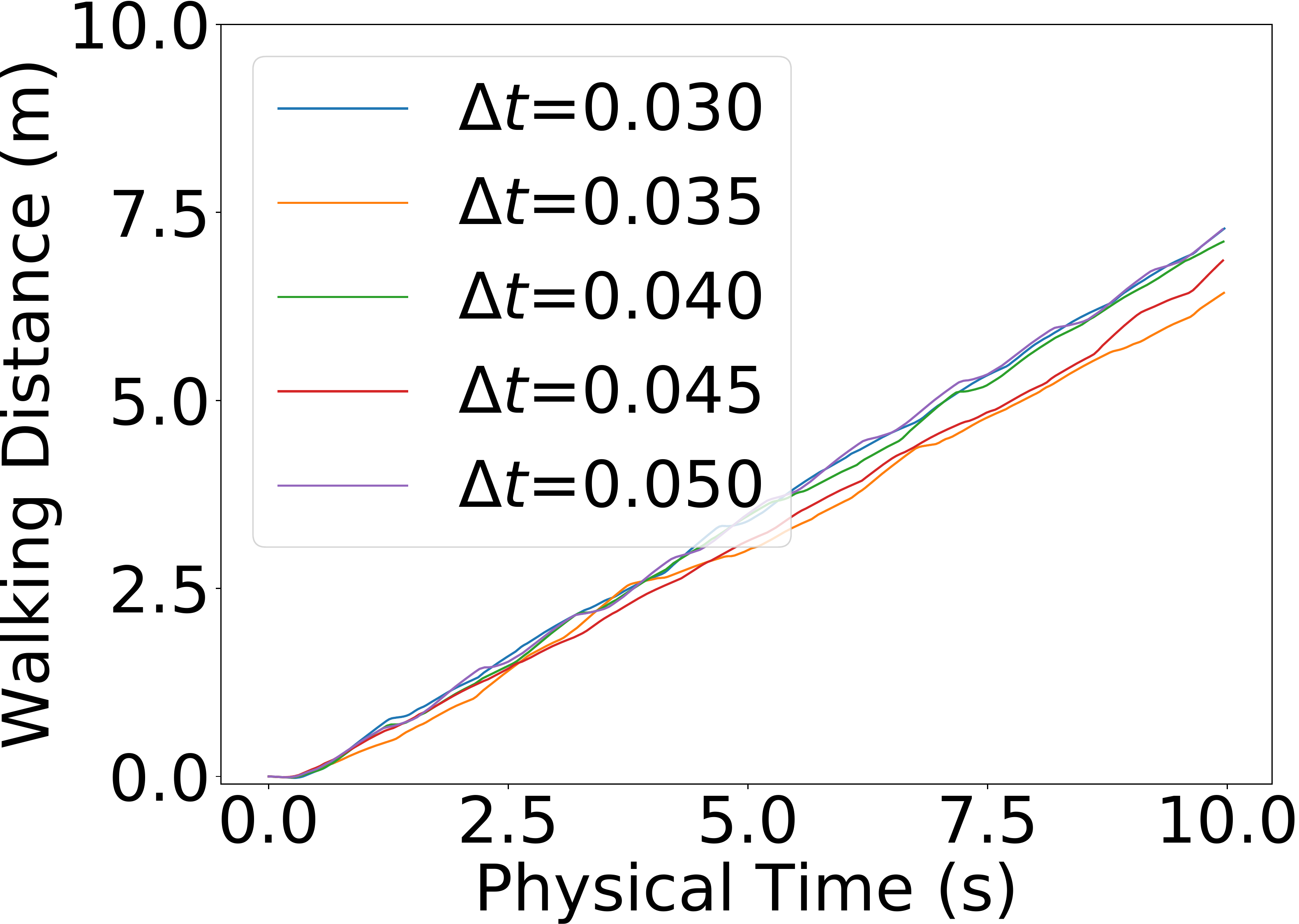}\put(-75,20){(c)}&
\includegraphics[width=0.195\textwidth]{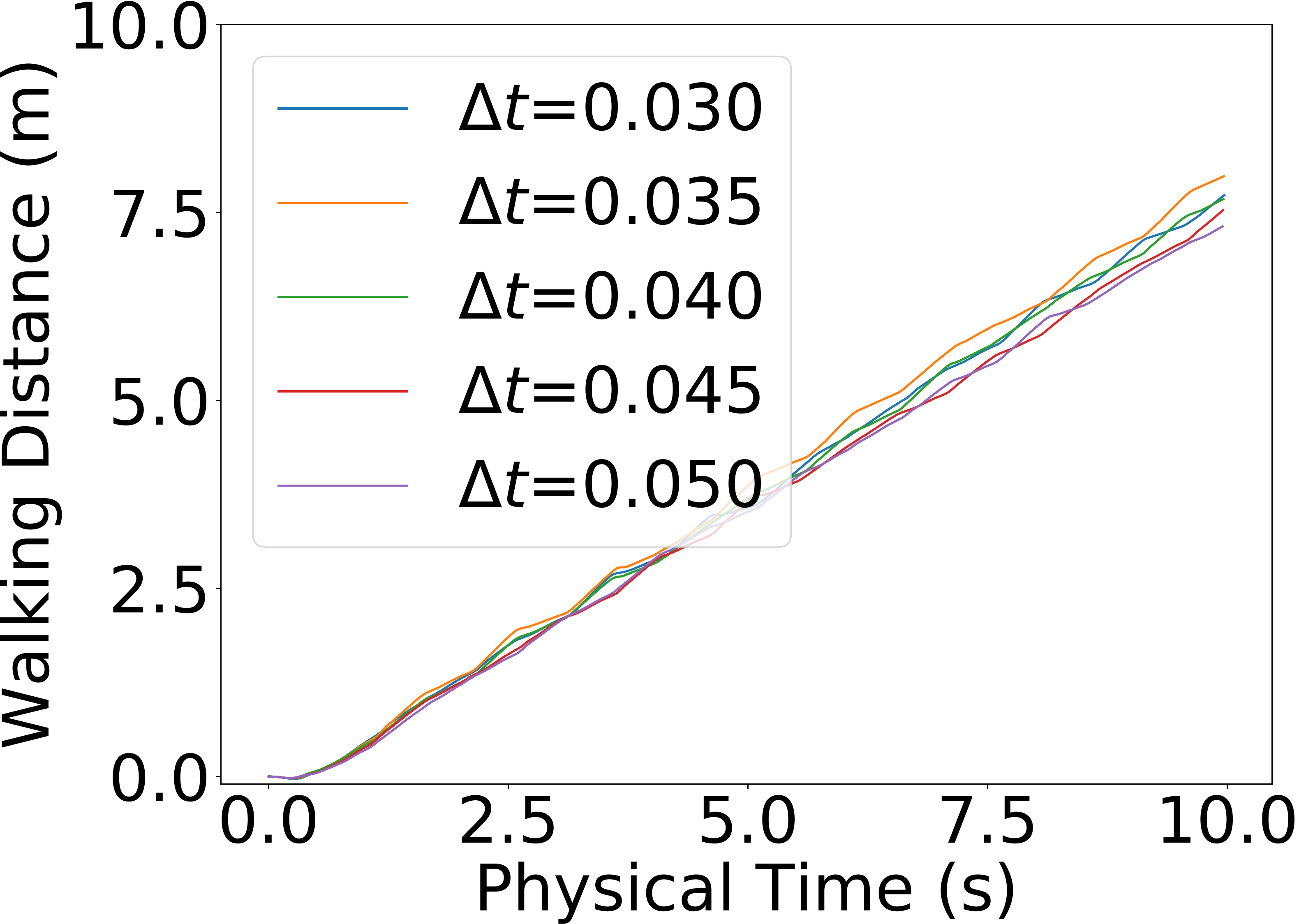}\put(-75,20){(d)}&
\includegraphics[width=0.195\textwidth]{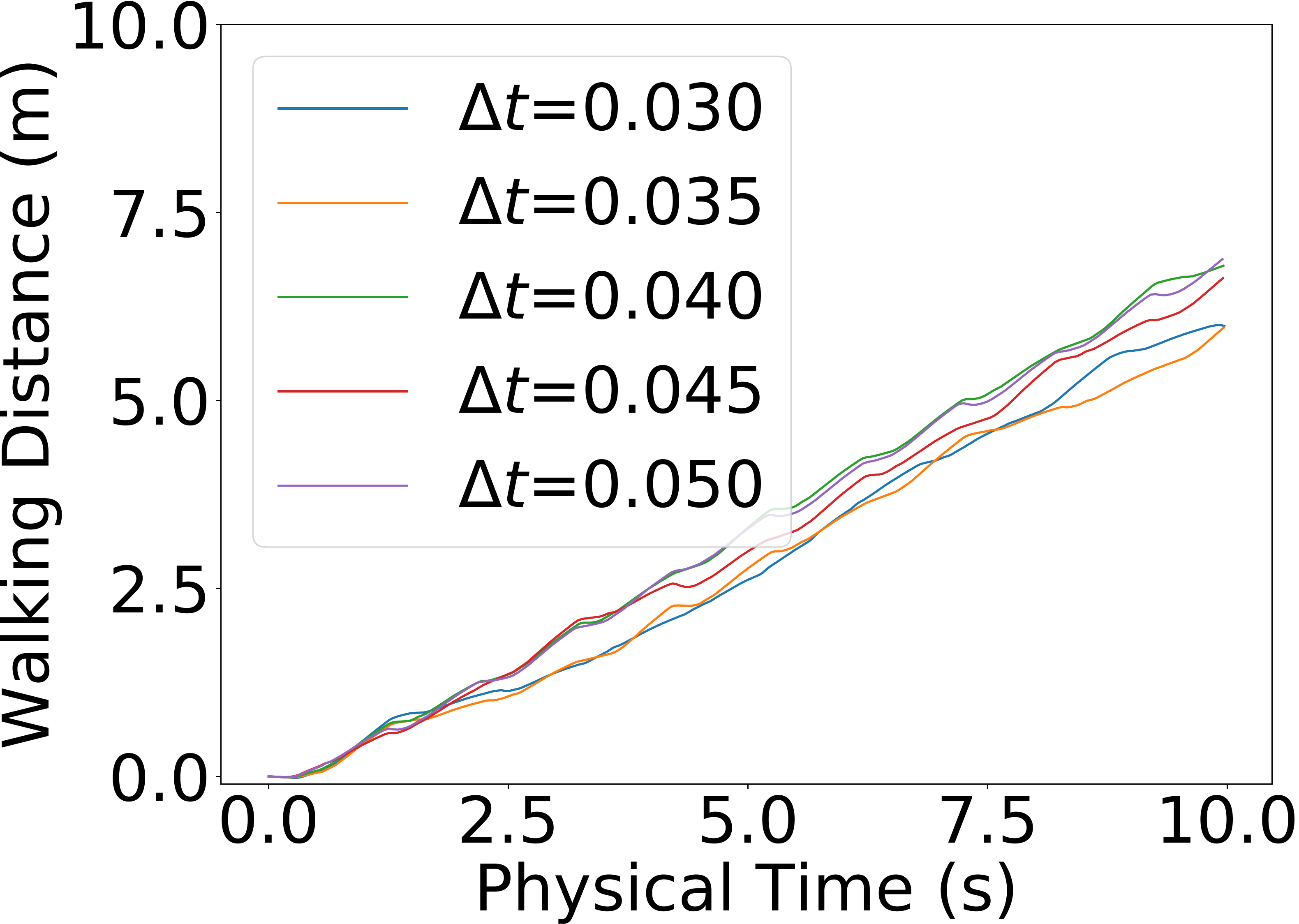}\put(-75,20){(e)}
\end{tabu}
\vspace{-8px}
\caption{\label{fig:distance} \small{The distance of the tracked walking trajectory. (a): NE-MDP (b): NE-NMDP-PGM (c): NE-NMDP-ZOPGM (d): PBD-NMDP-PGM (e): PBD-NMDP-ZOPGM}}
\vspace{-10px}
\end{figure*}
\begin{figure*}[th]
\centering
\newcolumntype{?}{!{\hspace{2px} \vrule width 1pt}}
\setlength{\tabcolsep}{0pt}
\tabulinesep=0pt
\begin{tabu}{ccccc}
\includegraphics[width=0.2\textwidth]{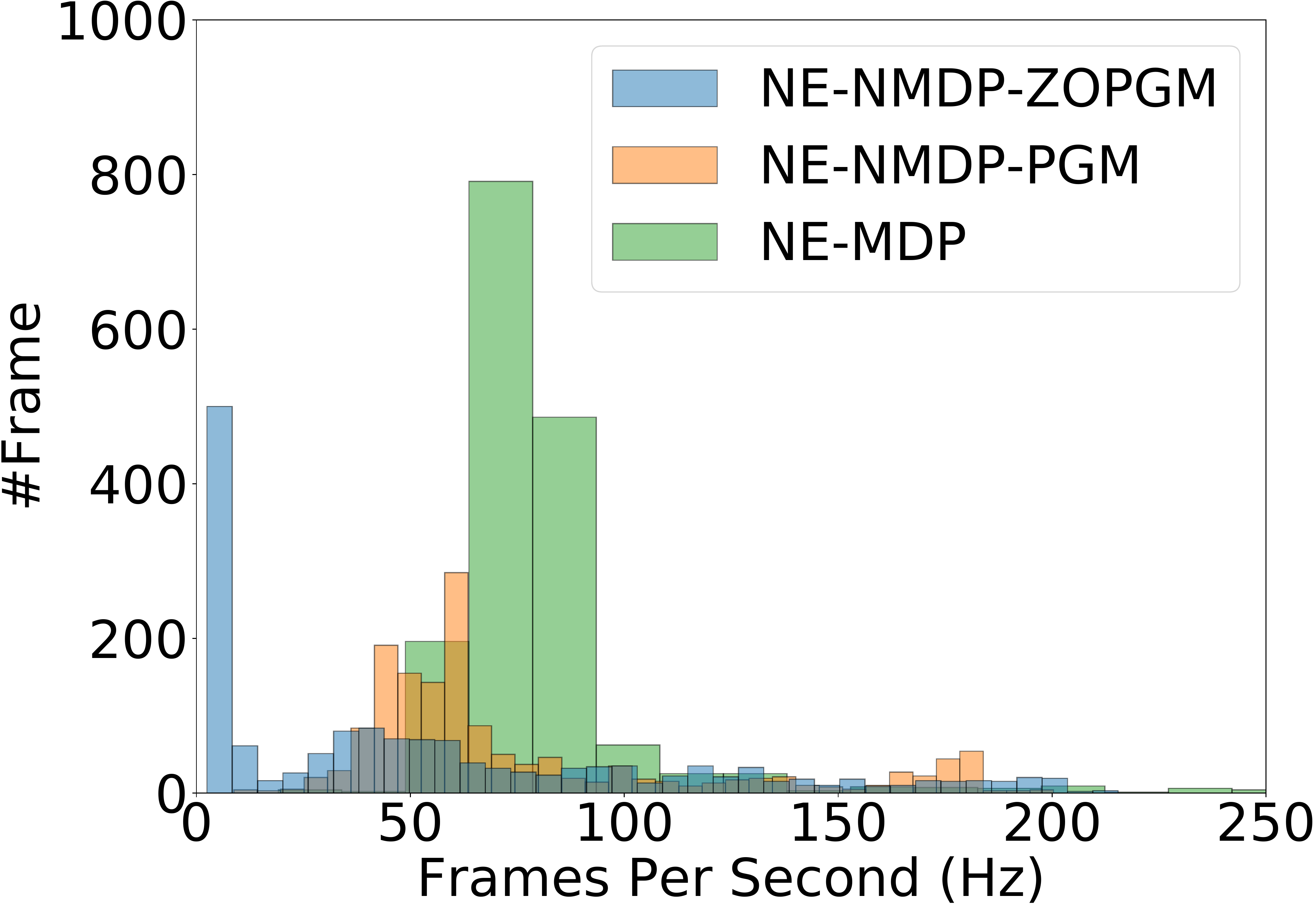}\put(-25,20){(a)}&
\includegraphics[width=0.195\textwidth]{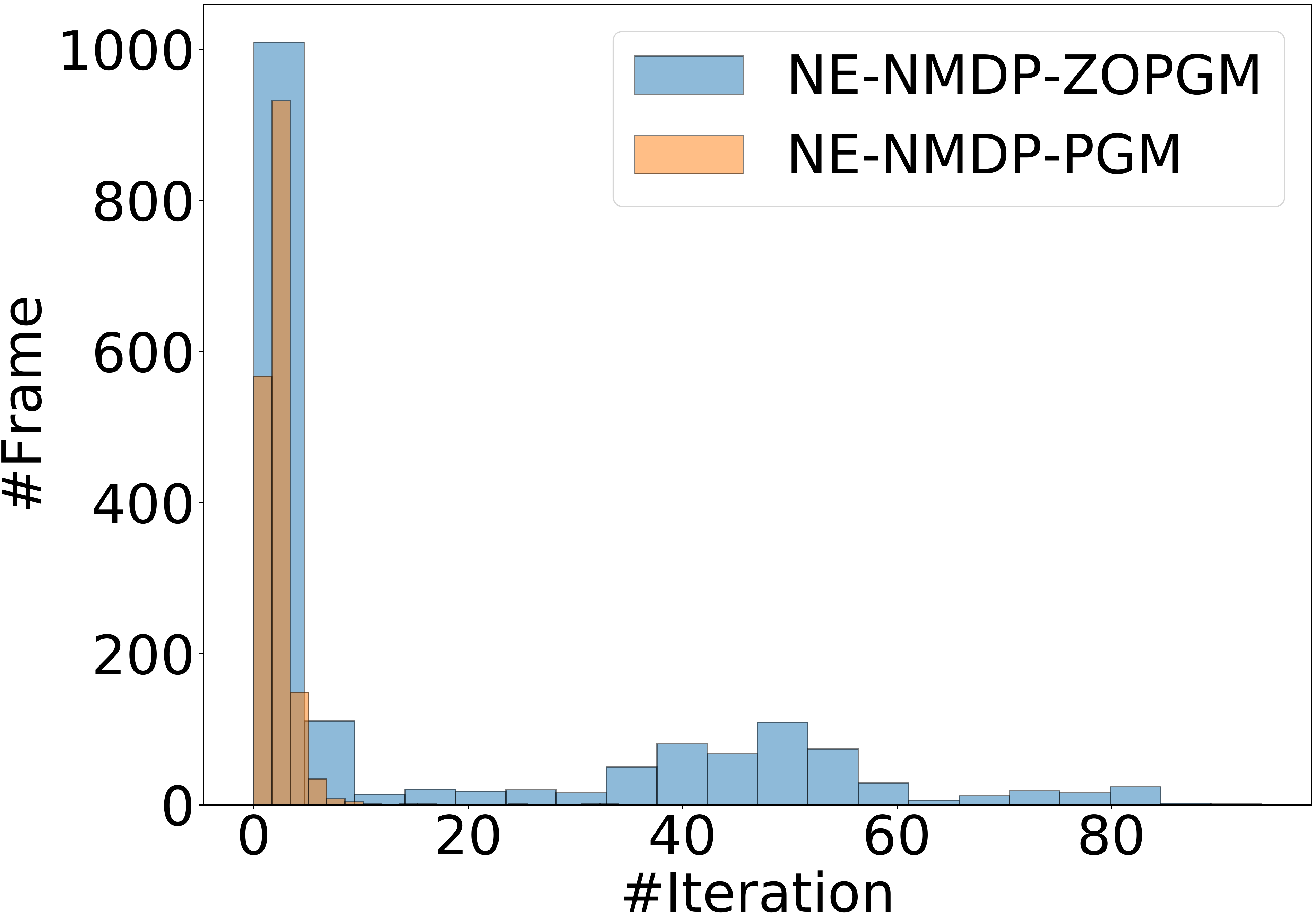}\put(-25,20){(b)}&
\includegraphics[width=0.19\textwidth]{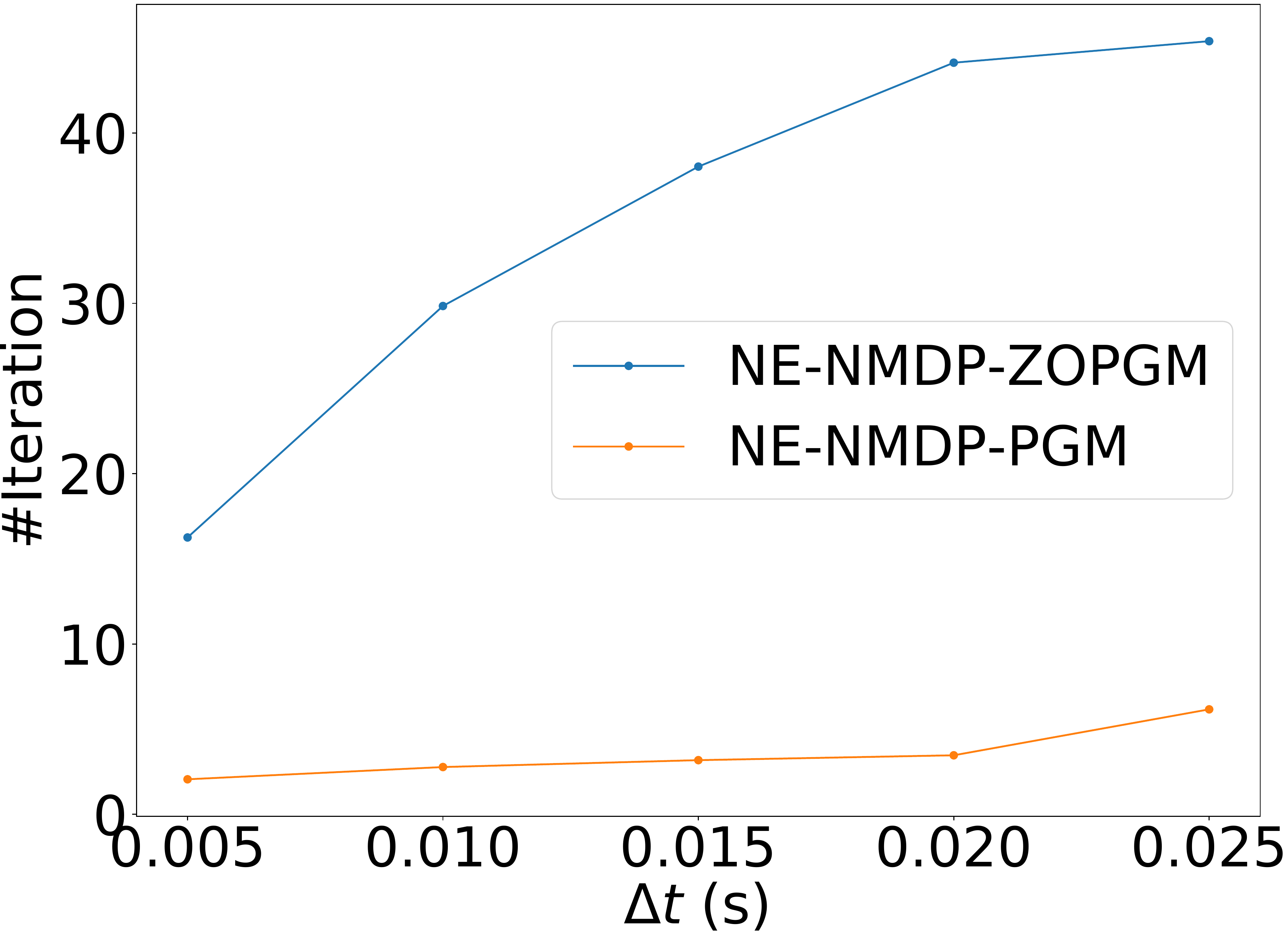}\put(-25,20){(c)}&
\multicolumn{2}{c}{\includegraphics[width=0.37\textwidth]{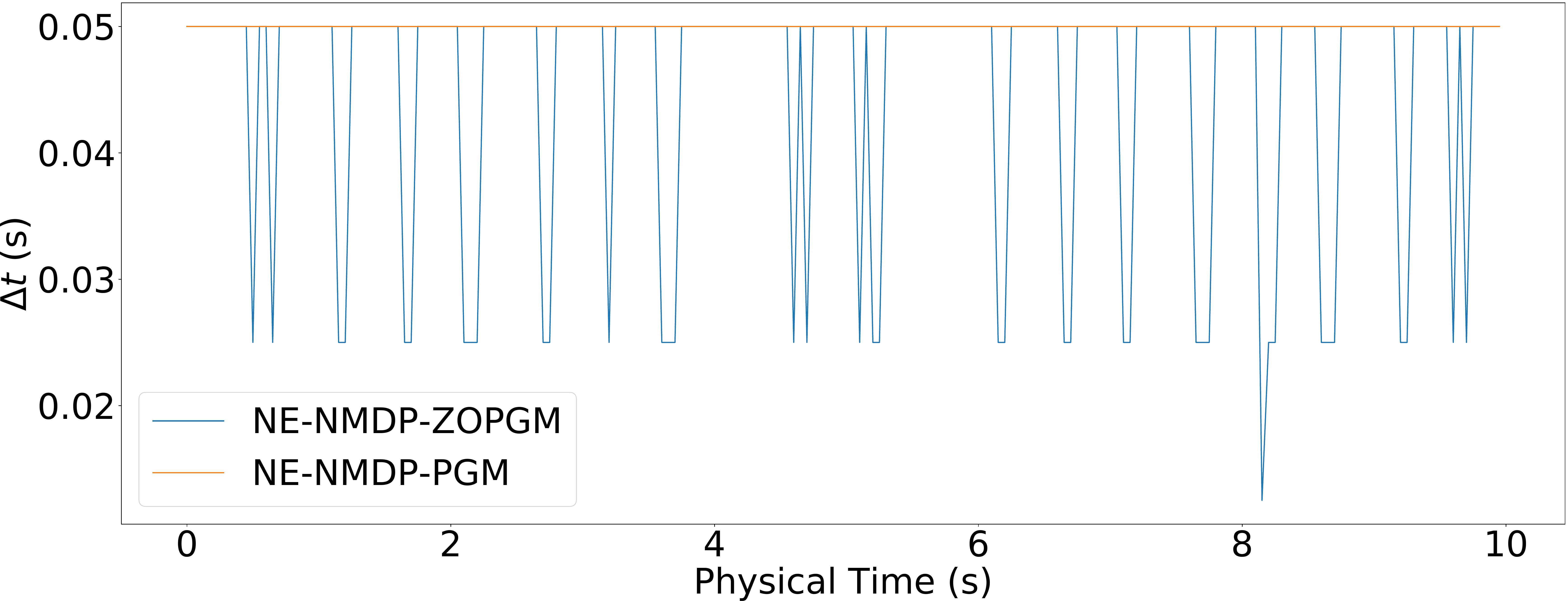}\put(-25,20){(d)}}
\end{tabu}
\vspace{-8px}
\caption{\label{fig:cost} \small{A performance comparison of different methods. (a): Histogram of framerates (b): Histogram of number of outer iterations per frame (c): Number of outer iterations against $\Delta t$ (d): Adaptive timestep size}}
\vspace{-10px}
\end{figure*}
\TE{Stability Under Large Timestep Sizes:} We track a robot jumping trajectory that uses symmetric poses. Since the Robosimian's body shape is also symmetric, the torso in the tracked trajectory should have zero tilt angles from the vertical axis, which is our groundtruth. In \prettyref{fig:jump} (ab), we plot the torso's tile angle predicted by NE-MDP and NE-NMDP-PGM under different timestep sizes. NE-MDP is only stable when $\Delta t\leq7$ms and the simulator explodes under larger $\Delta t$. Even when $\Delta t\leq7$ms, the tile angle suffers from severe oscillation. In comparison, NE-NMDP-PGM is stable when $\Delta t$ increases from $5$ms to $50$ms and the predicted tile angle oscillation is relatively small. We further plot the tile angle sequence predicted by the other three variants of PGM in \prettyref{fig:jump} (cde), the oscillations are consistently small. When using PGM with $\Delta t\leq50$ms, we have never experienced any divergence behaviors so \prettyref{alg:adaptive} is never needed. But divergence happens in ZOPGM and we observe larger tile angle oscillation in \prettyref{fig:jump} (ce).

\TE{Consistent Prediction:} The tracked robot jumping trajectory lasts for 10 seconds with 5 repeated jumping behaviors. As a result, the expected torso height should be a periodic function. In \prettyref{fig:height}, we plot the torso height trajectory predicted using the five methods. With $\Delta t\leq7$ms, the trajectories generated by NE-MDP are suffering from relatively large variations. While all four PGM variants can consistently predict a periodic function when $\Delta t$ increases from $5$ms to $50$ms. A similar result is observed in the walking trajectory. As shown in \prettyref{fig:distance}, the groundtruth walking distance is a linear function of time. The trajectories predicted using NE-MDP exhibit a large variations with different timestep sizes. The consistency of NE-NMDP-(ZO)PGM are much better and that of PBD-NMDP-(ZO)PGM are the best. Compared with NE-NMDP-(ZO)PGM, the additional stability and consistency of PBD-NMDP-(ZO)PGM are presumably due to the fact that Euclidean space discretization is more accurate than that in the configuration space \cite{10.1007/978-3-030-44051-0_39}.

\TE{Computational Cost:} We summarize the computational performance by collecting and analyzing all the timesteps in the four trajectories of \prettyref{fig:results} simulated at $\Delta t=50$ms. \prettyref{fig:cost} (a) profiles the instantaneous framerate, of which MDP is the most efficient involving a single QP solve. PGM does not incur a significant sacrifice in framerate while ZOPGM is significantly slower. \prettyref{fig:cost} (b) profiles the number of outer iterations of \prettyref{alg:PGM} until convergence and \prettyref{fig:cost} (c) plots the average number of outer iterations against $\Delta t$. These figures show that ZOPGM is an approximately one order of magnitude slower than PGM, but ZOPGM provides the extra convenience that analytic derivatives of $v_x(\theta)$ is not needed. Finally, \prettyref{fig:cost} (d) shows the smallest timestep size chosen by \prettyref{alg:adaptive} for PGM and ZOPGM. At $\Delta t=50$ms, PGM is always convergent. ZOPGM is convergent for most timesteps, but $\Delta t$ needs to get down to $25$ms during some critical time instances (e.g. when robot changes contact state).

\section{\label{sec:conclusion}Conclusion \& Discussion}
We present NMDP, a backward-Euler time integration scheme for articulated bodies under generalized contact models. The key to our formulation is the representation of contact forces as a convex combination of vertices of the feasible contact wrench space. To model generalized contact models, we assume that these vertices are dependent on the robot pose. Following the idea of backward-Euler time integrator, we discretize both the articulated body dynamics and the vertices of contact wrench spaces at the next time instance instead of the current one. We solve the constrained optimization using a projected gradient method. Our analysis proves that NMDP has guaranteed convergence under small timestep sizes and a robust simulator can be built using an adaptive timestep size control algorithm. Empirically, we show that this scheme has better stability under large timestep sizes and consistency over varying timestep sizes.

We plan to address several limitations in our future work. First, the main idea of NMDP is not limited to articulated bodies and can be extended to high-dimensional dynammic systems such as deformable objects. However, it requires exact inversion of the constraint Jacobian, which is impractical in high-dimensional scenarios. Second, we assume a small set of known contact points. When new contacts are detected, we do not handle them in our implementation. Third, the generalized contact model needs to be smooth and cannot provide collision-free guarantees. Finally, we plan to evaluate the performance of optimization-based motion planners and controllers using NMDP as the underlying integration model. 

\section{Acknowledgement}
This work is partially funded by NSF Grant \#1911087 and authors thank Mengchao Zhang for proofreading the paper.
\clearpage
\bibliographystyle{IEEEtranS}
\bibliography{references}
\ifarxiv
\section{\label{sec:appendix}Appendix: Proof}
We define some convenient shorthand notations. 
\begin{align*}
G_\alpha(\theta,w)=&\frac{1}{\alpha}A(\theta)+B(\theta,w)    \\
A(\theta)=&\int_{x\in\mathcal{R}}\frac{\rho}{\Delta t^2}
\FPPN{X}{\theta}^T(X(x,\theta)-X(x,\theta_{-}))dx \\
B(\theta,w)=&\int_{x\in\mathcal{R}}\frac{\rho}{\Delta t^2}
\FPPN{X}{\theta}^T(X(x,\theta_{--})-X(x,\theta_{-}))dx- \\
&\sum_{x\in\mathcal{C}}\FPPN{X}{\theta}(x,\theta)^Tv_x(\theta)w_x-\tau.
\end{align*}
Note that all variables or functions that depend on $\alpha$ will be labeled using a subscript, and vice versa.
\subsection{Continuous Manifold Projection}
We begin our analysis by showing that $\theta$ satisfying $G_\alpha(\theta,w)=0$ can be arbitrarily close to $\theta_{-}$ by using sufficiently small $\alpha$. This can be proved by analyzing the following Lyapunov candidate:
\begin{align*}
V_\alpha(\theta,w)=\|G_\alpha(\theta,w)\|^2=\frac{1}{\alpha^2}A^TA+\frac{2}{\alpha}A^TB+B^TB.
\end{align*}
We prove basic properties of boundedness and convexity:
\begin{lemma}
\label{Lem:basic}
Assuming \prettyref{ass:XSmooth},\prettyref{ass:SigmaMin},\prettyref{ass:VSmooth}, there exists an $r>0$, such that for all feasible $w$, the following properties hold within $\mathcal{B}(\theta_{-},r)$:
\begin{enumerate}
\item $\sigma_{min}(\FPPN{G_\alpha}{\theta})\geq\frac{\sigma_X}{2\alpha}-\sigma_B$.
\item $\sigma_{min}(\FPPTN{V_\alpha}{\theta})\geq
\frac{\sigma_X^2}{2\alpha^2}-\frac{2}{\alpha}\sigma_{AB}-\sigma_{BB}$.
\item $\|\FPPN{G_\alpha}{\theta}\|\leq \frac{1}{\alpha}M_A^1+M_B^1$.
\item $\|\FPPTN{G_\alpha}{\theta}\|\leq \frac{1}{\alpha}M_A^2+M_B^2$.
\item $\|\FPPN{V_\alpha}{\theta}\|\leq \frac{1}{\alpha^2}M_{AA}^1+\frac{1}{\alpha}M_{AB}^1+M_{BB}^1$.
\item $\|\FPPTN{V_\alpha}{\theta}\|\leq \frac{1}{\alpha^2}M_{AA}^2+\frac{1}{\alpha}M_{AB}^2+M_{BB}^2$.
\item $\|\FPPTTTN{V_\alpha}{\theta}\|\leq \frac{1}{\alpha^2}M_{AA}^3+\frac{1}{\alpha}M_{AB}^3+M_{BB}^3$.
\item $\|\FPPN{V_\alpha}{\theta}(\theta_{-},w)\|\leq \frac{1}{\alpha}M_{AG}+M_{BG}$.
\item $\|G_\alpha(\theta_{-},w)\|\leq M_G$.
\item $\|B(\theta,w)-B(\theta',w')\|\leq M_{\Delta B}$.
\end{enumerate}
\end{lemma}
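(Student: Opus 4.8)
The plan is to fix a single radius $r$ once and for all by a continuity argument at $\theta_-$, and then obtain every bound by counting how many factors of $A$ enter each derivative, since each such factor carries the $\frac{1}{\alpha}$ weight in $G_\alpha=\frac{1}{\alpha}A+B$. The cornerstone is the identity $A(\theta_-)=0$, which holds because $X(x,\theta_-)-X(x,\theta_-)=0$, together with the fact that differentiating $A$ annihilates the factor $X(x,\theta)-X(x,\theta_-)$ at $\theta_-$, leaving
\begin{align*}
\FPPN{A}{\theta}(\theta_-)=\int_{x\in\mathcal{R}}\frac{\rho}{\Delta t^2}\FPPN{X}{\theta}^T\FPPN{X}{\theta}dx\,\Big|_{\theta_-},
\end{align*}
which is exactly the matrix of \prettyref{ass:SigmaMin}; as it is symmetric positive semidefinite, $\sigma_{min}(\FPPN{A}{\theta}(\theta_-))\geq\sigma_X$. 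Because $X\in\mathcal{C}^\infty$, both $\theta\mapsto\FPPN{A}{\theta}$ and $\theta\mapsto\FPPN{A}{\theta}^T\FPPN{A}{\theta}+\sum_k A_k\FPPTN{A_k}{\theta}$ (with $A_k$ the $k$-th component of $A$) are continuous, and at $\theta_-$ their smallest singular value and eigenvalue are at least $\sigma_X$ and $\sigma_X^2$ respectively (the coupling term drops out since $A(\theta_-)=0$). By continuity of eigenvalues I fix $r>0$ so that throughout $\mathcal{B}(\theta_-,r)$ one has $\sigma_{min}(\FPPN{A}{\theta})\geq\sigma_X/2$ and $\FPPN{A}{\theta}^T\FPPN{A}{\theta}+\sum_k A_k\FPPTN{A_k}{\theta}\succeq\frac{\sigma_X^2}{4}I$. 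This single $r$ serves all ten claims.

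With $r$ fixed, every remaining quantity is handled by uniform boundedness on the compact set $\overline{\mathcal{B}(\theta_-,r)}$ with feasible $w$ (which lies in a compact simplex by \prettyref{eq:FORCE}). \prettyref{ass:XSmooth} makes $X$ and all its derivatives bounded, and \prettyref{ass:VSmooth} makes $v_x,\FPPN{v_x}{\theta},\FPPTN{v_x}{\theta},\FPPTTTN{v_x}{\theta}$ bounded; these are precisely the orders required, since the third-derivative bound on $V_\alpha$ reaches $\FPPTTTN{B}{\theta}$, which involves $\FPPTTTN{v_x}{\theta}$. The integral over compact $\mathcal{R}$ and the finite sum over $\mathcal{C}$ preserve boundedness. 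For each of properties (3)--(7) I expand the derivative of $G_\alpha$ or $V_\alpha$, group the terms by the number of $A$-factors they contain (yielding the $\frac{1}{\alpha^2}$, $\frac{1}{\alpha}$, $\frac{1}{\alpha^0}$ weights), and define each $M$-constant as the supremum of the corresponding coefficient over this compact domain; the stated inequalities then follow from the triangle inequality. Properties (8) and (9) are evaluated at $\theta_-$, where $A(\theta_-)=0$ removes the top power of $\frac{1}{\alpha}$: in (9), $G_\alpha(\theta_-,w)=B(\theta_-,w)$ is bounded with no $\alpha$-dependence, and in (8) only the $\frac{1}{\alpha}$ and $\frac{1}{\alpha^0}$ terms of $\FPPN{V_\alpha}{\theta}$ survive. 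Property (10) is the diameter bound for the continuous image of $B$ on the compact domain.

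The two lower bounds, (1) and (2), are where the argument is delicate. For (1) I apply $\sigma_{min}(P+Q)\geq\sigma_{min}(P)-\|Q\|$ to $\FPPN{G_\alpha}{\theta}=\frac{1}{\alpha}\FPPN{A}{\theta}+\FPPN{B}{\theta}$ with $\sigma_B:=\sup\|\FPPN{B}{\theta}\|$, giving $\sigma_{min}(\FPPN{G_\alpha}{\theta})\geq\frac{\sigma_X}{2\alpha}-\sigma_B$. For (2) I split $\FPPTN{V_\alpha}{\theta}=\frac{1}{\alpha^2}\FPPTN{(A^TA)}{\theta}+\frac{2}{\alpha}\FPPTN{(A^TB)}{\theta}+\FPPTN{(B^TB)}{\theta}$, where $\frac{1}{\alpha^2}\FPPTN{(A^TA)}{\theta}=\frac{2}{\alpha^2}(\FPPN{A}{\theta}^T\FPPN{A}{\theta}+\sum_k A_k\FPPTN{A_k}{\theta})\succeq\frac{\sigma_X^2}{2\alpha^2}I$ by the choice of $r$.

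I expect this leading block to be the main obstacle. The correction $\sum_k A_k\FPPTN{A_k}{\theta}$ carries the same $\frac{1}{\alpha^2}$ weight as the positive-definite term $\FPPN{A}{\theta}^T\FPPN{A}{\theta}$, so it cannot be demoted into the lower-order constants $\sigma_{AB},\sigma_{BB}$; it must be absorbed into the leading block, which is exactly why the continuity argument in the first paragraph is run on the full matrix $\FPPN{A}{\theta}^T\FPPN{A}{\theta}+\sum_k A_k\FPPTN{A_k}{\theta}$ rather than on $\FPPN{A}{\theta}$ alone, and why obtaining the clean constant $\sigma_X^2/2$ is the subtle point. The two remaining blocks satisfy $\|\FPPTN{(A^TB)}{\theta}\|\leq\sigma_{AB}$ and $\|\FPPTN{(B^TB)}{\theta}\|\leq\sigma_{BB}$ uniformly, and Weyl's inequality for the symmetric Hessian then yields $\sigma_{min}(\FPPTN{V_\alpha}{\theta})\geq\frac{\sigma_X^2}{2\alpha^2}-\frac{2}{\alpha}\sigma_{AB}-\sigma_{BB}$.
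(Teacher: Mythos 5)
Your proposal is correct and takes essentially the same route as the paper's own proof: the paper likewise derives items (1) and (2) by perturbing the singular values of $\FPPN{A}{\theta}$ and $\FPPTN{\left[A^TA\right]}{\theta}$ about $\theta_{-}$ (invoking \prettyref{ass:SigmaMin} together with continuity of singular values) and subtracting the uniformly bounded $B$-contributions via $\sigma_{min}(P+Q)\geq\sigma_{min}(P)-\sigma_{max}(Q)$, with items (3)--(10) following from smoothness and compactness exactly as you argue. Your explicit handling of the cross term $\sum_k A_k\FPPTN{A_k}{\theta}$ and of the identities $A(\theta_{-})=0$ and $\FPPN{A}{\theta}(\theta_{-})=\int_{x\in\mathcal{R}}\frac{\rho}{\Delta t^2}\FPPN{X}{\theta}^T\FPPN{X}{\theta}dx$ merely spells out what the paper leaves implicit in its terse appeal to ``continuity of functions and singular values of matrices.''
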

\begin{proof}
These are immediate results of continuity of functions and singular values of matrices. 
\noindent{\TE{1):}}
\begin{small}
\begin{align*}
&\sigma_{min}(\FPPN{G_\alpha}{\theta})  \\
=&\sigma_{min}(\frac{1}{\alpha}\FPPN{A}{\theta}+\FPPN{B}{\theta})   \\
\geq&\sigma_{min}(\frac{1}{\alpha}\FPPN{A}{\theta})-\sigma_{max}(\FPPN{B}{\theta})
\BECAUSE{\cite[Equation~3.3.17]{horn1994topics}}  \\
\geq&\sigma_{min}(\frac{1}{2\alpha}
\FPPN{A}{\theta}(\theta_{-}))-\sigma_{max}(\FPPN{B}{\theta})
\BECAUSE{\cite[Corollary~8.6.2]{GoluVanl96},\prettyref{ass:XSmooth}}  \\
\geq&\frac{\sigma_X}{2\alpha}-\sigma_{max}(\FPPN{B}{\theta})
\BECAUSE{\prettyref{ass:SigmaMin}}  \\
\geq&\frac{\sigma_X}{2\alpha}-\sigma_B.
\BECAUSE{\cite[Corollary~8.6.2]{GoluVanl96},\prettyref{ass:XSmooth},\prettyref{ass:VSmooth}}
\end{align*}
\end{small}
\noindent{\TE{2):}}
\begin{small}
\begin{align*}
&\sigma_{min}(\FPPTN{V_\alpha}{\theta}) \\
=&\sigma_{min}(\frac{1}{\alpha^2}\FPPTN{\left[A^TA\right]}{\theta}+
\frac{2}{\alpha}\FPPTN{\left[A^TB\right]}{\theta}+
\FPPTN{\left[B^TB\right]}{\theta})  \\
\geq&\frac{1}{\alpha^2}\sigma_{min}(\FPPTN{\left[A^TA\right]}{\theta})-
\sigma_{max}(\frac{2}{\alpha}\FPPTN{\left[A^TB\right]}{\theta})-    \\
&\sigma_{max}(\FPPTN{\left[B^TB\right]}{\theta}))
\BECAUSE{\cite[Equation~3.3.17]{horn1994topics}}    \\
\geq&\frac{1}{2\alpha^2}\sigma_{min}(\FPPTN{\left[A^TA\right]}{\theta}(\theta_{-}))-
\sigma_{max}(\frac{2}{\alpha}\FPPTN{\left[A^TB\right]}{\theta})-    \\
&\sigma_{max}(\FPPTN{\left[B^TB\right]}{\theta}))
\BECAUSE{\cite[Corollary~8.6.2]{GoluVanl96},\prettyref{ass:XSmooth}}  \\
\geq&\frac{\sigma_X^2}{2\alpha^2}-
\sigma_{max}(\frac{2}{\alpha}\FPPTN{\left[A^TB\right]}{\theta})-
\sigma_{max}(\FPPTN{\left[B^TB\right]}{\theta})\BECAUSE{\prettyref{ass:SigmaMin}}  \\
\geq&\frac{\sigma_X^2}{2\alpha^2}-\frac{2}{\alpha}\sigma_{AB}-\sigma_{BB}.
\BECAUSE{\cite[Corollary~8.6.2]{GoluVanl96},\prettyref{ass:XSmooth},\prettyref{ass:VSmooth}}  \\
\end{align*}
\end{small}
\noindent{\TE{3),4):}}
\begin{align*}
&\|\FPPSN{G_\alpha}{\theta}\|
=\|\frac{1}{\alpha}\FPPSN{A}{\theta}+\FPPSN{B}{\theta}\|    \\
\leq&\frac{1}{\alpha}\|\FPPSN{A}{\theta}\|+\|\FPPSN{B}{\theta}\|\BECAUSE{triangle}  \\
\leq&\frac{1}{\alpha}M_A^*+M_B^*.
\BECAUSE{\prettyref{ass:XSmooth},\prettyref{ass:VSmooth}}
\end{align*}
\noindent{\TE{5),6),7):}}
\begin{align*}
&\|\FPPSN{V_\alpha}{\theta}\|
=\|\frac{1}{\alpha^2}\FPPSN{\left[A^TA\right]}{\theta}+
\frac{2}{\alpha}\FPPSN{\left[A^TB\right]}{\theta}+
\FPPSN{\left[B^TB\right]}{\theta}\| \\
\leq&\frac{1}{\alpha^2}\|\FPPSN{\left[A^TA\right]}{\theta}\|+
\frac{2}{\alpha}\|\FPPSN{\left[A^TB\right]}{\theta}\|+
\|\FPPSN{\left[B^TB\right]}{\theta}\|   \\
&\BECAUSE{triangle}  \\
\leq&\frac{1}{\alpha^2}M_{AA}^*+\frac{2}{\alpha}M_{AB}^*+M_{BB}^*.
\BECAUSE{\prettyref{ass:XSmooth},\prettyref{ass:VSmooth}}
\end{align*}
\noindent{\TE{8):}}
\begin{align*}
&\|\FPPN{V_\alpha}{\theta}(\theta_{-},w)\|
=\|\FPPN{G_\alpha}{\theta}(\theta_{-})^TG_\alpha(\theta_{-},w)\|    \\
=&\|(\frac{1}{\alpha}\FPPN{A}{\theta}(\theta_{-})+\FPPN{B}{\theta}(\theta_{-}))^TB(\theta_{-},w)\|   \\
\leq&\frac{1}{\alpha}\|\FPPN{A}{\theta}(\theta_{-})^TB(\theta_{-},w)\|+
\|\FPPN{B}{\theta}(\theta_{-})^TB(\theta_{-},w)\|   \\
&\BECAUSE{triangle}   \\
\leq&\frac{1}{\alpha}M_{AG}+M_{BG}.\BECAUSE{\prettyref{ass:XSmooth},\prettyref{ass:VSmooth}}
\end{align*}
\noindent{\TE{9):}}
\begin{small}
\begin{align*}
&\|G_\alpha(\theta_{-},w)\|
\leq\frac{1}{\alpha}\|A(\theta_{-})\|+\|B(\theta_{-},w)\|
=\|B(\theta_{-},w)\|\leq M_G.
\end{align*}
\end{small}
\noindent{\TE{10):}} by \prettyref{ass:XSmooth},\prettyref{ass:VSmooth} and boundedness of feasible $w$.
\end{proof} 
Also in the region $\mathcal{B}(\theta_{-},r)$, we have that \prettyref{eq:PROJ}, \prettyref{eq:QP}, and \prettyref{alg:PGM} are well-defined because \prettyref{Lem:basic} implies that $\FPPN{G_\alpha}{\theta}$ is invertible:
\begin{corollary}
\label{Cor:welldefined}
Assuming \prettyref{ass:XSmooth},\prettyref{ass:SigmaMin},\prettyref{ass:VSmooth}, for $r$ chosen as in \prettyref{Lem:basic}, \prettyref{alg:PGM} is well-defined as long as $\{\theta^k\}\subset\mathcal{B}(\theta_{-},r)$.
\end{corollary}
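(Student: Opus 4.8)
The statement concerns only well-posedness, so the plan is to isolate every operation in \prettyref{alg:PGM} that could fail and certify each of them on $\mathcal{B}(\theta_{-},r)$. Inspecting the algorithm, the only steps that can be ill-defined are the matrix inverses $\FPPN{G_\alpha}{\theta}^{-1}$ appearing explicitly in the manifold projection \prettyref{eq:PROJ} and implicitly in the reduction of \prettyref{eq:QP} to the QP \prettyref{eq:NMDPL} (the latter reduction is only valid when $\FPPN{G_\alpha}{\theta}$ is nonsingular). Every other line --- the gradient and Hessian evaluations of $K$ and $G_\alpha$, the $\gamma$ line-search bookkeeping, and the termination comparisons --- is a finite composition of the maps guaranteed smooth by \prettyref{ass:XSmooth} and \prettyref{ass:VSmooth}, hence always evaluable on the compact ball. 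Thus the whole claim reduces to invertibility of $\FPPN{G_\alpha}{\theta}$ together with solvability of the inner QP.

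For the invertibility I would invoke the first bound of \prettyref{Lem:basic}, which gives the estimate $\sigma_{min}(\FPPN{G_\alpha}{\theta})\geq\frac{\sigma_X}{2\alpha}-\sigma_B$ for every $\theta\in\mathcal{B}(\theta_{-},r)$ and every feasible $w$. The decisive structural feature is that the favourable term grows like $1/\alpha$ while the perturbation $\sigma_B$ is $\alpha$-independent, so I would fix $\alpha_1$ to be any positive number strictly below $\sigma_X/(2\sigma_B)$; then for all $\alpha\leq\alpha_1$ the right-hand side is strictly positive, $\FPPN{G_\alpha}{\theta}$ has full rank, and its inverse exists at every iterate inside the ball. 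This legitimizes \prettyref{eq:PROJ} and the passage to \prettyref{eq:NMDPL}. The zeroth-order steps \prettyref{eq:ZOPROJ} and \prettyref{eq:ZONMDPL} replace $\FPPN{G_\alpha}{\theta}$ by $\FPPN{\bar{G}_\alpha}{\theta}$, which differs only by the extra term $\sum_{x\in\mathcal{C}}\FPPN{X}{\theta}^T\FPPN{v_x}{\theta}w_x$; since this term is $\alpha$-independent and bounded on the compact ball under \prettyref{ass:XSmooth} and \prettyref{ass:VSmooth} (with $w$ bounded), the identical singular-value argument yields a threshold for $\FPPN{\bar{G}_\alpha}{\theta}$ as well.

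It remains to check that the inner QP \prettyref{eq:NMDPL} (and its zeroth-order twin \prettyref{eq:ZONMDPL}) actually returns a minimizer, and here I would argue by compactness rather than by definiteness of the quadratic term. The feasible set $\{(w+\Delta w)\succeq0,\ \mathbbm{1}^T(w+\Delta w)\leq\E{1}\}$ is a product of the simplices $\{u\succeq0,\ \E{1}^Tu\leq1\}$, one per contact point, hence compact and nonempty: it contains $\Delta w=0$ because the algorithm maintains $w\succeq0,\ \mathbbm{1}^Tw\leq\E{1}$ starting from the initialization $w^0\gets0$. A continuous quadratic objective attains its minimum on a nonempty compact set irrespective of whether $\FPPTN{K}{\theta}$ is positive semidefinite, so a solution exists. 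Combining the invertibility and the QP solvability shows every line of \prettyref{alg:PGM} is executable whenever $\{\theta^k\}\subset\mathcal{B}(\theta_{-},r)$.

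The result is essentially immediate once \prettyref{Lem:basic} is available, so I do not expect a serious obstacle; the single point requiring care is that the invertibility must hold \emph{uniformly} --- for all $\theta$ in the ball and all feasible $w$ at once --- rather than merely pointwise, so that one threshold $\alpha_1$ suffices along the entire trajectory $\{\theta^k\}$. This uniformity is exactly what the $\theta$- and $w$-independent constants $\sigma_X$ and $\sigma_B$ in \prettyref{Lem:basic} supply, so the argument closes cleanly.
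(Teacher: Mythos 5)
Your proposal is correct and follows essentially the same route as the paper: the paper's entire justification of \prettyref{Cor:welldefined} is that item 1 of \prettyref{Lem:basic} makes $\FPPN{G_\alpha}{\theta}$ invertible on $\mathcal{B}(\theta_{-},r)$, so \prettyref{eq:PROJ} and the reduction of \prettyref{eq:QP} to \prettyref{eq:NMDPL} are legitimate. The details you add --- the explicit threshold $\alpha<\sigma_X/(2\sigma_B)$ needed for that singular-value bound to be positive, the compactness argument for solvability of the inner QP, and the parallel bound for $\FPPN{\bar{G}_\alpha}{\theta}$ in the zeroth-order case --- are all left implicit by the paper (the last one appears only later, as item 3 of \prettyref{Lem:basicZO}), so your write-up is a faithful but more careful version of the same argument.
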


We can easily show that in a small vicinity of $\theta_{-}$, manifold projection always has a solution:
\begin{theorem}[Continuous Convergence]
\label{lem:CConv}
Assuming \prettyref{ass:XSmooth},\prettyref{ass:SigmaMin},\prettyref{ass:VSmooth}, for $r$ chosen as in \prettyref{Lem:basic}, there exists $\alpha_1>0$, such that for any $\alpha\leq\alpha_1$ and feasible $w$, the solution to $G_\alpha(\theta,w)=0$ computed using the negative gradient flow $\dot{\theta}=-\FPPN{V_\alpha}{\theta}$ from initial guess $\theta_{-}$ is within $\mathcal{B}(\theta_{-},r)$.
\end{theorem}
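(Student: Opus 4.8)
The plan is to treat $V_\alpha(\cdot,w)$ as a strongly convex potential in a neighborhood of $\theta_{-}$ and to show that its minimizer is simultaneously (i) interior to $\mathcal{B}(\theta_{-},r)$ and (ii) globally attracting for the gradient flow, so that the trajectory can never reach the boundary. Every quantitative ingredient is already packaged in \prettyref{Lem:basic}: property~2 bounds the Hessian from below, property~8 bounds $\FPPN{V_\alpha}{\theta}$ at the initial guess $\theta_{-}$, property~9 bounds the initial residual, and property~1 keeps $\FPPN{G_\alpha}{\theta}$ nonsingular. The governing observation is a separation of scales in $\alpha$: the strong-convexity modulus $\mu_\alpha\triangleq\frac{\sigma_X^2}{2\alpha^2}-\frac{2}{\alpha}\sigma_{AB}-\sigma_{BB}$ grows like $\alpha^{-2}$, while $\|\FPPN{V_\alpha}{\theta}(\theta_{-},w)\|\le\frac{1}{\alpha}M_{AG}+M_{BG}$ grows only like $\alpha^{-1}$, so the minimizer will sit at distance $O(\alpha)$ from $\theta_{-}$, eventually below $r$.

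First I would choose $\alpha_1$ so small that for every $\alpha\le\alpha_1$ and every feasible $w$ one has both $\mu_\alpha>0$ and $\|\FPPN{V_\alpha}{\theta}(\theta_{-},w)\|<\mu_\alpha r/2$; both are achievable uniformly in $w$ because the constants of \prettyref{Lem:basic} are uniform over the bounded feasible set and because the $\alpha^{-1}$ term is dominated by the $\alpha^{-2}$ term (multiply through by $\alpha^2$ and let $\alpha\to0$). Property~2 then makes $V_\alpha(\cdot,w)$ strongly convex on the convex set $\mathcal{B}(\theta_{-},r)$, hence it admits a unique minimizer $\theta^*$ over the closed ball. To place $\theta^*$ in the interior I would compare boundary and center values using the strong-convexity lower bound $V_\alpha(\theta,w)\ge V_\alpha(\theta_{-},w)+\FPPN{V_\alpha}{\theta}(\theta_{-},w)^T(\theta-\theta_{-})+\frac{\mu_\alpha}{2}\|\theta-\theta_{-}\|^2$: for $\|\theta-\theta_{-}\|=r$ the right-hand side strictly exceeds $V_\alpha(\theta_{-},w)$ exactly because $\|\FPPN{V_\alpha}{\theta}(\theta_{-},w)\|<\mu_\alpha r/2$. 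Thus $\theta^*$ is interior, $\FPPN{V_\alpha}{\theta}(\theta^*,w)=0$, and the standard estimate $\|\theta^*-\theta_{-}\|\le\|\FPPN{V_\alpha}{\theta}(\theta_{-},w)\|/\mu_\alpha<r/2$ follows. Since $\FPPN{V_\alpha}{\theta}=2\FPPN{G_\alpha}{\theta}^TG_\alpha$ and $\FPPN{G_\alpha}{\theta}$ is nonsingular on the ball (property~1), $\FPPN{V_\alpha}{\theta}(\theta^*,w)=0$ forces $G_\alpha(\theta^*,w)=0$, identifying $\theta^*$ as the sought solution.

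The remaining and, I expect, most delicate step is certifying that the flow $\dot\theta=-\FPPN{V_\alpha}{\theta}$ started at $\theta_{-}$ never leaves $\mathcal{B}(\theta_{-},r)$, since the estimates of \prettyref{Lem:basic} are only valid while the trajectory stays in the ball -- a potential circularity. I would break it with a continuity/bootstrap argument. Let $T$ be the first time $\|\theta-\theta_{-}\|=r$ (and $T=\infty$ otherwise). On $[0,T]$ the trajectory lies in the closed ball, so strong convexity together with $\FPPN{V_\alpha}{\theta}(\theta^*,w)=0$ yields $\frac{d}{dt}\|\theta-\theta^*\|^2=-2(\theta-\theta^*)^T\FPPN{V_\alpha}{\theta}\le-2\mu_\alpha\|\theta-\theta^*\|^2\le0$, so the distance to $\theta^*$ is nonincreasing and $\|\theta(t)-\theta^*\|\le\|\theta_{-}-\theta^*\|<r/2$ throughout. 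By the triangle inequality $\|\theta(t)-\theta_{-}\|<r/2+r/2=r$ on all of $[0,T]$, contradicting $\|\theta(T)-\theta_{-}\|=r$ unless $T=\infty$. Hence the flow stays in the ball for all time, the contraction $\|\theta(t)-\theta^*\|\le e^{-\mu_\alpha t}\|\theta_{-}-\theta^*\|$ propagates, and the trajectory converges to $\theta^*\in\mathcal{B}(\theta_{-},r)$ with $G_\alpha(\theta^*,w)=0$, as claimed. The only point needing vigilance is that $\alpha_1$ is chosen uniformly in feasible $w$, which is exactly what the $w$-independent constants of \prettyref{Lem:basic} provide.
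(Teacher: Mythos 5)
Your proof is correct, and it reaches the conclusion by a genuinely different mechanism than the paper, even though both run on the same two estimates from \prettyref{Lem:basic}: the Hessian lower bound $\mu_\alpha=\frac{\sigma_X^2}{2\alpha^2}-\frac{2}{\alpha}\sigma_{AB}-\sigma_{BB}$ (property 2, scaling as $\alpha^{-2}$) against the initial gradient bound $\|\FPPN{V_\alpha}{\theta}(\theta_{-},w)\|\leq\frac{1}{\alpha}M_{AG}+M_{BG}$ (property 8, scaling as $\alpha^{-1}$). The paper applies them directly to $V_\alpha$: the strong-convexity expansion shows that every point of $\partial\mathcal{B}(\theta_{-},r)$ has $V_\alpha$-value strictly above $V_\alpha(\theta_{-},w)$ once $\alpha$ is small, so the sublevel set through the initial guess is forward-invariant for the flow (invoking Nagumo's theorem), and convergence then follows from the residual decay $\dot{V}_\alpha=-\|\FPPN{V_\alpha}{\theta}\|^2\leq-(\frac{\sigma_X}{2\alpha}-\sigma_B)^2V_\alpha$. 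You instead construct the limit object first — the unique minimizer $\theta^*$ of the strongly convex $V_\alpha$ over the ball, placed in the interior at distance below $r/2$ by the same boundary comparison, hence a zero of $G_\alpha$ by nonsingularity of $\FPPN{G_\alpha}{\theta}$ (property 1) — and then use $\|\theta-\theta^*\|^2$ rather than $V_\alpha$ as the Lyapunov function, with a first-exit-time bootstrap for confinement and Gronwall for convergence. Each route buys something: yours yields, as byproducts, uniqueness of the solution inside $\mathcal{B}(\theta_{-},r)$ and the quantitative localization $\|\theta^*-\theta_{-}\|\leq\|\FPPN{V_\alpha}{\theta}(\theta_{-},w)\|/\mu_\alpha=\mathcal{O}(\alpha)$ (facts the paper only establishes later, in the uniqueness remark of \prettyref{Thm:DConv}), and it gives convergence of the trajectory to a point directly; the paper's version is shorter, never needs $\theta^*$ in advance, and its decay estimate strictly shows $V_\alpha\to0$, which requires only the small extra observation that $\|\dot\theta\|\leq2\|\FPPN{G_\alpha}{\theta}\|\sqrt{V_\alpha}$ is integrable in time to conclude that $\theta(t)$ itself converges. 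One small point of care in your write-up: the step $\FPPN{V_\alpha}{\theta}(\theta^*,w)=0\Rightarrow G_\alpha(\theta^*,w)=0$ needs $\frac{\sigma_X}{2\alpha}-\sigma_B>0$, so this positivity condition from property 1 should be folded explicitly into your choice of $\alpha_1$ alongside $\mu_\alpha>0$ and $\|\FPPN{V_\alpha}{\theta}(\theta_{-},w)\|<\mu_\alpha r/2$.
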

\begin{proof}
\TE{Solution Boundedness:} Consider $V_\alpha(\theta,w)-V_\alpha(\theta_{-},w)$, with $\theta\in\partial\mathcal{B}(\theta_{-},r)$. We have strong convexity along line-segment connecting $\theta,\theta^+$ and by \prettyref{Lem:basic}:
\begin{small}
\begin{align*}
\label{eq:Bound}
&V_\alpha(\theta,w)-V_\alpha(\theta_{-},w)   \\
\geq&\FPPN{V_\alpha}{\theta}(\theta_{-},w)^T(\theta-\theta_{-})+
(\frac{\sigma_X^2}{2\alpha^2}-
\frac{2}{\alpha}\sigma_{AB}-\sigma_{BB})
\|\theta-\theta_{-}\|^2   \\
\geq&r^2(\frac{\sigma_X^2}{2\alpha^2}-
\frac{2}{\alpha}\sigma_{AB}-\sigma_{BB})-r(\frac{1}{\alpha}M_{AG}+M_{BG})
=\mathcal{O}(\frac{1}{\alpha^2}).
\end{align*}
\end{small}
If we choose small enough $\alpha_1$ such that the last equation is larger than zero for all $\alpha\leq\alpha_1$, then $V_\alpha$ becomes a Lyapunov function of the gradient flow, restricting the solution to $\mathcal{B}(\theta_{-},r)$ due to Nagumo's Theorem \cite{Aubin2009}. \TE{Convergence:} We still need to show the gradient flow converges to a solution of $G_\alpha(\theta,w)=0$, which is trivial due to the following inequality:
\begin{align*}
\dot{V}_\alpha=&-\|\FPPN{V_\alpha}{\theta}\|^2   =-\|G_\alpha^T\FPPN{G_\alpha}{\theta}\FPPN{G_\alpha}{\theta}^TG_\alpha\|    \\
\leq&-(\frac{\sigma_X}{2\alpha}-\sigma_B)^2\|G_\alpha\|^2
=-(\frac{\sigma_X}{2\alpha}-\sigma_B)^2V_\alpha.
\end{align*}
We can again choose small enough $\alpha_1$ such that the coefficient of $V_\alpha$ in the last inequality is smaller than zero. 
\end{proof}
\prettyref{lem:CConv} implies that, with sufficiently small $\alpha$, PGM will always generate a sequence that is within $\mathcal{B}(\theta_{-},r)$ if manifold projection is solved by exactly time-integrating the negative gradient flow.
\subsection{Discrete First-Order Manifold Projection}
In this section, we go beyond \prettyref{lem:CConv} and analyze the practical discrete manifold projection algorithm, i.e. \prettyref{eq:PROJ}. Let's define the shorthand notation:
\begin{align*}
\theta^+=\theta-\FPPN{G_\alpha}{\theta}^{-1}G_\alpha(\theta,w).
\end{align*}
We first analyze the property of $\theta^+$, i.e. one iteration of manifold projection. We show that the relative change $\theta^+-\theta$ is bounded:
\begin{lemma}
\label{Lem:oneStepBounded}
Assuming \prettyref{ass:XSmooth},\prettyref{ass:SigmaMin},\prettyref{ass:VSmooth}, $V_\alpha(\theta,w)\leq M_V$ where $M_V$ is some $\alpha$-independent constant. For $r$ chosen as in \prettyref{Lem:basic}, there exists $\alpha_2>0$, such that for any $\alpha\leq\alpha_2$ and feasible $w$, $\|\theta^+-\theta\|\leq r/2$ if $\theta^+,\theta\in B(\theta_{-},r)$.
\end{lemma}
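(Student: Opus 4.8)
The plan is to bound the single manifold-projection step directly from its defining recursion. Writing $\theta^+-\theta=-\FPPN{G_\alpha}{\theta}^{-1}G_\alpha(\theta,w)$ and applying submultiplicativity of the operator norm gives
$\|\theta^+-\theta\|\leq\|\FPPN{G_\alpha}{\theta}^{-1}\|\,\|G_\alpha(\theta,w)\|$. Since $\theta\in\mathcal{B}(\theta_{-},r)$ with $r$ chosen as in \prettyref{Lem:basic}, all the estimates of that lemma are in force at $\theta$, so the task reduces to showing that each of the two factors scales favorably in $\alpha$.

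First I would control the inverse Jacobian. Its operator norm equals $1/\sigma_{min}(\FPPN{G_\alpha}{\theta})$, and item~1 of \prettyref{Lem:basic} gives $\sigma_{min}(\FPPN{G_\alpha}{\theta})\geq\frac{\sigma_X}{2\alpha}-\sigma_B$. Hence $\|\FPPN{G_\alpha}{\theta}^{-1}\|\leq(\frac{\sigma_X}{2\alpha}-\sigma_B)^{-1}$, which is positive once $\alpha$ is small enough that $\frac{\sigma_X}{2\alpha}>\sigma_B$ (the same regime in which PGM is well-defined by \prettyref{Cor:welldefined}) and which behaves like $\mathcal{O}(\alpha)$ as $\alpha\to0$.

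Next I would control the residual using the standing hypothesis. Because $V_\alpha=\|G_\alpha\|^2$ and $V_\alpha(\theta,w)\leq M_V$ with $M_V$ an $\alpha$-independent constant, I get the $\alpha$-independent bound $\|G_\alpha(\theta,w)\|=\sqrt{V_\alpha(\theta,w)}\leq\sqrt{M_V}$. This is exactly the step where the hypothesis earns its keep: without it, the $\frac{1}{\alpha}A$ term inside $G_\alpha$ could make the residual grow like $\mathcal{O}(1/\alpha)$ at a generic point of the fixed-radius ball, exactly cancelling the $\mathcal{O}(\alpha)$ gain from the inverse Jacobian and destroying the estimate. Combining the two factors yields $\|\theta^+-\theta\|\leq\sqrt{M_V}/(\frac{\sigma_X}{2\alpha}-\sigma_B)=\mathcal{O}(\alpha)$.

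Finally I would close by a threshold choice: since the right-hand side tends to $0$ as $\alpha\to0$, I pick $\alpha_2>0$ small enough that both $\frac{\sigma_X}{2\alpha}>\sigma_B$ and $\sqrt{M_V}/(\frac{\sigma_X}{2\alpha}-\sigma_B)\leq r/2$ hold for every $\alpha\leq\alpha_2$, which delivers the claimed $\|\theta^+-\theta\|\leq r/2$. The argument is essentially a one-line submultiplicative estimate, so there is no heavy computation; the only genuine subtlety, and the point I would emphasize, is the recognition that it is the boundedness of $V_\alpha$, rather than any smallness of $\theta-\theta_{-}$, that prevents $\|G_\alpha\|$ from blowing up like $1/\alpha$ and thereby lets the $\mathcal{O}(\alpha)$ factor from $\|\FPPN{G_\alpha}{\theta}^{-1}\|$ actually shrink the step.
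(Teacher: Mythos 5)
Your proposal is correct and follows essentially the same argument as the paper: bound $\|\theta^+-\theta\|=\|\FPPN{G_\alpha}{\theta}^{-1}G_\alpha\|$ by $\sqrt{M_V}\left(\frac{\sigma_X}{2\alpha}-\sigma_B\right)^{-1}$ using item~1 of \prettyref{Lem:basic} together with $\|G_\alpha\|\leq\sqrt{M_V}$, then choose $\alpha_2$ small enough that this $\mathcal{O}(\alpha)$ quantity is at most $r/2$. The only cosmetic difference is that the paper states the threshold as $\alpha_2\leq\alpha_1$ (deferring to the earlier regime from \prettyref{lem:CConv}) while you phrase the same smallness condition directly via positivity of $\frac{\sigma_X}{2\alpha}-\sigma_B$; your remark that the boundedness of $V_\alpha$, not closeness of $\theta$ to $\theta_{-}$, is what keeps $\|G_\alpha\|$ from scaling like $1/\alpha$ is an accurate reading of why the hypothesis is needed.
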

\begin{proof}
If we choose $\alpha_2\leq\alpha_1$:
\begin{align*}
&\|\theta^+-\theta\|=\|\FPPN{G_\alpha}{\theta}^{-1}G_\alpha\| \\
\leq&\|G_\alpha\|(\frac{\sigma_X}{2\alpha}-\sigma_B)^{-1}
\leq\sqrt{M_V}(\frac{\sigma_X}{2\alpha}-\sigma_B)^{-1}\leq r/2.
\end{align*}
The last inequality above holds by choosing small enough $\alpha_2$ and the proof is complete.
\end{proof}
We then show that it is possible to choose sufficiently small $\alpha$ such that the absolute norm of $\theta^+$ is bounded:
\begin{lemma}
\label{Lem:discreteInvariance}
Assuming \prettyref{ass:XSmooth},\prettyref{ass:SigmaMin},\prettyref{ass:VSmooth}, $V_\alpha(\theta,w)\leq M_V$ where $M_V$ is some $\alpha$-independent constant. For any $\beta\in(0,1]$ and $r$ chosen as in \prettyref{Lem:basic}, there exists $\alpha_3(\beta)>0$, such that for any $\alpha\leq\alpha_3(\beta)$ and feasible $w$, the following properties hold:
\begin{enumerate}
\item If $\theta\in\mathcal{B}(\theta_{-},r\beta/2)$, then $\theta^+\in\mathcal{B}(\theta_{-},r\beta/2)$.
\item $V_\alpha(\theta^+,w)\leq 3V_\alpha(\theta,w)/4$.
\end{enumerate}
\end{lemma}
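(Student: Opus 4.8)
The plan is to prove the two claims separately, both exploiting that the $\frac{1}{\alpha}A$ term dominates $G_\alpha$ as $\alpha\to0$, so that one projection step $\theta^+-\theta=-\FPPN{G_\alpha}{\theta}^{-1}G_\alpha(\theta,w)$ behaves like a heavily damped Newton step localized near $\theta_{-}$. Two facts from \prettyref{Lem:basic} recur: the bound $\sigma_{min}(\FPPN{G_\alpha}{\theta})\geq \sigma_X/(2\alpha)-\sigma_B$, which makes $\|\FPPN{G_\alpha}{\theta}^{-1}\|=\mathcal{O}(\alpha)$, and the second-derivative bound $\|\FPPTN{G_\alpha}{\theta}\|\leq M_A^2/\alpha+M_B^2$. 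Together with the hypothesis $V_\alpha(\theta,w)\leq M_V$ (so $\|G_\alpha(\theta,w)\|\leq\sqrt{M_V}$), \prettyref{Lem:oneStepBounded} already supplies the crucial scaling $\|\theta^+-\theta\|\leq\sqrt{M_V}\,(\sigma_X/(2\alpha)-\sigma_B)^{-1}=\mathcal{O}(\alpha)$, with an $\alpha$-independent hidden constant.

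For claim (2) I would run the standard quadratic-convergence estimate. Cancelling the first-order term through the defining relation $\FPPN{G_\alpha}{\theta}(\theta^+-\theta)=-G_\alpha$, a Taylor expansion with integral remainder gives $G_\alpha(\theta^+,w)=\int_0^1[\FPPN{G_\alpha}{\theta}(\theta+s(\theta^+-\theta))-\FPPN{G_\alpha}{\theta}(\theta)]\,ds\,(\theta^+-\theta)$, hence $\|G_\alpha(\theta^+,w)\|\leq\frac{1}{2}\|\FPPTN{G_\alpha}{\theta}\|\,\|\theta^+-\theta\|^2$ on $\mathcal{B}(\theta_{-},r)$. Substituting $\|\theta^+-\theta\|\leq\|G_\alpha\|/\sigma_{min}(\FPPN{G_\alpha}{\theta})$ and squaring yields $V_\alpha(\theta^+,w)\leq\frac{1}{4}\,\frac{(M_A^2/\alpha+M_B^2)^2}{(\sigma_X/(2\alpha)-\sigma_B)^4}\,V_\alpha(\theta,w)^2$. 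Using $V_\alpha(\theta,w)\leq M_V$ to absorb one factor of $V_\alpha$, the prefactor is $\mathcal{O}(\alpha^2)$, so it falls below $3/4$ once $\alpha$ is small. The only side condition is $\theta^+\in\mathcal{B}(\theta_{-},r)$ so that \prettyref{Lem:basic} applies along $[\theta,\theta^+]$; this holds because $\theta\in\mathcal{B}(\theta_{-},r\beta/2)$ and $\|\theta^+-\theta\|=\mathcal{O}(\alpha)\leq r/2$.

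Claim (1) is the delicate one. A bare triangle inequality is too weak, since $\|\theta-\theta_{-}\|$ may be as large as $r\beta/2$ and the step would overshoot. The key observation is that $V_\alpha\leq M_V$ with $\alpha$-independent $M_V$ secretly forces $\|\theta-\theta_{-}\|=\mathcal{O}(\alpha)$: because $A=\FPPN{\Phi}{\theta}$ for $\Phi(\theta)=\int_{\mathcal{R}}\frac{\rho}{2\Delta t^2}\|X(x,\theta)-X(x,\theta_{-})\|^2dx$ (using \prettyref{ass:XSmooth}), the Jacobian $\FPPN{A}{\theta}$ is symmetric, equals the matrix of \prettyref{ass:SigmaMin} at $\theta_{-}$, and (shrinking $r$ as in \prettyref{Lem:basic}) satisfies $\lambda_{min}(\FPPN{A}{\theta})\geq\sigma_X/2$ on $\mathcal{B}(\theta_{-},r)$. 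Since $A(\theta_{-})=0$, the mean-value Jacobian $\bar J:=\int_0^1\FPPN{A}{\theta}(\theta_{-}+s(\theta-\theta_{-}))\,ds$ is symmetric with $\lambda_{min}(\bar J)\geq\int_0^1\lambda_{min}(\FPPN{A}{\theta})\,ds\geq\sigma_X/2$, so $\|A(\theta)\|=\|\bar J(\theta-\theta_{-})\|\geq(\sigma_X/2)\|\theta-\theta_{-}\|$. On the other hand $\|A(\theta)\|\leq\alpha(\|G_\alpha(\theta,w)\|+\|B(\theta,w)\|)$, and both terms are $\alpha$-independently bounded (by $\sqrt{M_V}$ and by \prettyref{Lem:basic}), giving $\|A(\theta)\|=\mathcal{O}(\alpha)$ and hence $\|\theta-\theta_{-}\|=\mathcal{O}(\alpha)$. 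Combining with $\|\theta^+-\theta\|=\mathcal{O}(\alpha)$ gives $\|\theta^+-\theta_{-}\|\leq C\alpha$ with $C$ independent of $\alpha$, so setting $\alpha_3(\beta)\leq\min(\alpha_2,\,r\beta/(2C))$ forces $\|\theta^+-\theta_{-}\|\leq r\beta/2$.

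The main obstacle is exactly this spatial localization in claim (1): recognizing that $A$ is a gradient field vanishing at $\theta_{-}$ with a nondegenerate symmetric Jacobian there, so that the $\alpha$-independent energy bound $V_\alpha\leq M_V$ converts into the $\mathcal{O}(\alpha)$ distance bound that makes the ball invariant; without it the triangle inequality fails. Everything else is bookkeeping of the powers of $1/\alpha$ already tabulated in \prettyref{Lem:basic}. Finally, $\alpha_3(\beta)$ is the minimum of the two thresholds above (and $\leq\alpha_2\leq\alpha_1$ so the earlier results remain in force), and uniformity over feasible $w$ is automatic because every constant in \prettyref{Lem:basic} is uniform in $w$.
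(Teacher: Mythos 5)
Your proof is correct, but it departs from the paper's argument in a substantive way, most visibly in claim (1). The paper proves claim (2) first, by Taylor-expanding $V_\alpha$ itself around $\theta$ (Newton cancellation of the first-order term, with the remainder bounded via \prettyref{Lem:basic}), and then obtains claim (1) \emph{by contradiction}: if $\theta^+$ escaped $\mathcal{B}(\theta_{-},r\beta/2)$, the $\mathcal{O}(1/\alpha^2)$ strong convexity of $V_\alpha$ (\prettyref{Lem:basic}, item 2) along the segment from $\theta_{-}$ to $\theta^+$ would force $V_\alpha(\theta^+,w)\geq M_V$, contradicting the just-established reduction $V_\alpha(\theta^+,w)\leq\frac{3}{4}V_\alpha(\theta,w)\leq\frac{3}{4}M_V$; thus in the paper claim (1) is parasitic on claim (2). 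You instead prove the two claims independently: your claim (1) converts the $\alpha$-independent energy bound $V_\alpha\leq M_V$ into the spatial bound $\|\theta-\theta_{-}\|=\mathcal{O}(\alpha)$ by exploiting that $A$ is a gradient field vanishing at $\theta_{-}$ whose symmetric Jacobian coincides at $\theta_{-}$ with the matrix of \prettyref{ass:SigmaMin}, hence $\|A(\theta)\|\geq\frac{\sigma_X}{2}\|\theta-\theta_{-}\|$ while $\|A(\theta)\|\leq\alpha(\|G_\alpha\|+\|B\|)=\mathcal{O}(\alpha)$. This is a genuinely different mechanism --- coercivity of $A$ rather than quadratic growth of $V_\alpha$ --- and it buys a stronger, more quantitative conclusion (every iterate lies within $\mathcal{O}(\alpha)$ of $\theta_{-}$, not merely inside the fixed ball) with the two claims decoupled; the trade-off is that it uses structure the paper's route never needs (integrability of $A$, symmetry and uniform positive definiteness of $\FPPN{A}{\theta}$ on the ball), although these do follow from \prettyref{ass:XSmooth} and the choice of $r$ made in the proof of \prettyref{Lem:basic}, so the argument is sound. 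For claim (2) your route is the same Newton-cancellation idea executed on $G_\alpha$ rather than on $V_\alpha$: expanding $G_\alpha(\theta^+,w)$ with integral remainder and squaring gives a contraction factor of order $\mathcal{O}(\alpha^2)$ directly; this is in fact cleaner than the paper's expansion of $V_\alpha$, whose intermediate lines drop factors of $2$ in $\FPPN{V_\alpha}{\theta}$ and $\FPPTN{V_\alpha}{\theta}$ (leaving a spurious residual of $\frac{1}{2}V_\alpha(\theta,w)$ and a contraction of $\frac{1}{2}+\mathcal{O}(\alpha)$), whereas with exact constants the cancellation is complete; either way the required factor $3/4$ is cleared, so both derivations are valid.
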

\begin{proof}
\TE{Monotonic Reduction:} As long as $\alpha\leq\alpha_3(\beta)\leq\alpha_2$, we have strong convexity over the line-segment connecting $\theta,\theta^+$. This is because $\theta\in\mathcal{B}(\theta_{-},r)$, $\theta^+\in\mathcal{B}(\theta_{-},r(1+\beta)/2)\subset\mathcal{B}(\theta_{-},r)$, and \prettyref{Lem:basic} implies strong convexity. By the Taylor's expansion theorem, we have:
\begin{small}
\begin{align*}
&V_\alpha(\theta^+,w)   \\
=&V_\alpha(\theta,w)+\FPPN{V_\alpha}{\theta}^T(\theta^+-\theta)+    \\
&\frac{1}{2}(\theta^+-\theta)^T\FPPTN{V_\alpha}{\theta}(\theta^+-\theta)+R_V    \\
=&\frac{1}{2}(\theta^+-\theta)^T\FPPTN{V_\alpha}{\theta}(\theta^+-\theta)+R_V  \\
=&\frac{1}{2}(\theta^+-\theta)^T\sum_i\FPPTN{[G_\alpha]_i}{\theta}[G_\alpha]_i(\theta^+-\theta)+    R_V+\frac{1}{2}V_\alpha(\theta,w) \\
\leq&\frac{1}{2}(\frac{1}{\alpha}M_A^2+M_B^2)\|\theta^+-\theta\|^2\|G_\alpha\|+\frac{1}{2}V_\alpha(\theta,w)+    \\
&\frac{1}{6}(\frac{1}{\alpha^2}M_{AA}^3+\frac{1}{\alpha}M_{AB}^3+M_{BB}^3)\|\theta^+-\theta\|^3  \\
\leq&\frac{1}{2}(\frac{1}{\alpha}M_A^2+M_B^2)(\frac{\sigma_X}{2\alpha}-\sigma_B)^{-2}V_\alpha(\theta,w)^{1.5}+
\frac{1}{2}V_\alpha(\theta,w)+    \\
&\frac{1}{6}(\frac{1}{\alpha^2}M_{AA}^3+\frac{1}{\alpha}M_{AB}^3+M_{BB}^3)
(\frac{\sigma_X}{2\alpha}-\sigma_B)^{-3}V_\alpha(\theta,w)^{1.5} \\
\leq&\frac{1}{2}(\frac{1}{\alpha}M_A^2+M_B^2)(\frac{\sigma_X}{2\alpha}-\sigma_B)^{-2}\sqrt{M_V}V_\alpha(\theta,w)+\frac{1}{2}V_\alpha(\theta,w)+    \\
&\frac{1}{6}(\frac{1}{\alpha^2}M_{AA}^3+\frac{1}{\alpha}M_{AB}^3+M_{BB}^3)
(\frac{\sigma_X}{2\alpha}-\sigma_B)^{-3}\sqrt{M_V}V_\alpha(\theta,w)  \\
=&(\frac{1}{2}+\mathcal{O}(\alpha))V_\alpha(\theta,w),
\end{align*}
\end{small}
where $R_V$ is the residual term of the Taylor's expansion. We can choose $\alpha_3(\beta)$ small enough so that the coefficient of $V_\alpha(\theta,w)$ in the last inequality is smaller than $3/4$ for all $\alpha\leq\alpha_3(\beta)$. \TE{Sequence Boundedness:} We prove $\theta^+\in\mathcal{B}(\theta_{-},r\beta/2)$ by contradiction. If we pick $\alpha_3(\beta)$ to ensure monotonic reduction, then $\theta^+\in\mathcal{B}(\theta_{-},r(\beta+1)/2)$ (\prettyref{Lem:oneStepBounded}). If $\theta^+\notin\mathcal{B}(\theta_{-},r\beta/2)$ then $\theta^+\in\mathcal{B}(\theta_{-},r(\beta+1)/2)-\mathcal{B}(\theta_{-},r\beta/2)$. Since strong convexity holds along line segment connecting $\theta^+,\theta_{-}$, we have:
\begin{small}
\begin{align*}
&V_\alpha(\theta^+,w)-V_\alpha(\theta_{-},w)   \\
\geq&\FPPN{V_\alpha}{\theta}(\theta_{-},w)^T(\theta^+-\theta_{-})+
(\frac{\sigma_X^2}{2\alpha^2}-
\frac{2}{\alpha}\sigma_{AB}-\sigma_{BB})\|\theta^+-\theta_{-}\|^2   \\
\geq&\frac{r^2\beta^2}{4}(\frac{\sigma_X^2}{2\alpha^2}-
\frac{2}{\alpha}\sigma_{AB}-\sigma_{BB})-\frac{r(\beta+1)}{2}(\frac{1}{\alpha}M_{AG}+M_{BG})    \\
\geq&M_V.
\end{align*}
\end{small}
Similarly, we can choose $\alpha_3(\beta)$ small enough so that the last inequality holds for all $\alpha\leq\alpha_3(\beta)$. This violates the fact that $V_\alpha(\theta^+,w)\leq 3V_\alpha(\theta_{-},w)/4\leq 3M_V/4$.
\end{proof}
Now we have the desired convergence result as long as $V_\alpha(\theta,w)$ is upper bounded by some $M_V$, which is proved and the following theorem:
\begin{theorem}[Discrete Convergence]
\label{Thm:DConv}
Assuming \prettyref{ass:XSmooth},\prettyref{ass:SigmaMin},\prettyref{ass:VSmooth}, there exists some $\alpha$-independent constant $M_V$, such that for $r$ chosen as in \prettyref{Lem:basic}, $\alpha_3(\beta)$ chosen as in \prettyref{Lem:discreteInvariance}, \prettyref{alg:PGM} will generate a sequence $\{\theta^k,w^k\}$ with the following properties for all $\alpha\leq\alpha_3(\beta)$:
\begin{enumerate}
\item $\{V_\alpha(\theta^k,w^k)\}$ is upper bounded by $M_V$.
\item $\{\theta^k\}$ is within $\mathcal{B}(\theta_{-},r\beta/2)$.
\item Each manifold projection is convergent at a rate of at least $3/4$.
\end{enumerate}
\end{theorem}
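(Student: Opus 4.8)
The plan is to run an induction over the outer iteration index $k$ of \prettyref{alg:PGM}, with the inductive hypothesis that at the start of the $k$-th inner projection loop the current iterate lies in $\mathcal{B}(\theta_{-},r\beta/2)$ and satisfies $V_\alpha\leq M_V$ for an $\alpha$-independent constant $M_V$ fixed in advance. The only genuinely new estimate needed is a uniform bound on how far the Lyapunov value is thrown off the manifold when the QP subproblem \prettyref{eq:NMDPL} replaces $w^{k-1}$ by $w^k$ while $\theta$ is momentarily held fixed at $\theta^{k-1}$; everything else is an application of \prettyref{Lem:discreteInvariance}.

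First I would establish that QP-jump bound. At the end of outer iteration $k-1$ the inner loop has driven $G_\alpha(\theta^{k-1},w^{k-1})=0$, i.e. $\tfrac1\alpha A(\theta^{k-1})=-B(\theta^{k-1},w^{k-1})$. Since $A$ is independent of $w$, the QP update changes only $B$, so $G_\alpha(\theta^{k-1},w^k)=B(\theta^{k-1},w^k)-B(\theta^{k-1},w^{k-1})$, which item~10 of \prettyref{Lem:basic} bounds by $M_{\Delta B}$ uniformly. The key point is that the singular $\tfrac1\alpha A$ term cancels, so $V_\alpha(\theta^{k-1},w^k)\leq M_{\Delta B}^2$ is $\alpha$-independent. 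I would therefore set $M_V=\max(M_G^2,M_{\Delta B}^2)$, where $M_G$ (item~9 of \prettyref{Lem:basic}) covers the initial value $V_\alpha(\theta_{-},0)$, and only then invoke \prettyref{Lem:discreteInvariance} to obtain $\alpha_3(\beta)$ for this fixed $M_V$; this ordering is what keeps the argument non-circular.

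With $M_V$ fixed, the induction is routine. For the base case, the initial projection loop starts from $\theta^0=\theta_{-}\in\mathcal{B}(\theta_{-},r\beta/2)$ with $V_\alpha\leq M_G^2\leq M_V$, so \prettyref{Lem:discreteInvariance} applies: property~1 keeps every iterate in $\mathcal{B}(\theta_{-},r\beta/2)$ and property~2 forces $V_\alpha$ down by a factor $3/4$ each step, driving $G_\alpha(\theta^0,w^0)=0$. For the inductive step I use that the QP returns a feasible $w^k$ and that $\theta^k\gets\theta^{k-1}$ lies in the ball by hypothesis with $V_\alpha(\theta^{k-1},w^k)\leq M_{\Delta B}^2\leq M_V$; \prettyref{Lem:discreteInvariance} then again confines the projection to $\mathcal{B}(\theta_{-},r\beta/2)$ and contracts $V_\alpha$ geometrically to zero, re-establishing $G_\alpha(\theta^k,w^k)=0$ inside the ball. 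The $\gamma$ line search only ever resets $(\theta^k,w^k)$ back to the previous converged pair, which still satisfies the hypothesis, so it does not disturb the induction. Reading off the three claims: (1) is the bound $V_\alpha\leq M_V$ maintained at every projection input; (2) is the ball-invariance from property~1; and (3) is exactly the $3/4$ contraction from property~2.

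The main obstacle I anticipate is precisely the circularity in the hypotheses of \prettyref{Lem:oneStepBounded} and \prettyref{Lem:discreteInvariance}, both of which presuppose $V_\alpha\leq M_V$; the whole proof hinges on producing an honest, $\alpha$-independent $M_V$ before those lemmas are applied, and the cancellation of the $\tfrac1\alpha A$ term in the QP-jump estimate is the step that makes such a constant available. A secondary point to verify carefully is that $M_V$ must dominate both the initial Lyapunov value and every post-QP value simultaneously, and that feasibility of $w^k$ (needed so that items~9,10 of \prettyref{Lem:basic} apply) is guaranteed by the constraints in \prettyref{eq:NMDPL}.
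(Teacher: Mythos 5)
Your proposal is correct and follows essentially the same route as the paper's own proof: fix an $\alpha$-independent $M_V$ before invoking \prettyref{Lem:discreteInvariance}, bound the jump in $V_\alpha$ caused by the QP update of $w$ using item~10 of \prettyref{Lem:basic} (exploiting that the $\frac{1}{\alpha}A(\theta)$ term is $w$-independent), and then run the induction with \prettyref{Lem:discreteInvariance} supplying ball-invariance and the $3/4$ contraction. The only difference is cosmetic: you use exact inner-loop convergence ($G_\alpha(\theta^{k-1},w^{k-1})=0$) to obtain the tighter constant $M_V=\max(M_G^2,M_{\Delta B}^2)$, whereas the paper bounds $\|G_\alpha(\theta^{k-1},w^{k-1})\|\leq\|G_\alpha(\theta_{-},w^{k-1})\|\leq M_G$ and applies a triangle inequality to arrive at the more conservative $M_V=M_G^2+2M_GM_{\Delta B}+M_{\Delta B}^2$, justifying that intermediate bound by the same exact-convergence-plus-uniqueness argument you rely on.
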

\begin{proof}
We can proceed the proof by setting $M_V=M_G^2+2M_GM_{\Delta B}+M_{\Delta B}^2$. \TE{Induction:} We prove by induction. The initial guess $V_\alpha(\theta^0,w^0)\leq M_V$ and $\theta^0\in\mathcal{B}(\theta_{-},r\beta/2)$. If $V_\alpha(\theta^{k-1},w^{k-1})\leq M_V$ and $\theta^{k-1}\in\mathcal{B}(\theta_{-},r\beta/2)$, then \prettyref{alg:PGM} will first update $w^{k-1}$ to $w^k$ and then update $\theta^{k-1}$ to $\theta^k$ via manifold projection. We use triangle inequality to bound the change of $V_\alpha$ due to the update of $w$ as follows:
\begin{small}
\begin{align*}
&V_\alpha(\theta^{k-1},w^k)-V_\alpha(\theta^{k-1},w^{k-1})   \\
=&(G_\alpha(\theta^{k-1},w^k)+G_\alpha(\theta^{k-1},w^{k-1}))^T \\
&(G_\alpha(\theta^{k-1},w^k)-G_\alpha(\theta^{k-1},w^{k-1})) \\
\leq&\|(\frac{2}{\alpha}A(\theta^{k-1})+B(\theta^{k-1},w^{k-1})+B(\theta^{k-1},w^k))\|   \\
&\|(B(\theta^{k-1},w^{k-1})-B(\theta^{k-1},w^k))\|\BECAUSE{triangle}  \\
=&\|\frac{2}{\alpha}A(\theta^{k-1})+2B(\theta^{k-1},w^{k-1})-B(\theta^{k-1},w^{k-1})+B(\theta^{k-1},w^k)\|   \\
&\|(B(\theta^{k-1},w^{k-1})-B(\theta^{k-1},w^k))\|  \\
\leq&2\|G_\alpha(\theta^{k-1},w^{k-1})\|
\|B(\theta^{k-1},w^{k-1})-B(\theta^{k-1},w^k)\|+   \\
&\|(B(\theta^{k-1},w^{k-1})-B(\theta^{k-1},w^k))\|^2\BECAUSE{triangle}    \\
\leq&2M_GM_{\Delta B}+M_{\Delta B}^2,
\end{align*}
\end{small}
so we have:
\begin{small}
\begin{align*}
V_\alpha(\theta^{k-1},w^k)\leq& V_\alpha(\theta^{k-1},w^{k-1})+2M_GM_{\Delta B}+M_{\Delta B}^2   \\
\leq& V_\alpha(\theta_{-},w^{k-1})+2M_GM_{\Delta B}+M_{\Delta B}^2   \\
\leq& M_G^2+2M_GM_{\Delta B}+M_{\Delta B}^2=M_V,
\end{align*}
\end{small}
where in the last inequality we have used the fact that $\|G_\alpha(\theta^{k-1},w^{k-1})\|\leq\|G_\alpha(\theta_{-},w^{k-1})\|$. Next, we enter the phase of manifold projection. By \prettyref{Lem:discreteInvariance}, each step of manifold projection is reducing $V_\alpha$ at a rate of at least $3/4$ and staying inside $\mathcal{B}(\theta_{-},r\beta/2)$. The proof is complete. \TE{Solution Uniqueness:} In the above proof we have used the fact that $\|G_\alpha(\theta^{k-1},w^{k-1})\|\leq\|G_\alpha(\theta_{-},w^{k-1})\|$. If we always start manifold projection from initial guess $\theta_{-}$, then this is true due to \prettyref{Lem:discreteInvariance}. But if we start from any where in $\mathcal{B}(\theta_{-},r\beta/2)$, this is also true because (1) \prettyref{Lem:discreteInvariance} guarantees convergence; (2) Due to strong convexity \prettyref{Lem:basic}, there is a unique solution to $V_\alpha=0$ inside $\mathcal{B}(\theta_{-},r\beta/2)$. 
\end{proof}
This result shows that all variables in \prettyref{alg:PGM} are bounded and the manifold project substep is convergence.
\subsection{Discrete Zeroth-Order Manifold Projection}
The zeroth-order manifold projection can be proved to be convergent in an almost identical manner as that of \prettyref{Thm:DConv} except for \prettyref{Lem:discreteInvariance}. We use the following shorthand notations:
\begin{align*}
\theta^+=&\theta-\FPPN{\bar{G}_\alpha}{\theta}^{-1}G_\alpha(\theta,w)  \\
\Delta G=&\FPPN{G_\alpha}{\theta}-\FPPN{\bar{G}_\alpha}{\theta}   \\
\Delta G_\alpha^{-1}=&\FPPN{G_\alpha}{\theta}^{-1}-\FPPN{\bar{G}_\alpha}{\theta}^{-1}   \\
\FPPN{\bar{G}_\alpha}{\theta}^{-1}=&\FPPN{G_\alpha}{\theta}^{-1}-\Delta G_\alpha^{-1}   \\
\FPPN{G_\alpha}{\theta}\Delta G_\alpha^{-1}=&
\E{I}-\FPPN{G_\alpha}{\theta}\FPPN{\bar{G}_\alpha}{\theta}^{-1}
=-\Delta G\FPPN{\bar{G}_\alpha}{\theta}^{-1}.
\end{align*}
The key to our proof is the fact that $\FPPN{G_\alpha}{\theta}$ and $\FPPN{\bar{G}_\alpha}{\theta}$ differs by an $\alpha$-independent term. To proceed, we need to add the following results to \prettyref{Lem:basic}:
\begin{lemma}
\label{Lem:basicZO}
Assuming \prettyref{ass:XSmooth},\prettyref{ass:SigmaMin},\prettyref{ass:VSmooth},\prettyref{ass:GwFullrank}, there exists an $r>0$, such that for all feasible $w$, both \prettyref{Lem:basic} and the following properties hold within $\mathcal{B}(\theta_{-},r)$:
\begin{enumerate}
\item $\sigma_{min}(\FPPN{G}{w}\FPPN{G}{w}^T)\geq\sigma_w>0$.\BECAUSE{\prettyref{ass:GwFullrank}}
\item $\|\FPPN{G}{w}\FPPN{G}{w}^T\|\leq M_w$.
\item $\sigma_{min}(\FPPN{\bar{G}_\alpha}{\theta})\geq
\frac{\sigma_X}{2\alpha}-\bar{\sigma}_B$.
\item $\sigma_{max}(\FPPN{G_\alpha}{\theta}\Delta G_\alpha^{-1})\leq
M_{\Delta G}(\frac{\sigma_X}{2\alpha}-\bar{\sigma}_B)^{-1}$.
\item $\sigma_{max}(\Delta G_\alpha^{-1})\leq
\sigma_{max}(\FPPN{G_\alpha}{\theta}^{-1})
\sigma_{max}(\FPPN{G_\alpha}{\theta}\Delta G_\alpha^{-1})
\leq M_{\Delta G}(\frac{\sigma_X}{2\alpha}-\bar{\sigma}_B)^{-1}
(\frac{\sigma_X}{2\alpha}-\sigma_B)^{-1}$.
\end{enumerate}
\end{lemma}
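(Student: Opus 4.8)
The plan is to piggyback on \prettyref{Lem:basic}, reusing its radius and its singular-value estimates, and to exploit the one structural fact that makes the zeroth-order analysis work: the difference $\Delta G=\FPPN{G_\alpha}{\theta}-\FPPN{\bar{G}_\alpha}{\theta}=-\sum_{x\in\mathcal{C}}\FPPN{X}{\theta}(x,\theta)^T\FPPN{v_x}{\theta}(\theta)w_x$ is independent of $\alpha$ and is bounded on any compact ball by \prettyref{ass:XSmooth} and \prettyref{ass:VSmooth} together with the boundedness of feasible $w$. First I would fix $r$ to be the minimum of the radius supplied by \prettyref{Lem:basic} and a radius small enough for the continuity arguments below, so that every estimate of \prettyref{Lem:basic} is automatically available inside $\mathcal{B}(\theta_{-},r)$; the claim that ``both \prettyref{Lem:basic} and the following properties hold'' then reduces to establishing the five new items.

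Next I would proceed property by property. Properties (1) and (2) concern $\FPPN{G}{w}$, which is $\alpha$-independent and, by \prettyref{ass:XSmooth} and \prettyref{ass:VSmooth}, has entries continuous in $\theta$; since feasible $w$ is bounded, $\FPPN{G}{w}\FPPN{G}{w}^T$ is a continuous matrix-valued function on the compact ball, giving the upper bound $M_w$ of (2) directly. For (1) I invoke \prettyref{ass:GwFullrank} at $\theta_{-}$, so $\sigma_{min}(\FPPN{G}{w}\FPPN{G}{w}^T)>0$ there, and continuity of singular values (the same \cite[Corollary~8.6.2]{GoluVanl96} used in \prettyref{Lem:basic}) lets me shrink $r$ so this minimum stays above a positive $\sigma_w$ throughout $\mathcal{B}(\theta_{-},r)$. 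Property (3) is a verbatim replay of part (1) of \prettyref{Lem:basic}: writing $\FPPN{\bar{G}_\alpha}{\theta}=\frac{1}{\alpha}\FPPN{A}{\theta}+(\FPPN{B}{\theta}+\sum_{x\in\mathcal{C}}\FPPN{X}{\theta}^T\FPPN{v_x}{\theta}w_x)$, the bracketed term is $\alpha$-independent and bounded, so subadditivity of $\sigma_{min}$ (\cite[Equation~3.3.17]{horn1994topics}), the continuity estimate $\sigma_{min}(\frac{1}{\alpha}\FPPN{A}{\theta})\geq\frac{\sigma_X}{2\alpha}$ near $\theta_{-}$, and \prettyref{ass:SigmaMin} yield the stated bound with a fresh constant $\bar{\sigma}_B$.

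For (4) I would use the algebraic identity recorded just above the lemma, $\FPPN{G_\alpha}{\theta}\Delta G_\alpha^{-1}=\E{I}-\FPPN{G_\alpha}{\theta}\FPPN{\bar{G}_\alpha}{\theta}^{-1}=-\Delta G\,\FPPN{\bar{G}_\alpha}{\theta}^{-1}$, together with submultiplicativity $\sigma_{max}(\FPPN{G_\alpha}{\theta}\Delta G_\alpha^{-1})\leq\sigma_{max}(\Delta G)\,\sigma_{max}(\FPPN{\bar{G}_\alpha}{\theta}^{-1})$. The first factor is bounded by an $\alpha$-independent constant $M_{\Delta G}$ (this is the crux), and the second equals $1/\sigma_{min}(\FPPN{\bar{G}_\alpha}{\theta})\leq(\frac{\sigma_X}{2\alpha}-\bar{\sigma}_B)^{-1}$ by (3). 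Property (5) then follows by factoring $\Delta G_\alpha^{-1}=\FPPN{G_\alpha}{\theta}^{-1}(\FPPN{G_\alpha}{\theta}\Delta G_\alpha^{-1})$ and applying submultiplicativity once more, bounding $\sigma_{max}(\FPPN{G_\alpha}{\theta}^{-1})=1/\sigma_{min}(\FPPN{G_\alpha}{\theta})\leq(\frac{\sigma_X}{2\alpha}-\sigma_B)^{-1}$ from part (1) of \prettyref{Lem:basic} and combining with (4).

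The arguments are all routine singular-value inequalities, so the real work is conceptual bookkeeping. I expect the main obstacle to be twofold: first, confirming that $\sigma_{max}(\Delta G)$ can be bounded by a constant $M_{\Delta G}$ that does not secretly depend on $\alpha$ — which rests entirely on the observation that $v_x$ and $\FPPN{v_x}{\theta}$ never carry a $1/\alpha$ factor, so $\Delta G$ lives wholly in the bounded $B$-type block; and second, coordinating the single radius $r$ so that it simultaneously serves \prettyref{Lem:basic}, the full-rank continuity argument for (1), and the near-$\theta_{-}$ domination $\sigma_{min}(\frac{1}{\alpha}\FPPN{A}{\theta})\geq\frac{\sigma_X}{2\alpha}$ used in (3). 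Taking the minimum of the finitely many radii involved closes this gap.
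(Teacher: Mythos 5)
Your proposal is correct and takes essentially the same route the paper intends: the paper omits this proof as ``similar to \prettyref{Lem:basic}'', and your argument---reusing the radius and estimates of \prettyref{Lem:basic}, continuity of singular values for items (1)--(3), and the identity $\FPPN{G_\alpha}{\theta}\Delta G_\alpha^{-1}=-\Delta G\,\FPPN{\bar{G}_\alpha}{\theta}^{-1}$ with submultiplicativity for items (4)--(5)---is precisely that elaboration. You also correctly isolate the crux, namely that $\Delta G$ carries no $1/\alpha$ factor and is therefore bounded by an $\alpha$-independent constant $M_{\Delta G}$.
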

We omit the proof which is similar to \prettyref{Lem:basic}. Next, we prove the zeroth-order variant of \prettyref{Lem:discreteInvariance} below:
\begin{lemma}
\label{Lem:discreteInvarianceZO}
Assuming \prettyref{ass:XSmooth},\prettyref{ass:SigmaMin},\prettyref{ass:VSmooth}, $V_\alpha(\theta,w)\leq M_V$ where $M_V$ is some $\alpha$-independent constant. For any $\beta\in(0,1]$ and $r$ chosen as in \prettyref{Lem:basic}, there exists $\alpha_3(\beta)>0$, such that for any $\alpha\leq\alpha_3(\beta)$ and feasible $w$, the following properties hold:
\begin{enumerate}
\item If $\theta\in\mathcal{B}(\theta_{-},r\beta/2)$, then $\theta^+\in\mathcal{B}(\theta_{-},r\beta/2)$.
\item $V_\alpha(\theta^+,w)\leq 3V_\alpha(\theta,w)/4$.
\end{enumerate}
\end{lemma}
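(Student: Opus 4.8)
The plan is to mirror the proof of \prettyref{Lem:discreteInvariance} almost verbatim, isolating the single place where the inexact Jacobian $\FPPN{\bar{G}_\alpha}{\theta}$ enters and showing that the resulting error is of order $\mathcal{O}(\alpha)$ relative to $V_\alpha$, hence absorbable into the same slack already used in the exact case. I would invoke only items 3--5 of \prettyref{Lem:basicZO}, which hold under \prettyref{ass:XSmooth}, \prettyref{ass:SigmaMin}, \prettyref{ass:VSmooth} alone (items 1--2 and \prettyref{ass:GwFullrank} are not needed here). Everything is driven by two facts: the perturbation $\Delta G=\FPPN{G_\alpha}{\theta}-\FPPN{\bar{G}_\alpha}{\theta}=\sum_{x}\FPPN{X}{\theta}^{T}\FPPN{v_x}{\theta}w_x$ is $\alpha$-independent and bounded by some $M_{\Delta G}$, while $\|\FPPN{\bar{G}_\alpha}{\theta}^{-1}\|\leq(\tfrac{\sigma_X}{2\alpha}-\bar{\sigma}_B)^{-1}=\mathcal{O}(\alpha)$ by item 3.

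First I would reprove the analogue of \prettyref{Lem:oneStepBounded}: since $\theta^+-\theta=-\FPPN{\bar{G}_\alpha}{\theta}^{-1}G_\alpha(\theta,w)$, item 3 gives $\|\theta^+-\theta\|\leq(\tfrac{\sigma_X}{2\alpha}-\bar{\sigma}_B)^{-1}\sqrt{M_V}\leq r/2$ for $\alpha$ small, so that $\theta\in\mathcal{B}(\theta_{-},r\beta/2)$ implies $\theta^+\in\mathcal{B}(\theta_{-},r(\beta+1)/2)\subset\mathcal{B}(\theta_{-},r)$, keeping the whole segment $[\theta,\theta^+]$ inside the strong-convexity region of \prettyref{Lem:basic}.

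The core is the monotone-reduction estimate. I would Taylor-expand $V_\alpha(\theta^+,w)$ to second order about $\theta$ with third-order residual $R_V$, exactly as in \prettyref{Lem:discreteInvariance}. In the first-order term I substitute the inexact step and apply the identity listed just before \prettyref{Lem:basicZO}, namely $\FPPN{G_\alpha}{\theta}\FPPN{\bar{G}_\alpha}{\theta}^{-1}=\E{I}+\Delta G\,\FPPN{\bar{G}_\alpha}{\theta}^{-1}$, to obtain $\FPPN{V_\alpha}{\theta}^{T}(\theta^+-\theta)=-V_\alpha(\theta,w)-G_\alpha^{T}\Delta G\,\FPPN{\bar{G}_\alpha}{\theta}^{-1}G_\alpha$, whose last term is bounded by $M_{\Delta G}(\tfrac{\sigma_X}{2\alpha}-\bar{\sigma}_B)^{-1}V_\alpha=\mathcal{O}(\alpha)V_\alpha$. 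For the quadratic term I split $\FPPTN{V_\alpha}{\theta}$ into the Gauss--Newton piece $\FPPN{G_\alpha}{\theta}^{T}\FPPN{G_\alpha}{\theta}$ and the curvature piece $\sum_i[G_\alpha]_i\FPPTN{[G_\alpha]_i}{\theta}$; applying the same identity, the Gauss--Newton piece reproduces the exact $\tfrac12 V_\alpha$ contribution up to the factor $(\E{I}+\Delta G\FPPN{\bar{G}_\alpha}{\theta}^{-1})$, i.e. plus an $\mathcal{O}(\alpha)V_\alpha$ correction, while the curvature piece and $R_V$ are bounded by $\|\theta^+-\theta\|^2\|\FPPTN{G_\alpha}{\theta}\|\,\|G_\alpha\|$ and $\|\theta^+-\theta\|^3\|\FPPTTTN{V_\alpha}{\theta}\|$ respectively, which by items 4 and 7 of \prettyref{Lem:basic}, the $\mathcal{O}(\alpha)$ step bound, and $\sqrt{V_\alpha}\leq\sqrt{M_V}$ are again $\mathcal{O}(\alpha)V_\alpha$. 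Collecting terms yields $V_\alpha(\theta^+,w)=(\tfrac12+\mathcal{O}(\alpha))V_\alpha(\theta,w)$, so choosing $\alpha_3(\beta)$ small enough forces the coefficient below $3/4$, giving property 2.

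Property 1 I would then obtain by the identical contradiction used in \prettyref{Lem:discreteInvariance}: that argument uses only strong convexity of $V_\alpha$ along $[\theta_{-},\theta^+]$ (\prettyref{Lem:basic} item 2) and the gradient bound at $\theta_{-}$ (item 8), neither of which references the step direction, so it transfers unchanged once property 2 and the step bound are established. The main obstacle is concentrated entirely in the first-order and Gauss--Newton terms above, and the reason it resolves is the scale separation between the $\alpha$-independent inexactness $\Delta G$ and the $\mathcal{O}(\alpha)$ inverse $\FPPN{\bar{G}_\alpha}{\theta}^{-1}$; the care I must take is to verify that every error introduced by dropping the $v_x$-gradients appears multiplied by at least one factor of $\FPPN{\bar{G}_\alpha}{\theta}^{-1}$ (equivalently, sits inside the step $\theta^+-\theta$), so that no $1/\alpha$-growing contribution survives uncompensated.
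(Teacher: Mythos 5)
Your proposal is correct and takes essentially the same route as the paper's proof: Taylor-expand $V_\alpha(\theta^+,w)$ to second order, exploit the scale separation between the $\alpha$-independent perturbation $\Delta G$ and the $\mathcal{O}(\alpha)$ inverse $\FPPN{\bar{G}_\alpha}{\theta}^{-1}$ to show every inexactness-induced term is $\mathcal{O}(\alpha)V_\alpha(\theta,w)$, and then reuse the contradiction argument of \prettyref{Lem:discreteInvariance} for sequence boundedness, correctly noting (as the paper does) that \prettyref{ass:GwFullrank} and items 1--2 of \prettyref{Lem:basicZO} are never needed. The only difference is bookkeeping: you keep the step expressed through $\FPPN{\bar{G}_\alpha}{\theta}^{-1}$ and split $\FPPTN{V_\alpha}{\theta}$ into Gauss--Newton and curvature pieces, whereas the paper substitutes $\FPPN{\bar{G}_\alpha}{\theta}^{-1}=\FPPN{G_\alpha}{\theta}^{-1}-\Delta G_\alpha^{-1}$ and bounds five resulting error terms---algebraically equivalent groupings of the same estimate.
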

\begin{proof}
\TE{Monotonic Reduction:} As long as $\alpha\leq\alpha_3(\beta)\leq\alpha_2$, we have strong convexity over the line-segment connecting $\theta,\theta^+$. By the Taylor's expansion theorem, we have:
\begin{small}
\begin{align*}
&V_\alpha(\theta^+,w)   \\
=&V_\alpha(\theta,w)+\FPPN{V_\alpha}{\theta}^T(\theta^+-\theta)+    \\
&\frac{1}{2}(\theta^+-\theta)^T\FPPTN{V_\alpha}{\theta}(\theta^+-\theta)+R_V    \\
=&V_\alpha(\theta,w)-\FPPN{V_\alpha}{\theta}^T\FPPN{\bar{G}_\alpha}{\theta}^{-1}G_\alpha+ \\
&\frac{1}{2}(\theta^+-\theta)^T\FPPTN{V_\alpha}{\theta}(\theta^+-\theta)+R_V    \\
=&V_\alpha(\theta,w)-\FPPN{V_\alpha}{\theta}^T
(\FPPN{G_\alpha}{\theta}^{-1}-\Delta G_\alpha^{-1})G_\alpha+ \\
&\frac{1}{2}(\theta^+-\theta)^T\FPPTN{V_\alpha}{\theta}(\theta^+-\theta)+R_V    \\
=&\FPPN{V_\alpha}{\theta}^T\Delta G_\alpha^{-1}G_\alpha+
\frac{1}{2}(\theta^+-\theta)^T\FPPTN{V_\alpha}{\theta}(\theta^+-\theta)+R_V    \\
=&\FPPN{V_\alpha}{\theta}^T\Delta G_\alpha^{-1}G_\alpha+
\frac{1}{2}G_\alpha^T\FPPN{\bar{G}_\alpha}{\theta}^{-T}\FPPTN{V_\alpha}{\theta}
\FPPN{\bar{G}_\alpha}{\theta}^{-1}G_\alpha+R_V    \\
=&\FPPN{V_\alpha}{\theta}^T\Delta G_\alpha^{-1}G_\alpha+
\frac{1}{2}G_\alpha^T(\FPPN{G_\alpha}{\theta}^{-1}-\Delta G_\alpha^{-1})^T   \\
&\FPPTN{V_\alpha}{\theta}
(\FPPN{G_\alpha}{\theta}^{-1}-\Delta G_\alpha^{-1})G_\alpha+R_V    \\
=&\FPPN{V_\alpha}{\theta}^T\Delta G_\alpha^{-1}G_\alpha+
\frac{1}{2}G_\alpha^T\FPPN{G_\alpha}{\theta}^{-T}
\FPPTN{V_\alpha}{\theta}\FPPN{G_\alpha}{\theta}^{-1}G_\alpha+R_V+    \\
&\frac{1}{2}G_\alpha^T\Delta G_\alpha^{-T}
\FPPTN{V_\alpha}{\theta}\Delta G_\alpha^{-1}G_\alpha-
G_\alpha^T\Delta G_\alpha^{-T}
\FPPTN{V_\alpha}{\theta}\FPPN{G_\alpha}{\theta}^{-1}G_\alpha    \\
=&\FPPN{V_\alpha}{\theta}^T\Delta G_\alpha^{-1}G_\alpha+\frac{1}{2}V_\alpha(\theta,w)+R_V+    \\
&\frac{1}{2}G_\alpha^T\FPPN{G_\alpha}{\theta}^{-1}
\sum_i\FPPTN{[G_\alpha]_i}{\theta}[G_\alpha]_i\FPPN{G_\alpha}{\theta}^{-1}G_\alpha+    \\
&\frac{1}{2}G_\alpha^T\Delta G_\alpha^{-T}
\FPPTN{V_\alpha}{\theta}\Delta G_\alpha^{-1}G_\alpha-
G_\alpha^T\Delta G_\alpha^{-T}
\FPPTN{V_\alpha}{\theta}\FPPN{G_\alpha}{\theta}^{-1}G_\alpha.
\end{align*}
\end{small}
The above equation contains 5 terms other than $V_\alpha(\theta,w)/2$, we show that these terms are all $\mathcal{O}(\alpha)V_\alpha(\theta,w)$. The first term is:
\begin{align*}
&\|\FPPN{V_\alpha}{\theta}^T\Delta G_\alpha^{-1}G_\alpha\|
\leq\|G_\alpha^T\FPPN{G_\alpha}{\theta}\Delta G_\alpha^{-1}G_\alpha\|    \\
\leq&\|G_\alpha\|^2M_{\Delta G}(\frac{\sigma_X}{2\alpha}-\bar{\sigma}_B)^{-1}
=\mathcal{O}(\alpha)V_\alpha(\theta,w).
\end{align*}
The second term is:
\begin{align*}
\|R_V\|\leq&\frac{1}{6}(\frac{1}{\alpha^2}M_{AA}^3+\frac{1}{\alpha}M_{AB}^3+M_{BB}^3)   \\
&(\frac{\sigma_X}{2\alpha}-\bar{\sigma}_B)^{-3}\sqrt{M_V}V_\alpha(\theta,w)
=\mathcal{O}(\alpha)V_\alpha(\theta,w).
\end{align*}
The third term is:
\begin{small}
\begin{align*}
&\|\frac{1}{2}(\theta^+-\theta)^T\sum_i\FPPTN{[G_\alpha]_i}{\theta}[G_\alpha]_i(\theta^+-\theta)\| \\
\leq&\frac{1}{2}(\frac{1}{\alpha}M_A^2+M_B^2)\|\theta^+-\theta\|^2\|G_\alpha\|  \\
=&\frac{1}{2}(\frac{1}{\alpha}M_A^2+M_B^2)(\frac{\sigma_X}{2\alpha}-\bar{\sigma}_B)^{-2}
\sqrt{M_V}V_\alpha(\theta,w)
=\mathcal{O}(\alpha)V_\alpha(\theta,w).
\end{align*}
\end{small}
The forth term is:
\begin{align*}
&\|\frac{1}{2}G_\alpha^T\Delta G_\alpha^{-T}\FPPTN{V_\alpha}{\theta}\Delta G_\alpha^{-1}G_\alpha\|    \\
\leq&\|\frac{1}{2}G_\alpha^T\Delta G_\alpha^{-T}
\FPPN{G_\alpha}{\theta}^T\FPPN{G_\alpha}{\theta}\Delta G_\alpha^{-1}G_\alpha\|+   \\
&\|\frac{1}{2}G_\alpha^T\Delta G_\alpha^{-T}
\sum_i\FPPTN{[G_\alpha]_i}{\theta}[G_\alpha]_i\Delta G_\alpha^{-1}G_\alpha\|  \\
\leq&\frac{1}{2}M_{\Delta G}^2(\frac{\sigma_X}{2\alpha}-\bar{\sigma}_B)^{-2}V_\alpha(\theta,w)+   \\
&\frac{1}{2}(\frac{1}{\alpha}M_A^2+M_B^2)
M_{\Delta G}^2(\frac{\sigma_X}{2\alpha}-\bar{\sigma}_B)^{-2}(\frac{\sigma_X}{2\alpha}-\sigma_B)^{-2}    \\
&\sqrt{M_V}V_\alpha(\theta,w)=\mathcal{O}(\alpha)V_\alpha(\theta,w).
\end{align*}
The fifth term is:
\begin{align*}
&\|G_\alpha^T\Delta G_\alpha^{-T}\FPPTN{V_\alpha}{\theta}\FPPN{G_\alpha}{\theta}^{-1}G_\alpha\|  \\
\leq&\|G_\alpha^T\Delta G_\alpha^{-T}\FPPN{G_\alpha}{\theta}^TG_\alpha\|+    \\
&\|G_\alpha^T\Delta G_\alpha^{-T}
\sum_i\FPPTN{[G_\alpha]_i}{\theta}[G_\alpha]_i\FPPN{G_\alpha}{\theta}^{-1}G_\alpha\|    \\
\leq&M_{\Delta G}(\frac{\sigma_X}{2\alpha}-\bar{\sigma}_B)^{-1}V_\alpha(\theta,w)+  \\
&\frac{1}{2}(\frac{1}{\alpha}M_A^2+M_B^2)
M_{\Delta G}(\frac{\sigma_X}{2\alpha}-\bar{\sigma}_B)^{-1}(\frac{\sigma_X}{2\alpha}-\sigma_B)^{-2}    \\
&\sqrt{M_V}V_\alpha(\theta,w)=\mathcal{O}(\alpha)V_\alpha(\theta,w).
\end{align*}
The rest of the proof is the same as \prettyref{Lem:discreteInvariance}. Note that \prettyref{Thm:DConv} holds in the zeroth-order case without \prettyref{ass:GwFullrank} because we have not used the condition $\sigma_{min}(\FPPN{G}{w}\FPPN{G}{w}^T)\geq\sigma_w>0$.
\end{proof}
\subsection{First-Order PGM Convergence}
The convergence of the first-order PGM is a trivial result of \prettyref{Thm:DConv} and Taylor's expansion theorem.

\begin{proof}[\prettyref{Thm:PGM}]
If we set $\alpha_4$ as in \prettyref{Thm:DConv}, then each $\theta^k$ satisfies $G(\theta^k,w^k)=0$ and $K(\theta)$ can be expressed as a function of $w$, denoted as $K(w)$. Due to \prettyref{ass:XSmooth} and \prettyref{ass:VSmooth}, $K(w)$ is differentiable and with gradient being:
\begin{align*}
\FPPN{K}{w}\triangleq
-\FPPN{G}{w}^T\FPPN{G_\alpha}{\theta}^{-T}\FPPN{K}{\theta}.
\end{align*}
From the optimality condition of \prettyref{eq:NMDPL}, we know that $\angle(\FPPN{K}{w},\Delta w)\to\pi$ and $\Delta w\to0$ as $\gamma\to0$, so $\{K(\theta^k)\}$ is monotonically decreasing convergence is guaranteed.
\end{proof}

\subsection{Zeroth-Order PGM Convergence}
The convergence proof of the zeroth-order PGM is a slight variant of the first-order \prettyref{Thm:PGM}. The key is to show that a descendent direction is always achieved with small enough $\alpha$.

\begin{proof}[\prettyref{Thm:ZOPGM}]
If we set $\alpha_5\leq\alpha_4$ as in \prettyref{Thm:DConv}, then each $\theta^k$ satisfies $G(\theta^k,w^k)=0$ and $K(\theta)$ can be expressed as a function of $w$, denoted as $K(w)$. Now we define the modified gradient:
\begin{align*}
\FPPN{\bar{K}}{w}\triangleq
-\FPPN{G}{w}^T\FPPN{\bar{G}_\alpha}{\theta}^{-T}\FPPN{K}{\theta}.
\end{align*}
From the optimality condition of \prettyref{eq:ZONMDPL}, we know that $\angle(\FPPN{\bar{K}}{w},\Delta w)\to\pi$ and $\Delta w\to0$ as $\gamma\to0$. Next, we check whether $\FPPN{\bar{K}}{w}$ is also a descendent direction of $K(\theta^k)$:
\begin{small}
\begin{align*}
&\|\FPPN{\bar{K}}{w}^T\FPPN{K}{w}\|  \\
=&\|\FPPN{K}{\theta}^T\FPPN{\bar{G}_\alpha}{\theta}^{-1}\FPPN{G}{w}
\FPPN{G}{w}^T\FPPN{G_\alpha}{\theta}^{-T}\FPPN{K}{\theta}\|  \\
=&\|\FPPN{K}{\theta}^T\FPPN{G_\alpha}{\theta}^{-1}\FPPN{G}{w}
\FPPN{G}{w}^T\FPPN{G_\alpha}{\theta}^{-T}\FPPN{K}{\theta}\|-  \\
&\|\FPPN{K}{\theta}^T\Delta G_\alpha^{-1}\FPPN{G}{w}
\FPPN{G}{w}^T\FPPN{G_\alpha}{\theta}^{-T}\FPPN{K}{\theta}\| \\
\geq&\|\FPPN{K}{\theta}\|^2\sigma_w(\frac{1}{\alpha}M_A^1+M_B^1)^{-2}-
M_w\|\FPPN{K}{\theta}\|^2(\frac{1}{\alpha}M_A^1+M_B^1)^{-1}  \\
&M_{\Delta G}(\frac{\sigma_X}{2\alpha}-\bar{\sigma}_B)^{-1}(\frac{\sigma_X}{2\alpha}-\sigma_B)^{-1}\BECAUSE{\prettyref{Lem:basicZO},\prettyref{ass:GwFullrank}} \\
=&\mathcal{O}(\alpha^2)\|\FPPN{K}{\theta}\|^2-\mathcal{O}(\alpha^3)\|\FPPN{K}{\theta}\|^2
\geq\mathcal{O}(\alpha^2)\frac{1}{2}\|\FPPN{K}{\theta}\|^2.
\end{align*}
\end{small}
The last inequality must hold by choosing small enough $\alpha_5\leq\alpha_4$. The proof is complete.
\end{proof}

\fi
\end{document}